\newlist{inlinelist}{enumerate*}{1}
  \setlist*[inlinelist,1]{%
          label=(\roman*),
      }
\newtheorem{prop}{Prop}[section]
\newtheorem{dfn}{Def}[section]
\newtheorem{lem}{Lem}[section]
\newtheorem{cor}{Cor}[section]
\newtheorem{thm}{Thm}[section]
\crefname{prop}{Prop}{Props}
\crefname{fact}{Fact}{Facts}
\crefname{dfn}{Def}{Defs}
\crefname{ass}{Asm}{Asms}
\crefname{lem}{Lem}{Lems}
\crefname{cor}{Cor}{Cors}
\crefname{thm}{Thm}{Thms}
\crefname{appendix}{App}{Apps}
\crefname{section}{Sec}{Secs}
\crefname{table}{Tab}{Tabs}
\crefname{figure}{Fig}{Figs}
\crefname{equation}{}{}
\Crefname{prop}{Prop}{Props}
\Crefname{fact}{Fact}{Facts}
\Crefname{dfn}{Def}{Defs}
\Crefname{ass}{Asm}{Asms}
\Crefname{lem}{Lem}{Lems}
\Crefname{cor}{Cor}{Cors}
\Crefname{thm}{Thm}{Thms}
\Crefname{appendix}{App}{Apps}
\Crefname{section}{Sec}{Secs}
\Crefname{table}{Tab}{Tabs}
\Crefname{figure}{Fig}{Figs}
\Crefname{equation}{}{}
\newlist{asslist}{enumerate}{1}
\setlist[asslist]{label=(\roman{asslisti}),
                  ref=\theass.(\roman{asslisti})}
\crefname{listass}{Asm}{Asms}
\Crefname{listass}{Asm}{Asms}
\newlist{proplist}{enumerate}{1}
\setlist[proplist]{label=(\roman{proplisti}),
                  ref=\theprop.(\roman{proplisti})}
\crefname{listprop}{Prop}{Props}
\Crefname{listprop}{Prop}{Props}
\newlist{lemlist}{enumerate}{1}
\setlist[lemlist]{label=(\roman{lemlisti}),
                  ref=\thelem.(\roman{lemlisti})}
\crefname{listlem}{Lem}{Lems}
\Crefname{listlem}{Lem}{Lems}
\newcommand{\vnull}{\mathbf{0}}
\newcommand{\ve}{\mathbf{e}}
\newcommand{\vn}{\mathbf{n}}
\newcommand{\vm}{\mathbf{m}}
\newcommand{\vx}{\mathbf{x}}
\newcommand{\vy}{\mathbf{y}}
\newcommand{\vz}{\mathbf{z}}
\newcommand{\vtau}{\mathbf{\uptau}}
\newcommand{\vth}{\mathbf{\uptheta}}
\newcommand{\vpsi}{\mathbf{\uppsi}}
\newcommand{\mA}{\mathbf{A}}
\newcommand{\mB}{\mathbf{B}}
\newcommand{\mH}{\mathbf{H}}
\newcommand{\mI}{\mathbf{I}}
\newcommand{\mK}{\mathbf{K}}
\newcommand{\mZ}{\mathbf{Z}}
\newcommand{\cf}{\textit{c.f.}}
\newcommand{\ie}{\textit{i.e.}}
\newcommand{\eg}{\textit{e.g.}}
\let\set\setold
\renewcommand{\G}{\operatorname{G}}
\newcommand{\spacer}{\vspace*{0.2em}}
\newcommand{\sectioncompressed}[1]{\vspace*{-5pt}\section{#1}}
\title[
    The Gaussian Neural Process
]{
    The Gaussian Neural Process
}
\author{
    \Name{Wessel P.\ Bruinsma}
    \hfill
    \makebox[7cm][r]{\addr University of Cambridge and Invenia Labs}
    \makebox[3.5cm][r]{\texttt{wpb23@cam.ac.uk}}
    \\
    \Name{James Requeima}
    \hfill
    \makebox[7cm][r]{\addr University of Cambridge and Invenia Labs}
    \makebox[3.5cm][r]{\texttt{jrr41@cam.ac.uk}}
    \\
    \Name{Andrew Y.\ K.\ Foong}
    \hfill
    \makebox[7cm][r]{\addr University of Cambridge}
    \makebox[3.5cm][r]{\texttt{ykf21@cam.ac.uk}}
    \\
    \Name{Jonathan Gordon}
    \hfill
    \makebox[7cm][r]{\addr University of Cambridge}
    \makebox[3.5cm][r]{\texttt{jg801@cam.ac.uk}}
    \\
    \Name{Richard E.\ Turner}
    \hfill
    \makebox[7cm][r]{\addr University of Cambridge}
    \makebox[3.5cm][r]{\texttt{ret26@cam.ac.uk}}
}
\begin{document}

\maketitle

\begin{abstract}
Neural Processes \citep[NPs;][]{Garnelo:2018:Conditional_Neural_Processes,Garnelo:2018:Neural_Processes} are a rich class of models for meta-learning that map data sets directly to predictive stochastic processes.
We provide a rigorous analysis of the standard maximum-likelihood objective used to train conditional NPs.
Moreover, we propose a new member to the Neural Process family called the Gaussian Neural Process (\textsc{GNP}),
which models predictive correlations,
incorporates translation equivariance,
provides universal approximation guarantees, and
demonstrates encouraging performance.
\end{abstract}%
\begin{keywords}
    Meta-Learning,
    Neural Processes,
    Gaussian Processes
\end{keywords}

\section{Introduction}
Neural Processes \citep[NPs;][]{Garnelo:2018:Conditional_Neural_Processes,Garnelo:2018:Neural_Processes} use neural networks to directly parameterise and learn a map from observed data to posterior predictive distributions of a stochastic process.
In this work, we provide two contributions to the NP framework.

Our first contribution is a rigorous analysis of the standard maximum-likelihood (ML) objective used to train conditional NP models.
In particular, we relate the objective to the KL divergence between stochastic processes \citep{Matthews:2016:On_Sparse_Variational}, which we call a \emph{functional} KL.
For a ground truth $\P$ and approximating process $\Q$, learning procedures that minimise a functional KL have previously been investigated \citep{Sun:2018:Functional_Variational_Bayesian_Neural_Networks,Shi:2019:Scalable_Training_of_Inference_Networks,Ma:2018:Variational_Implicit_Processes},
but these works leave important questions about finiteness of the objective and existence/uniqueness of its minimiser unanswered.
In this work, we consider the objective $\KL(\P, \Q)$.
In a well-defined and rigorous setup, we demonstrate that the ML objective can be interpreted as a well-behaved relaxation of this functional objective.

Our second contribution addresses the inability of conditional NPs \citep[CNPs;][]{Garnelo:2018:Conditional_Neural_Processes} to model correlations and produce coherent samples.
Several authors propose to overcome this limitation by introducing a latent variable \citep{Garnelo:2018:Neural_Processes,Kim:2019:Attentive_Neural_Processes,Foong:2020:Meta-Learning_Stationary_Stochastic_Process_Prediction}.
Unfortunately, this renders the likelihood intractable, complicating learning and evaluation.
Building on the \textsc{ConvCNP} \citep{Gordon:2020:Convolutional_Conditional_Neural_Processes},
we introduce the \emph{Gaussian} NP (\textsc{GNP}), a novel member of the NP family that incorporates \emph{translation equivariance} and models the predictive distributions directly with Gaussian processes \citep[GPs;][]{Rasmussen:2006:Gaussian_Processes}.
The \textsc{GNP} allows for correlations in the predictive distribution whilst admitting a closed-form likelihood.
Moreover, like the \textsc{ConvCNP}, the \textsc{GNP} provides universal approximation guarantees, which we showcase by providing empirical evidence that the \textsc{GNP} can recover the \emph{prediction map} of a ground-truth Gaussian process in terms of likelihood and prior covariance function.

\sectioncompressed{A Practical Objective for Meta-Learning with Gaussian Processes}
\label{sec:objective-function-analysis}

A detailed description of the notation and terminology used in this section can be found in \cref{app:notation}.
The statements and proofs of all theorems are deferred to \cref{app:Gaussian_divergence,app:noisy_processes,app:objective}.

\textbf{\scshape Problem setup:}
Let $f$ be a ground-truth stochastic process.
In the meta-learning setup,
we aim to make multiple predictions for $f$ based on a collection of observed data sets $(D_i)_{i=1}^N$ drawn from $f$.
With access to $f$, these predictions are given by the posteriors over $f$ given $(D_i)_{i=1}^N$.
We can view prediction as a map from observed data sets $\D$ to posteriors over $f$.
This map is called the \emph{posterior prediction map} $\pi_f\colon \mathcal{D} \to \mathcal{P}$ \citep{Foong:2020:Meta-Learning_Stationary_Stochastic_Process_Prediction}.
Our goal is to learn a \emph{Gaussian} approximation $\tilde \pi \colon \D \to \mathcal{P}\ss{G}$ of $\pi_f$ (\cref{def:prediction_map}) that approximates the posteriors over $f$ with Gaussian processes.
Note that a Gaussian approximation of the posterior prediction map is distinctly different from learning a Gaussian approximation of the prior $f$:
the only requirement on $\tilde \pi$ is that $\tilde \pi(D)$ is a Gaussian process for all $D \in \D$;
in particular, these GPs are not constrained to be posteriors obtained from a fixed prior, which means that learning $\tilde \pi$ enjoys significantly more flexibility. % compared to learning a Gaussian approximation of the prior.
In fact, this setup is strictly more flexible than the originally proposed \textsc{CNP} \citep{Garnelo:2018:Conditional_Neural_Processes}, as the \textsc{CNP} can be viewed as a map $\D \to \mathcal{P}\ss{G}$ that does not model correlations.

\spacer
\textbf{\scshape Functional objective:}
We directly define our approximation $\tilde \pi$ of $\pi_f$:
for every $D \in \D$, approximate $\pi_f(D)$ with a Gaussian process $\mu$:
\begin{equation} \label{eq:approximation_of_process}
    \textstyle
    \tilde \pi(D) = \argmin_{\mu \in \mathcal{P}\ss{G}}\, \KL(\pi_f(D), \mu).
\end{equation}
Under reasonable regularity conditions and the assumption that
% the condition that
% (i) $\pi_f(D) \in \mathcal{P}_\lambda$,
% (ii) $\pi_f(D)$ is non-degenerate, and
% (iii)
there exists some non-degenerate Gaussian process $\mu\ss{G} \in \mathcal{P}\ss{G}$ such that $\KL(\pi_f(D), \mu\ss{G}) < \infty$, this minimiser exists and is unique (\cref{cor:existence_uniqueness}).
However, such a Gaussian process $\mu\ss{G}$ may not exist.
Moreover, even if the minimiser $\tilde \pi(D)$ exists and is unique, meaning that \eqref{eq:approximation_of_process} is finite at $\tilde \pi(D)$, there may not exist a ball of approximations around $\tilde \pi(D)$ for which the objective \eqref{eq:approximation_of_process} is finite;
in that case, the minimiser $\tilde \pi(D)$ cannot be approximated by minimising \eqref{eq:approximation_of_process}.
For example, suppose that $\pi_f(D) = \GP(0, k)$ where $k(t, t') = \exp(-\tfrac12(t-t')^2)$.
Set $\mu_{\sigma^2} = \GP(0, \sigma^2 k)$.
Then a quick computation shows that $\KL(\pi_f(D), \mu_{\sigma^2}) = \infty$ for all $\sigma^2 \neq 1$.
Hence, we cannot recover the true variance $\sigma^2 = 1$ by initialising $\mu_{\sigma^2}$ with some reasonable $\sigma^2 > 0$ and minimising $\KL(\pi_f(D), \mu_{\sigma^2})$, because the objective is infinite for all but the true value of $\sigma^2$.

\spacer
\textbf{\scshape Relaxation:}
% A workaround for the above issue is to instead approximate the \emph{finite-dimensional distributions} (f.d.d.s):
To work around the potential absence of a minimiser, we take a pragmatic stance and instead simply approximate the \emph{finite-dimensional distributions} (f.d.d.s): \sloppy
\begin{equation} \label{eq:approximation_of_fdds}
    \textstyle
    \tilde \pi^\vx(D) = \argmin_{\mu^\vx \in \mathcal{P}^{\abs{\vx}}\ss{G}}\, \KL(P_\vx \pi_f(D), \mu^\vx)
    \quad \text{for all finite index sets $\vx$,}
\end{equation}
where $P_\vx f = (f(x_1),\ldots, f(x_{\abs{\vx}}))$ for $f \in \Y^\X$ is the projection onto the index set $\vx$.
Under reasonable regularity conditions and the assumption that,
% the condition that
% (i) $\pi_f(D) \in \mathcal{P}_\lambda$,
% (ii) $\pi_f(D)$ is non-degenerate, and
% (iii),
for all finite index sets $\vx$, there exists an appropriate $\abs{\vx}$-dimensional Gaussian distribution $\mu^{\vx}\ss{G}$ such that $\KL(P_\vx \pi_f(D), \mu^{\vx}\ss{G}) < \infty$, these minimisers exist and are unique (\cref{prop:consistent_fdds}).
This condition is much milder than that for \eqref{eq:approximation_of_process}:
it is satisfied for any appropriate $\mu^{\vx}\ss{G}$ if the differential entropy of $P_\vx \pi_f(D)$ is finite.
Crucially, it turns out that \eqref{eq:approximation_of_fdds} gives rise to a consistent collection of f.d.d.s (\cref{prop:consistent_fdds}) and therefore uniquely defines an approximating process $\tilde \pi(D)$ satisfying $P_\vx \tilde \pi(D) = \tilde \pi^\vx(D)$ for all finite index sets $\vx$.
Moreover, if a solution to \eqref{eq:approximation_of_process} exists, then it will be equal to $\tilde \pi(D)$ (\cref{prop:consistent_fdds,cor:existence_uniqueness}).
Therefore, \eqref{eq:approximation_of_fdds} defines a relaxation of \eqref{eq:approximation_of_process} that can be used in many cases where a solution to \eqref{eq:approximation_of_process} does not exist.
The solution to \eqref{eq:approximation_of_process} and \eqref{eq:approximation_of_fdds}, if it exists, is given by the \emph{moment-matched Gaussian process}:
the Gaussian process obtained by taking the mean function and covariance function of $\pi_f(D)$; see also \citet{Ma:2018:Variational_Implicit_Processes}.

\spacer
\textbf{\scshape Approximable objective:}
The workaround \eqref{eq:approximation_of_fdds} solves the problem of \emph{existence}.
However, there is still the problem of \emph{approximability}:
\eqref{eq:approximation_of_process} cannot always be minimised to approximate the minimiser, if one exists.
We therefore define another objective, one that is always finite and consequently can always be minimised to approximate the solution to \eqref{eq:approximation_of_fdds}.
This objective is obtained by averaging \eqref{eq:approximation_of_fdds} over index sets of a fixed size:
\begin{equation} \label{eq:KL_approximation_of_expectation_of_fdds}
    \textstyle
    \tilde\pi(D)
    = \argmin_{\mu \in \overline{\mathcal{P}}\ss{G}}\, \E_{p(\vx)}[\KL(P_\vx \pi_f(D), P_\vx \mu)]
\end{equation}
where $p(\vx)$ is a Borel distribution with full support over all index sets of a fixed size $n \ge 2$.
(See \cref{def:noisy_process} for the definition of $\overline{\mathcal{P}}\ss{G}$.)
This objective is well defined (\cref{prop:measurability_KL}).
If (i) the mean and covariances functions of $\pi_f(D)$ and $\mu$ exist and are uniformly bounded by $M > 0$ and (ii) the processes $\pi_f(D)$ and $\mu$ are noisy (\cref{def:noisy_process}), then \cref{prop:bound} shows that the objective is finite and consequently suitable for optimisation;
\cref{prop:minimiser_measure} shows that the minimisers of \eqref{eq:approximation_of_fdds} and \eqref{eq:KL_approximation_of_expectation_of_fdds} are equal.
A useful feature of \eqref{eq:KL_approximation_of_expectation_of_fdds} is that it averages over index sets of a fixed size $n \ge 2$, unlike previous objectives, \eg~Prop 1 by \citet{Foong:2020:Meta-Learning_Stationary_Stochastic_Process_Prediction}, which requires an average over index sets of all sizes.

\spacer
\textbf{\scshape Practical objective:}
We further average \eqref{eq:KL_approximation_of_expectation_of_fdds} over an appropriate selection of data sets, which formulates a single objective that captures the total approximation error of $\tilde \pi$:
\begin{equation} \label{eq:objective}
    \textstyle
    \tilde \pi
    = \argmin_{\pi \in \overline{\M}\ss{G}\us{f.d.d.}} \, \E_{p(D)p(\vx)}[\KL(P_\vx \pi_f(D), P_\vx \pi(D))]
\end{equation}
where $p(D)$ is a Borel distribution with full support over a collection of data sets $\tilde \D \sub \D$ that is open and bounded (\cref{def:bounded_collection_of_data_sets}).
(See \cref{def:continuous_prediction_map} for the definition of $\overline{\M}\ss{G}\us{f.d.d.}$.)
This objective is also well defined (\cref{prop:measurability_E_KL}).
Under conditions similar to the conditions for \eqref{eq:KL_approximation_of_expectation_of_fdds}, \eqref{eq:objective} is finite
and the minimisers of \eqref{eq:approximation_of_fdds} and \eqref{eq:objective} are equal (\cref{prop:bound_over_data_sets,prop:minimiser_map}).
We therefore propose to learn $\tilde \pi$ though minimising \eqref{eq:objective}.
In practice, we optimise a Monte Carlo approximation of \eqref{eq:objective}.
Let $(D_i)_{i=1}^N \sub \D$ be a collection of data sets, all sampled from $f$ and split up $D_i = (D\us{(c)}_i, D\us{(t)}_i)$ into \emph{context sets} $D_i\us{(c)}$ and \emph{target sets} $D_i\us{(t)} = (\vx\us{(t)}_i, \vy\us{(t)}_i)$ \citep{Vinyals:2016:Matching_Networks_for_One_Shot,Ravi:2017:Optimization_as_a_Model_for}.
We then maximise
\begin{equation} \label{eq:objective_MC}
    \textstyle
    \tilde \pi
    \approx \argmax_{\pi \in \overline{\M}\ss{G}\us{f.d.d.}} \frac{1}{N} \sum_{i=1}^N \log \Normal(\vy\us{(t)}_i\cond \vm_i, \mK_i)
    \;\text{ with }\;
    P_{\vx\us{(t)}_i} \pi(D\us{(c)}_i) = \Normal(\vm_i, \mK_i),
\end{equation}
where we ignore irrelevant additive constants that do not depend on $\pi$.
This objective is exactly the standard maximum likelihood objective that is used to train conditional NP models \citep{Garnelo:2018:Conditional_Neural_Processes,Gordon:2020:Convolutional_Conditional_Neural_Processes}.
Analysis of the minimising procedure is difficult and depends on the details of the particular algorithm.
What we can say, however, is that a minimising sequence either diverges or converges to the right limit; and, under certain conditions, a minimising sequence always has a convergent subsequence (\cref{prop:minimising_sequence}).

\sectioncompressed{The Gaussian Neural Process}
\label{sec:GNP}

Having defined a suitable objective, to learn the approximation $\tilde \pi$ in practice, we proceed to generally parametrise $\tilde \pi$.
In this paper, we confine ourselves to \emph{stationary} ground truths $f$.

\spacer
\textbf{\scshape Translation equivariance:}
\citet{Foong:2020:Meta-Learning_Stationary_Stochastic_Process_Prediction} show
that stationarity of $f$ is equivalent to \emph{translation equivariance} (TE) of the posterior prediction map $\pi_f$ of $f$: for all $D \in \D$,
$
    \T_\tau \pi_f(D) = \pi_f(D + \tau)\text{ for all $\tau \in \X$}
$
where $\T_\tau f = f(\vardot - \tau)$ is the \emph{shifting operator}, $\T_\tau  \pi_f(D)$ is the measure $\pi_f(D)$ pushed through $\T_\tau$, and
$D + \tau = (\vx, \vy) + \tau = ((x_1 + \tau, \ldots, x_{\abs{\vx}} + \tau), \vy)$.
If $\pi_f$ is TE, then it is reasonable to restrict our approximation $\tilde \pi$ to also be TE.
Incorporating translation equivariance directly into the model has been shown to yield large improvements in generalisation capability, parameter efficiency, and predictive performance \citep{Gordon:2020:Convolutional_Conditional_Neural_Processes,Foong:2020:Meta-Learning_Stationary_Stochastic_Process_Prediction}.
Denote $\tilde \pi(D) = \GP(m(D), k(D))$ where $m \colon \D \to C(\X, \Y)$ is the TE \emph{mean map} of our approximation $\tilde \pi$ and $k \colon \D \to C\us{p.s.d.}(\X^2, \Y)$ is the TE \emph{kernel map};
see \cref{app:GNP} for more details.

\spacer
\textbf{\scshape Universal parametrisation of mean and kernel:}
For the mean map $m$, we use a \textsc{ConvDeepSet} architecture \citep[used in the \textsc{ConvCNP},][]{Gordon:2020:Convolutional_Conditional_Neural_Processes}, which can approximate any translation-equivariant map from data sets to continuous mean functions \citep[Thm 1 by][]{Gordon:2020:Convolutional_Conditional_Neural_Processes}.
Unfortunately, as we explain in \cref{subsec:kernel_representation}, the \textsc{ConvDeepSet} architecture is not directly applicable to the kernel map $k$.
In \cref{app:GNP}, we modify the \textsc{ConvDeepSet} architecture to make it suitable for the kernel.
This architecture has universal approximation guarantees similar to the \textsc{ConvDeepSet} (\cref{thm:kernel_representation}) and thus completes a general approximate parametrisation of $\tilde \pi$.
Intuitively, the architecture works as follows.
A covariance function is a function $\X \times \X \to \R$ and can therefore be interpreted as an \emph{image} (\eg, imagine that $\X = \set{1, \ldots, n}$).
Whereas the \textsc{ConvCNP} generates the mean by embedding the data in a 1D array and passing it through 1D convolutions, the architecture for the kernel similarly embeds the data in a 2D \emph{image} and passes it through 2D convolutions.
Let $D\us{(c)}$ be a context set and $\vx\us{(t)}$ inputs of a target set.
Then the covariance matrix $\mK\us{(t)}$ at the target points $\vx\us{(t)}$ is generated as follows:
\newcommand*\circled[1]{\tikz[baseline=(char.base)]{
    \node[shape=circle, draw, inner sep=0.5pt, white, fill=black] (char) {\small \bfseries #1};
}}%
\begin{equation} \label{eq:kernel_architecture}
    \circled{1}\; \mH = \operatorname{enc}(D\us{(c)}, \mZ),
    \quad
    \circled{2}\; \mK = \Pi\us{p.s.d.} \operatorname{CNN}(\mH), \quad
    \circled{3}\; \mK\us{(t)} = \operatorname{dec}(\mK, \vx\us{(t)}):
\end{equation}
\begin{enumerate}
    \item[\circled{1}]
        $\mH = \operatorname{enc}(D\us{(c)}, \mZ)$ maps the target set $D\us{(c)}$ to an \emph{encoding} $\mH \in \R^{M \times M \times 3}$ at a prespecified grid $\mZ \in \R^{M \times M}$ for some $M \in \N$ (\cf~the discretisation in the \textsc{ConvCNP} \citep{Gordon:2020:Convolutional_Conditional_Neural_Processes}),
        comprising a
        \emph{data channel} $\mH_{::1}$ (\cf~the data channel in the \textsc{ConvDeepSet}),
        \emph{density channel} $\mH_{::2}$ (\cf~the density channel in the \textsc{ConvDeepSet}), and
        \emph{source channel} $\mH_{::3} = \mI$ (not present in the \textsc{ConvDeepSet}; see \cref{subsec:kernel_architecture});
    \item[\circled{2}]
        $\mK = \Pi\us{p.s.d.} \operatorname{CNN}(\mH)$ passes the encoding $\mH$ through a CNN, producing an $M \times M$ matrix, and projects this matrix with $\Pi\us{p.s.d.}$ onto the nearest positive semi-definite (PSD) matrix with respect to the Frobenius norm \citep{Higham:1988:Nearest_PSD}; and
    \item[\circled{3}]
        $\mK\us{(t)} = \operatorname{dec}(\mK, \vx\us{(t)})$ finally interpolates the obtained PSD matrix $\mK$ to the desired covariances $\mK\us{(t)}$ for the target inputs $\vx\us{(t)}$.
\end{enumerate}
The architecture and the precise definitions of $\operatorname{enc}$ and $\operatorname{dec}$ are described in more detail in \cref{subsec:kernel_architecture}.
In our experiments, contrary to the description above and \cref{subsec:kernel_architecture}, we substitute \circled{2} with the simpler operation $\mK = \operatorname{CNN}(\mH) \operatorname{CNN}(\mH)^\T$, which also guarantees positive semi-definiteness.
It is unclear whether this substitution limits the expressivity of the resulting architecture or interferes with translation equivariance.
We leave an investigation of the implementation of \circled{2} with $\Pi\us{p.s.d.}$ for future work.

\spacer
\textbf{\scshape Source channel:}
A novel aspect of the architecture \eqref{eq:kernel_architecture} is the \emph{source channel} $\mH_{::3} = \mI$, which is simply the identity matrix and not present in the \textsc{ConvDeepSet} architecture.
Intuitively, the source channel allows the architecture to ``start out'' with a stationary prior with covariance $\mI$, corresponding to white noise, and then pass it through a CNN to modulate this prior to introduce correlations inferred from the context set.
\emph{C.f.}, any Gaussian process can be sampled from by first sampling white noise and then convolving with an appropriate filter;
the kernel architecture is a nonlinear generalisation of this procedure.

\spacer
\textbf{\scshape The Gaussian Neural Process:}
The \textsc{ConvDeepSet} for the mean mapping $m$ and the above architecture for the kernel mapping $k$ form a model that we call the Gaussian Neural Process (\textsc{GNP}).
The \textsc{GNP} depends on some parameters $\vth$, \eg~weights and biases for the CNNs.
To train these parameters $\vth$, we maximise \eqref{eq:objective_MC}.
See \cref{subsec:training_objective} for more details.

\begin{table}[t]
\small
\centering
\begin{tabular}{lccccc}
\toprule
 & \multicolumn{1}{c}{\textsc{EQ}} & \multicolumn{1}{c}{\textsc{Mat\'ern--$\frac52$}} & \multicolumn{1}{c}{\textsc{Weakly Per.}} & \multicolumn{1}{c}{\textsc{Sawtooth}} & \multicolumn{1}{c}{\textsc{Mixture}}\\
\midrule
\textit{GP (truth)} & $0.70 { \scriptstyle \,\pm\, 4.8\text{\textsc{e}}{\,\text{--}3} }$ & $0.31 { \scriptstyle \,\pm\, 4.8\text{\textsc{e}}{\,\text{--}3} }$ & $\text{--}0.32 { \scriptstyle \,\pm\, 4.3\text{\textsc{e}}{\,\text{--}3} }$ & n/a & n/a \\
\textsc{GNP} & $\mathbf{0.70} { \scriptstyle \,\pm\, 5.0\text{\textsc{e}}{\,\text{--}3} }$ & $\mathbf{0.30} { \scriptstyle \,\pm\, 5.0\text{\textsc{e}}{\,\text{--}3} }$ & $\mathbf{\text{\textbf{--}}0.47} { \scriptstyle \,\pm\, 5.0\text{\textsc{e}}{\,\text{--}3} }$ & $0.42 { \scriptstyle \,\pm\, 0.01 }$ & $\mathbf{0.10} { \scriptstyle \,\pm\, 0.02 }$ \\
\textsc{ConvNP} & $\text{--}0.46 { \scriptstyle \,\pm\, 0.01 }$ & $\text{--}0.67 { \scriptstyle \,\pm\, 9.0\text{\textsc{e}}{\,\text{--}3} }$ & $\text{--}1.02 { \scriptstyle \,\pm\, 6.0\text{\textsc{e}}{\,\text{--}3} }$ & $\mathbf{1.20} { \scriptstyle \,\pm\, 7.0\text{\textsc{e}}{\,\text{--}3} }$ & $\text{--}0.50 { \scriptstyle \,\pm\, 0.02 }$ \\
\textsc{ANP} & $\text{--}0.61 { \scriptstyle \,\pm\, 0.01 }$ & $\text{--}0.75 { \scriptstyle \,\pm\, 0.01 }$ & $\text{--}1.19 { \scriptstyle \,\pm\, 5.0\text{\textsc{e}}{\,\text{--}3} }$ & $0.34 { \scriptstyle \,\pm\, 7.0\text{\textsc{e}}{\,\text{--}3} }$ & $\text{--}0.69 { \scriptstyle \,\pm\, 0.02 }$ \\ \midrule
\textit{GP (truth, no corr.)} & $\text{--}0.81 { \scriptstyle \,\pm\, 0.01 }$ & $\text{--}0.93 { \scriptstyle \,\pm\, 0.01 }$ & $\text{--}1.18 { \scriptstyle \,\pm\, 7.0\text{\textsc{e}}{\,\text{--}3} }$ & n/a & n/a \\
\textsc{ConvCNP} & $\text{--}0.80 { \scriptstyle \,\pm\, 0.01 }$ & $\text{--}0.95 { \scriptstyle \,\pm\, 0.01 }$ & $\text{--}1.20 { \scriptstyle \,\pm\, 7.0\text{\textsc{e}}{\,\text{--}3} }$ & $0.55 { \scriptstyle \,\pm\, 0.02 }$ & $\text{--}0.93 { \scriptstyle \,\pm\, 0.02 }$ \\
\bottomrule
\end{tabular}
\caption{
    Likelihoods for the 1D regression experiments for interpolation inside the training range.
    Highlights best performance.
    Only the \textsc{ConvCNP} does not model correlations.
    The errors are 95\%-confidence intervals.
    See \cref{app:experiments}.
}
\label{tab:1D_results}
\end{table}

\begin{figure}[t]
    \small
    \centering
    \makebox[0.19\linewidth][c]{  \small\textsc{EQ}}%
    \hfill%
    \makebox[0.19\linewidth][c]{   \small\textsc{Matern--$\frac52$}}%
    \hfill%
    \makebox[0.19\linewidth][c]{   \small\textsc{Weakly Per.}}%
    \hfill%
    \makebox[0.19\linewidth][c]{   \small\textsc{Sawtooth}}%
    \hfill%
    \makebox[0.19\linewidth][c]{   \small\textsc{Mixture}} \\[2pt]
    \includegraphics[width=0.19\linewidth]{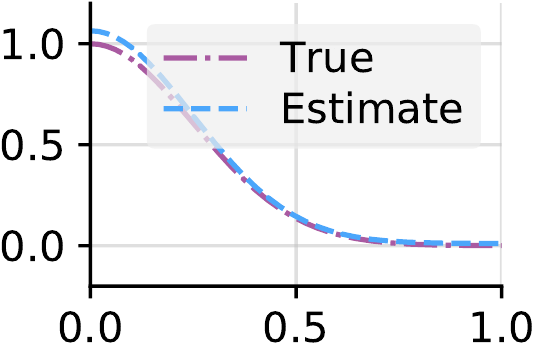}%
    \hfill%
    \includegraphics[width=0.19\linewidth]{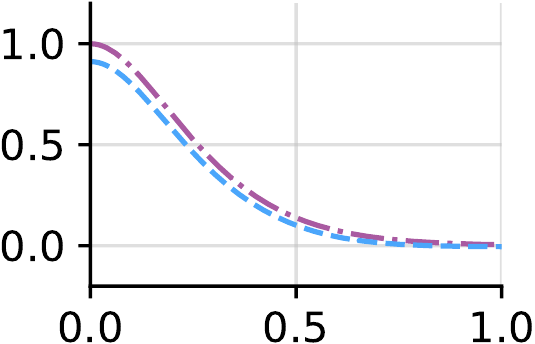}%
    \hfill%
    \includegraphics[width=0.19\linewidth]{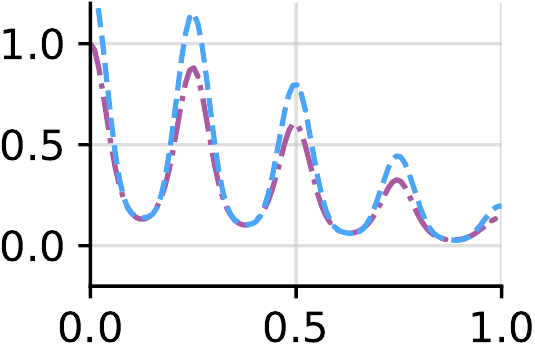}%
    \hfill%
    \includegraphics[width=0.19\linewidth]{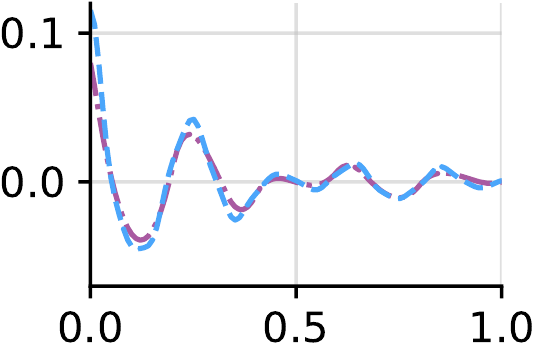}%
    \hfill%
    \includegraphics[width=0.19\linewidth]{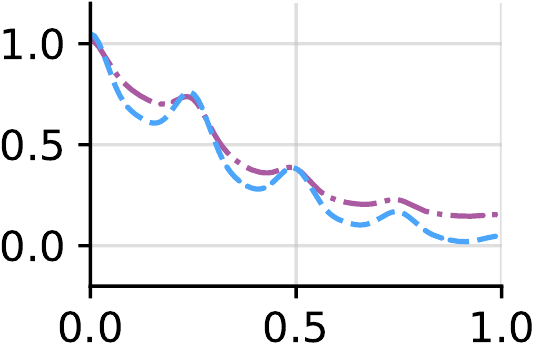}\\
    \vspace*{-5pt}
    \caption{
        Learned and true stationary prior covariance functions in the 1D experiments.
        The prior covariance function can be extracted from the model by taking $D\us{(c)} = \es$.
    }
    \label{fig:1D_kernels}
\end{figure}

\begin{figure}[t]
    \centering
    \begin{tikzpicture}[overlay, remember picture]
        \node [rotate=90, anchor=south] () at (0, 17pt) {\scriptsize\textsc{GNP}};
    \end{tikzpicture}%
    \includegraphics[width=0.496\linewidth]{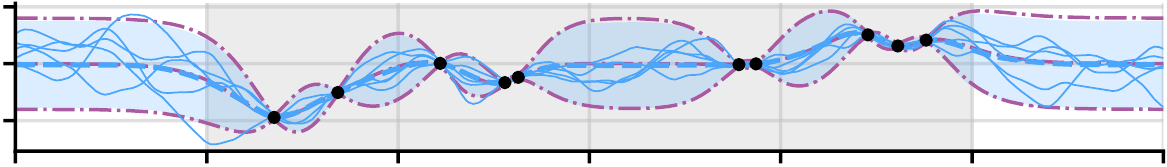}
    \hfill \includegraphics[width=0.496\linewidth]{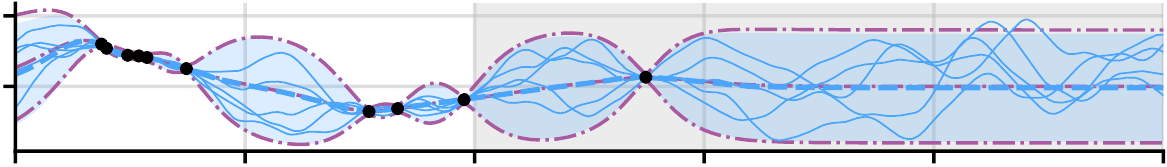}\\[3pt]
    \begin{tikzpicture}[overlay, remember picture]
        \node [rotate=90, anchor=south] () at (0, 17pt) {\scriptsize\textsc{ConvNP}};
    \end{tikzpicture}%
    \includegraphics[width=0.496\linewidth]{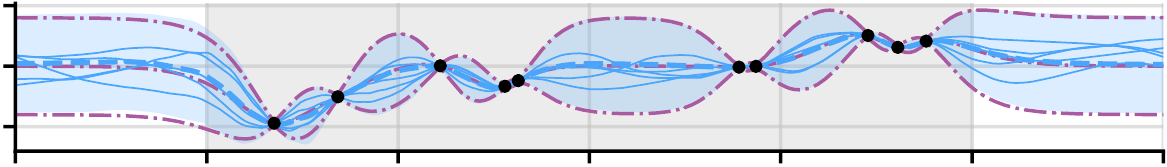}
    \hfill \includegraphics[width=0.496\linewidth]{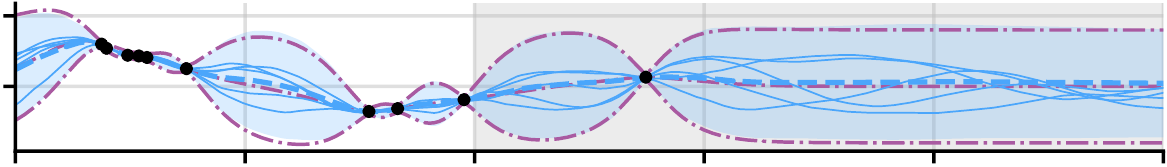}%
    \begin{tikzpicture}[overlay, remember picture]
        \node [rotate=90, anchor=north] () at (0, 17pt) {\footnotesize\textsc{Matern--$\frac52$}};
    \end{tikzpicture}\\[3pt]
    \begin{tikzpicture}[overlay, remember picture]
        \node [rotate=90, anchor=south] () at (0, 17pt) {\scriptsize\textsc{ANP}};
    \end{tikzpicture}%
    \includegraphics[width=0.496\linewidth]{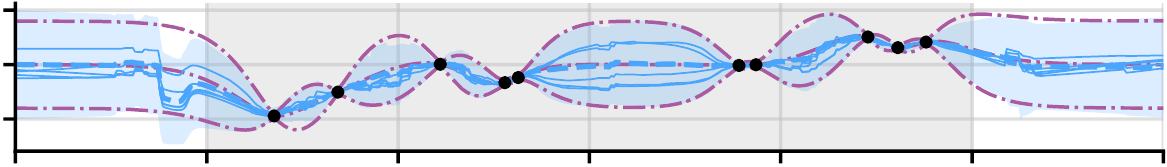}
    \hfill \includegraphics[width=0.496\linewidth]{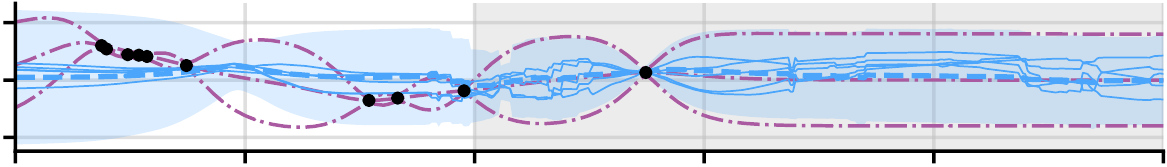}\\
    \begin{tikzpicture}[overlay, remember picture]
        \node [rotate=90, anchor=south] () at (0, 17pt) {\scriptsize\textsc{GNP}};
    \end{tikzpicture}%
    \includegraphics[width=0.496\linewidth]{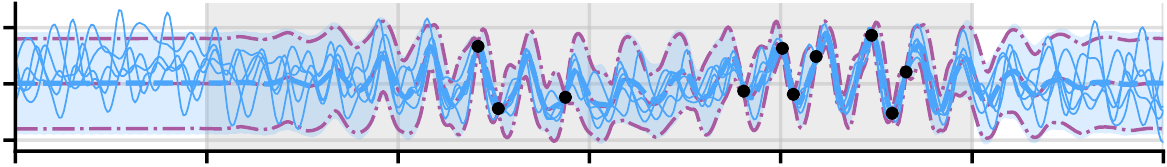}
    \hfill \includegraphics[width=0.496\linewidth]{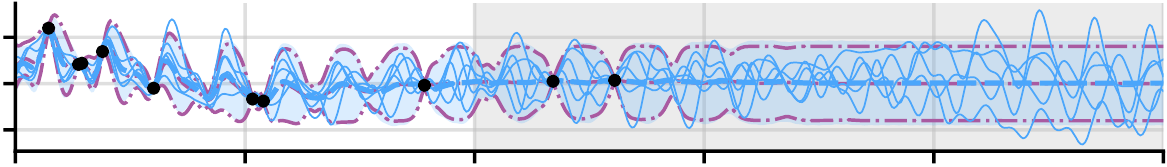}\\[3pt]
    \begin{tikzpicture}[overlay, remember picture]
        \node [rotate=90, anchor=south] () at (0, 17pt) {\scriptsize\textsc{ConvNP}};
    \end{tikzpicture}%
    \includegraphics[width=0.496\linewidth]{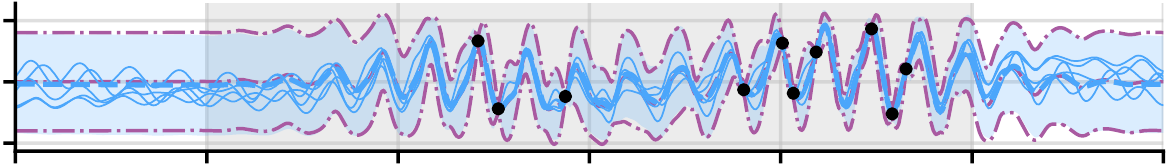}
    \hfill \includegraphics[width=0.496\linewidth]{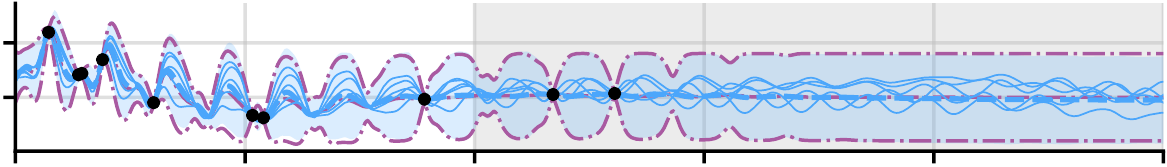}%
    \begin{tikzpicture}[overlay, remember picture]
        \node [rotate=90, anchor=north] () at (0, 17pt) {\footnotesize\textsc{Weakly Per.}};
    \end{tikzpicture}\\[3pt]
    \begin{tikzpicture}[overlay, remember picture]
        \node [rotate=90, anchor=south] () at (0, 17pt) {\scriptsize\textsc{ANP}};
    \end{tikzpicture}%
    \includegraphics[width=0.496\linewidth]{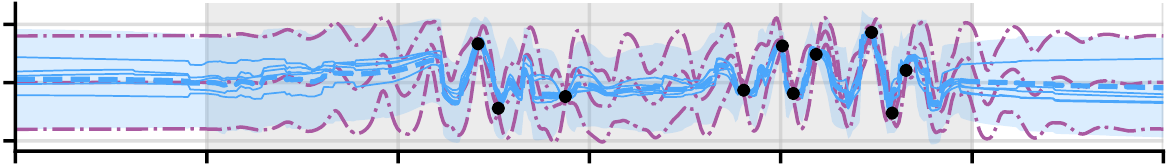}
    \hfill \includegraphics[width=0.496\linewidth]{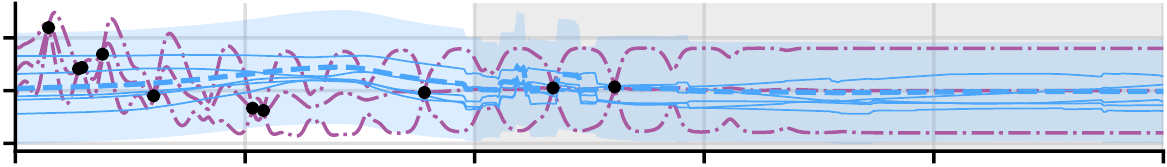}\\
    \begin{tikzpicture}[overlay, remember picture]
        \node [rotate=90, anchor=south] () at (0, 17pt) {\scriptsize\textsc{GNP}};
    \end{tikzpicture}%
    \includegraphics[width=0.496\linewidth]{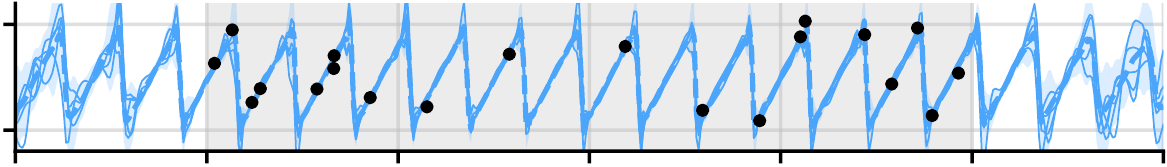}
    \hfill \includegraphics[width=0.496\linewidth]{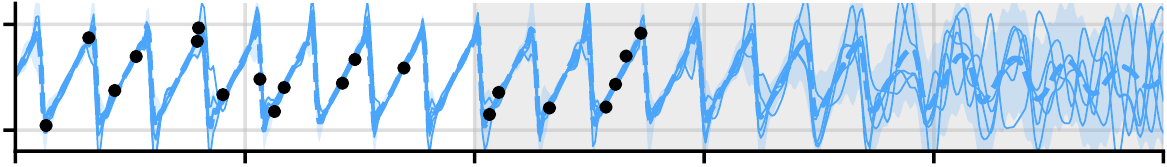}\\[3pt]
    \begin{tikzpicture}[overlay, remember picture]
        \node [rotate=90, anchor=south] () at (0, 17pt) {\scriptsize\textsc{ConvNP}};
    \end{tikzpicture}%
    \includegraphics[width=0.496\linewidth]{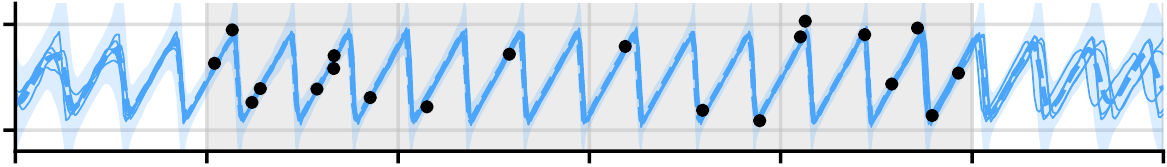}
    \hfill \includegraphics[width=0.496\linewidth]{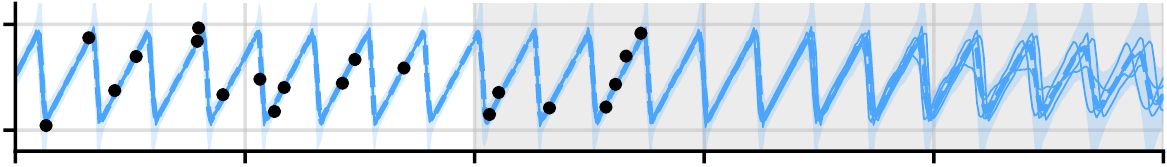}%
    \begin{tikzpicture}[overlay, remember picture]
        \node [rotate=90, anchor=north] () at (0, 17pt) {\footnotesize\textsc{Sawtooth}};
    \end{tikzpicture}\\[3pt]
    \begin{tikzpicture}[overlay, remember picture]
        \node [rotate=90, anchor=south] () at (0, 17pt) {\scriptsize\textsc{ANP}};
    \end{tikzpicture}%
    \includegraphics[width=0.496\linewidth]{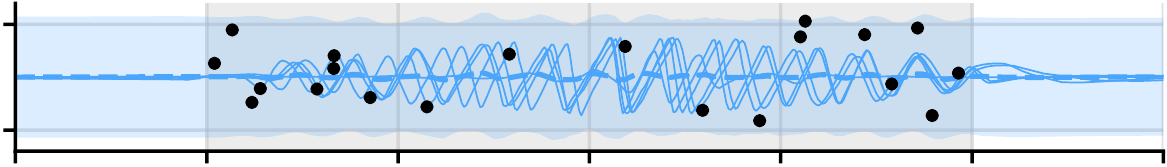}
    \hfill \includegraphics[width=0.496\linewidth]{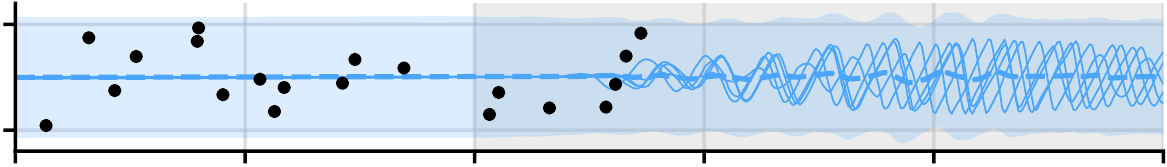}\\[3pt]
    \vspace*{-5pt}
    \caption{
        Sample predictions of models trained in the 1D experiments.
        Grey regions indicate where the models were trained;
        dashed purple lines show optimal predictions by the ground truth.
    }
    \label{fig:1D_plots}
\end{figure}

\sectioncompressed{Experiments}
\label{sec:experiments}

We evaluate the Gaussian Neural Process on synthetic 1D regression experiments.
We follow the experimental setup from \citet{Foong:2020:Meta-Learning_Stationary_Stochastic_Process_Prediction};
see \cref{app:experiments} for more details.
In line with the theoretical analysis from \cref{sec:objective-function-analysis}, but unlike \citet{Foong:2020:Meta-Learning_Stationary_Stochastic_Process_Prediction}, we contaminate all data samples with $\Normal(0, 0.05^2)$-noise.
Code that implements the \textsc{GNP} and reproduces all experiments can be found at \url{https//github.com/wesselb/NeuralProcesses.jl}.

\newcommand{\GNP}{\textsc{GNP}}
\Cref{tab:1D_results} shows the performance of the ground-truth GP (where applicable), the ground-truth GP without correlations, the \textsc{GNP}, the \textsc{ConvCNP} \citep{Gordon:2020:Convolutional_Conditional_Neural_Processes}, the \textsc{ConvNP} \citep{Foong:2020:Meta-Learning_Stationary_Stochastic_Process_Prediction}, and the \textsc{ANP} \citep{Kim:2019:Attentive_Neural_Processes} in an interpolation setup;
\cref{fig:1D_plots} shows samples of the learned models.
The \GNP\ significantly outperforms all other models on all tasks except \textsc{Sawtooth}.
On \textsc{EQ}, and \textsc{Mat\'ern--$\frac52$}, the \GNP\ even achieves parity with the ground-truth GP.
Moreover, the \GNP\ is the best performing model on \textsc{Mixture}, which is highly non-Gaussian, demonstrating that the \GNP\ can successfully approximate non-Gaussian processes.
\Cref{fig:1D_kernels} shows the stationary prior covariance functions learned by the \GNP, which are close to the ground-truth on the GP tasks;
this, together with the likelihood numbers, empirically validates the ability of the \GNP~to recover the prediction map of a ground-truth GP.
Note that the learned covariance functions also match the truth for the non-Gaussian tasks \textsc{Sawtooth} and \textsc{Mixture}.
For \textsc{Sawtooth}, the likelihood of the \GNP\ is worse than the \textsc{ConvCNP} and only improves over the \textsc{ANP}, which demonstrates that, on certain non-Gaussian tasks, non-Gaussian approximations, like the \textsc{ConvNP}, can offer substantially better performance.
On the most expensive tasks (\textsc{Sawtooth} and \textsc{Mixture}), an epoch for the \GNP\ took roughly six times longer than any other model; see \cref{tab:1D_epoch_timings}.
More results are in \cref{app:experiments}, including results for setups that test generalisation and extrapolation performance;
like the \textsc{ConvCNP} and \textsc{ConvNP}, due to translation equivariance, these results shows that the \GNP\ demonstrates excellent ability to generalise.

\sectioncompressed{Conclusion and Future Work}
We have provided a rigorous analysis of the standard ML objective used to train NPs.
Moreover, we propose a new member to the Neural Process family called the Gaussian Neural Process (\textsc{GNP}), which incorporates translation equivariance, provides universal approximation guarantees, and demonstrates encouraging performance in preliminary experiments.
In future work, we aim to investigate incorporating $\Pi\us{p.s.d.}$ (\cref{sec:GNP}) directly into the architecture.

\clearpage
\acks{
    The authors thank Piet Lammers for insightful discussions about stochastic processes and
    David Burt and Cozmin Ududec for helpful comments on a draft.
}
\bibliography{bibliography}

\begin{thebibliography}{18}
\providecommand{\natexlab}[1]{#1}
\providecommand{\url}[1]{\texttt{#1}}
\expandafter\ifx\csname urlstyle\endcsname\relax
  \providecommand{\doi}[1]{doi: #1}\else
  \providecommand{\doi}{doi: \begingroup \urlstyle{rm}\Url}\fi

\bibitem[d.~G.~Matthews et~al.(2016)d.~G.~Matthews, Hensman, Turner, and
  Ghahramani]{Matthews:2016:On_Sparse_Variational}
A.~G. d.~G.~Matthews, J.~Hensman, R.~E. Turner, and Z.~Ghahramani.
\newblock On sparse variational methods and the {Kullback}-{Leibler} divergence
  between stochastic processes.
\newblock In Aarti Singh and Jerry Zhu, editors, \emph{International Conference
  on Artificial Intelligence and Statistics 22}, volume~54 of \emph{Proceedings
  of Machine Learning Research}. Proceedings of Machine Learning Research, Apr
  2016.

\bibitem[Dugundji(1951)]{Dugundji:1951:An_Extension_of_Tietzes_Theorem}
James Dugundji.
\newblock An extension of {Tietze's} theorem.
\newblock \emph{Pacific Journal of Mathematics}, 1\penalty0 (3):\penalty0
  353--367, 1951.

\bibitem[Feragen(2006)]{Feragen:2006:Characterization_of_Equivariant_ANEs}
Aasa Feragen.
\newblock Characterization of equivariant {ANEs}, 2006.
\newblock Licentiate thesis.

\bibitem[Foong et~al.(2020)Foong, Bruinsma, Gordon, Dubois, Requeima, and
  Turner]{Foong:2020:Meta-Learning_Stationary_Stochastic_Process_Prediction}
Andrew Y.~K. Foong, Wessel~P. Bruinsma, Jonathan Gordon, Yann Dubois, James
  Requeima, and Richard~E. Turner.
\newblock Meta-learning stationary stochastic process prediction with
  convolutional neural processes.
\newblock In H.~Larochelle, M.~Ranzato, R.~Hadsell, M.F. Balcan, and H.~Lin,
  editors, \emph{Advances in Neural Information Processing Systems 33}. Curran
  Associates, Inc., Jul 2020.

\bibitem[Garnelo et~al.(2018{\natexlab{a}})Garnelo, Rosenbaum, Maddison,
  Ramalho, Saxton, Shanahan, Teh, Rezende, and
  Eslami]{Garnelo:2018:Conditional_Neural_Processes}
M.~Garnelo, D.~Rosenbaum, C.~J. Maddison, T.~Ramalho, D.~Saxton, M.~Shanahan,
  Y.~Whye Teh, D.~J. Rezende, and S.~M.~A. Eslami.
\newblock Conditional neural processes.
\newblock In Jennifer Dy and Andreas Krause, editors, \emph{International
  Conference on Machine Learning 35}, volume~80 of \emph{Proceedings of Machine
  Learning Research}. Proceedings of Machine Learning Research, Jul
  2018{\natexlab{a}}.

\bibitem[Garnelo et~al.(2018{\natexlab{b}})Garnelo, Schwarz, Rosenbaum, Viola,
  Rezende, Eslami, and Teh]{Garnelo:2018:Neural_Processes}
M.~Garnelo, J.~Schwarz, D.~Rosenbaum, F.~Viola, D.~J. Rezende, S.~M.~A. Eslami,
  and Y.~Whye Teh.
\newblock Neural processes.
\newblock \emph{arXiv preprint arXiv:1807.01622}, Jul 2018{\natexlab{b}}.

\bibitem[Gordon et~al.(2020)Gordon, Bruinsma, Foong, Requeima, Dubois, and
  Turner]{Gordon:2020:Convolutional_Conditional_Neural_Processes}
Jonathan Gordon, Wessel~P. Bruinsma, Andrew Y.~K. Foong, James Requeima, Yann
  Dubois, and Richard~E. Turner.
\newblock Convolutional conditional neural processes.
\newblock \emph{International Conference on Learning Representations {(ICLR),}
  8th}, Oct 2020.
\newblock URL \url{https://openreview.net/forum?id=Skey4eBYPS}.

\bibitem[Higham(1988)]{Higham:1988:Nearest_PSD}
Nicholas~J. Higham.
\newblock Computing a nearest symmetric positive semidefinite matrix.
\newblock \emph{Linear Algebra and Its Applications}, 103:\penalty0 103--118,
  1988.

\bibitem[Kim et~al.(2019)Kim, Mnih, Schwarz, Garnelo, Eslami, Rosenbaum,
  Vinyals, and Teh]{Kim:2019:Attentive_Neural_Processes}
H.~Kim, A.~Mnih, J.~Schwarz, M.~Garnelo, A.~Eslami, D.~Rosenbaum, O.~Vinyals,
  and Y.~Whye Teh.
\newblock Attentive neural processes.
\newblock In \emph{International Conference on Learning Representations 7}, Jan
  2019.

\bibitem[Kingma and Ba(2015)]{Kingma:2014:Adam_A_Method_for_Stochastic}
D.~P. Kingma and J.~Ba.
\newblock {ADAM}: {A} method for stochastic optimization.
\newblock In \emph{International Conference on Learning Representations 3}, Dec
  2015.

\bibitem[Ma et~al.(2018)Ma, Li, and Hern{\'
  a}ndez-Lobato]{Ma:2018:Variational_Implicit_Processes}
Chao Ma, Yingzhen Li, and Jos{\' e}~Miguel Hern{\' a}ndez-Lobato.
\newblock Variational implicit processes.
\newblock In Yarin Gal, Jos{\' e}~Miguel Hern{\' a}ndez-Lobato, Christos
  Louizos, Andrew~G. Wilson, Zoubin Ghahramani, Kevin Murphy, and Max Welling,
  editors, \emph{Advances in Neural Information Processing Systems 31}, Dec
  2018.

\bibitem[Posner(1975)]{Posner:1975:Random_Coding_Strategies_for_Minimum}
Edward~C. Posner.
\newblock Random coding strategies for minimum entropy.
\newblock \emph{{IEEE} Transactions on Information Theory}, 21\penalty0 (4),
  Jul 1975.
\newblock ISSN 0018-9448.
\newblock \doi{10.1109/TIT.1975.1055416}.

\bibitem[Rasmussen and Williams(2006)]{Rasmussen:2006:Gaussian_Processes}
Carl~Edward Rasmussen and Christopher K.~I. Williams.
\newblock \emph{{Gaussian} Processes for Machine Learning}.
\newblock MIT Press, 2006.

\bibitem[Ravi and Larochelle(2017)]{Ravi:2017:Optimization_as_a_Model_for}
Sachin Ravi and Hugo Larochelle.
\newblock Optimization as a model for few-shot learning.
\newblock In \emph{International Conference on Learning Representations 5}, Apr
  2017.

\bibitem[Shi et~al.(2019)Shi, Khan, and
  Zhu]{Shi:2019:Scalable_Training_of_Inference_Networks}
J.~Shi, M.~Emtiyaz Khan, and J.~Zhu.
\newblock Scalable training of inference networks for {Gaussian}-process
  models.
\newblock In Kamalika Chaudhuri and Ruslan Salakhutdinov, editors,
  \emph{International Conference on Machine Learning 36}, volume~97 of
  \emph{Proceedings of Machine Learning Research}. Proceedings of Machine
  Learning Research, May 2019.

\bibitem[Sun et~al.(2018)Sun, Zhang, Shi, and
  Grosse]{Sun:2018:Functional_Variational_Bayesian_Neural_Networks}
Shengyang Sun, Guodong Zhang, Jiaxin Shi, and Roger Grosse.
\newblock Functional variational {Bayesian} neural networks.
\newblock In Yarin Gal, Jos{\' e}~Miguel Hern{\' a}ndez-Lobato, Christos
  Louizos, Andrew~G. Wilson, Zoubin Ghahramani, Kevin Murphy, and Max Welling,
  editors, \emph{Advances in Neural Information Processing Systems 31}, Dec
  2018.

\bibitem[Vinyals et~al.(2016)Vinyals, Blundell, Lillicrap, Kavukcuoglu, and
  Wierstra]{Vinyals:2016:Matching_Networks_for_One_Shot}
Oriol Vinyals, Charles Blundell, Timothy Lillicrap, Koray Kavukcuoglu, and Daan
  Wierstra.
\newblock Matching networks for one shot learning.
\newblock In D.~D. Lee, M.~Sugiyama, U.~V. Luxburg, I.~Guyon, and R.~Garnett,
  editors, \emph{Advances in Neural Information Processing Systems 29}. Curran
  Associates, Inc., Jun 2016.

\bibitem[Yarotsky(2018)]{Yarotsky:2018:Universal_Approximations_of_Invariant_Maps}
D.~Yarotsky.
\newblock Universal approximations of invariant maps by neural networks.
\newblock \emph{arXiv preprint arXiv:1804.10306}, Apr 2018.

\end{thebibliography}
\appendix

\numberwithin{table}{section}
\numberwithin{figure}{section}
\numberwithin{equation}{section}

\clearpage
\section{Notation and Terminology}
\label{app:notation}

\hspace{\parindent}
\textsc{\bfseries Vectors and matrices:}
Denote vectors $\vx$ with boldface lowercase letters and matrices $\mA$ with boldface uppercase letters.
For a vector $\vx$, let $\abs{\vx}$ be its length.
For two matrices $\mA$ and $\mB$, the notation $\mA \succ \mB$ means that $\mA - \mB$ is strictly positive definite.
\vspace{0.25em}

\textsc{\bfseries Observations and data sets:}
Let $\mathcal{X} = \R$ be the space of \emph{inputs} and $\mathcal{Y} = \R$ be the space of \emph{outputs}.
Call a tuple $(x, y) \in \mathcal{X} \times \mathcal{Y}$ an \emph{observation}.
Let $\mathcal{D}_n = (\mathcal{X}\times\mathcal{Y})^n$ be the space of all collections of $n$ observations,
and let $\mathcal{D} = \union_{n=0}^\infty \mathcal{D}_n$ be the space of all finite collections of observations.
Call an element $D \in \D$ a \emph{data set}.
Note that $\es \in \mathcal{D}$.
If $D = \set{(x_i, y_i)}_{i=1}^n \in \D_n$, then denote $D = (\vx, \vy)$ where $\vx = (x_1, \ldots, x_n)$ and $\vy = (y_1, \ldots, y_n)$.
Endow $\mathcal{D}$ with the metric
\begin{equation}
    d_\mathcal{D}(D_1, D_2) = \begin{cases}
        \norm{\vx_1 - \vx_2}_2 + \norm{\vy_1  - \vy_2}_2 & \text{if $\abs{\vx_1}=\abs{\vx_2}$}, \\
        \infty & \text{otherwise}
    \end{cases}
\end{equation}
where $D_1 = (\vx_1, \vy_1)$ and $D_2 = (\vx_2, \vy_2)$.
\vspace{0.25em}

\textsc{\bfseries Probability distributions:}
For all $n \in \N$, let $\mathcal{P}^n$ be the collection of all distributions on $\R^n$, and let $\mathcal{P}\ss{G}^n$ be the collection of all such Gaussian distributions.
Let $C(\mathcal{X}, \mathcal{Y})$ be the collection of continuous functions $\mathcal{X} \to \mathcal{Y}$ endowed with the metric
\begin{equation}
    d_C(f, g) = \sum_{n=1}^\infty 2^{-n} \sup_{x \in [-n, n]}(\abs{f(x) - g(x)} \land 1),
\end{equation}
which makes it a complete and separable metric space.
This metric metricises the topology of \emph{compact convergence}.
Let $\mathcal{P}$ be the space of all probability measures on $(\mathcal{Y}^\mathcal{X}, \mathcal{B}^\mathcal{X})$ where $\mathcal{B}$ is the usual Borel $\sigma$-algebra on $\mathcal{Y}$ and $\mathcal{B}^\mathcal{X}$ is the cylindrical $\sigma$-algebra on $C(\mathcal{X}, \mathcal{Y})$.
Similarly, let $\mathcal{P}\ss{G}$ be the collection of all such Gaussian measures.
Say that a distribution has \emph{full support} if every open set has positive measure.
Let $I = \union_{n=1}^\infty \X^n$ be the collection of all finite index sets or equivalently all finite sets of inputs.
For $\vx \in I$, let $P_\vx$ be the projection on the coordinates $\vx$: $P_\vx f = (f(\vx_1), \ldots, f(\vx_{\abs{\vx}}))$.
For $(\mu_i)_{i \ge 1} \sub \mathcal{P}$ and $\mu \in \mathcal{P}$, say that $(\mu_i)_{i \ge 1}$ converges weakly to $\mu$, denoted $\mu_i \weakto \mu$, if $\mu_i(L) \to \mu(L)$ for all $L \colon C(\X, \Y) \to \R$ continuous and bounded.
(Recall that $\mu(L)$ denotes $\E_\mu[L]$.)
\vspace{0.25em}

\textsc{\bfseries Lebesgue class:}
For all $n \in \N$, let $\mathcal{P}_\lambda^n \sub \mathcal{P}^n$ be the collection of all distributions on $\R^n$ that admit a density with respect to the Lebesgue measure.
Let $\mathcal{P}_\lambda \sub \mathcal{P}$ be the collection of processes where every finite-dimensional distribution admits a density with respect to the Lebesgue measure.
\vspace{0.25em}

\textsc{\bfseries Degeneracy:}
Call a distribution $\mu \in \mathcal{P}^n$ \emph{non-degenerate} if it has a covariance matrix and the covariance matrix is strictly positive definite.
Similarly, call a measure $\mu \in \mathcal{P}$ non-degenerate if every finite-dimensional distribution has a covariance matrix and all those covariance matrices are strictly positive definite.
\vspace{0.25em}

\textsc{\bfseries Gaussianisation:}
For a distribution $\mu \in \mathcal{P}^n$, let $\Normal(\mu) \in \mathcal{P}^n\ss{G}$ be the $n$-dimensional Gaussian distribution with mean equal to the mean of $\mu$ and covariance matrix equal to the covariance matrix of $\mu$, assuming that the latter exists.
Similarly, for a measure $\mu \in \mathcal{P}$, let $\GP(\mu) \in \mathcal{P}\ss{G}$ be the Gaussian process with mean function equal to the mean function of $\mu$ and covariance function equal to the covariance function of $\mu$, assuming that the latter exists.
\vspace{0.25em}

\clearpage
\section{The Gaussian Divergence}
\label{app:Gaussian_divergence}
For two measures $\mu \in \mathcal{P}$ and $\nu \in \mathcal{P}$, their Kullback--Leibler divergence enjoys \emph{positive definiteness}: $\KL(\mu, \nu) \ge 0$ with equality if and only if $\mu = \nu$.
However, when restricting to only Gaussian measures $\nu \in \mathcal{P}\ss{G}$, this property cannot be used anymore, because even the best Gaussian approximation $\nu$ may not achieve $\KL(\mu, \nu) = 0$.
To get around this, we define a divergence induced by the Kullback--Leibler divergence called the \emph{Gaussian divergence}.

For an arbitrary probability distribution $\mu$ on $\R^n$ and a Gaussian distribution $\nu$ on $\R^n$, define their Gaussian divergence $\G(\mu, \nu)$ by
\begin{equation} \label{eq:G}
    \G(\mu, \nu) =
    \begin{cases}
        \KL(\mu, \nu) - \inf_{\xi \in \mathcal{P}^n\ss{G}} \KL(\mu, \xi)
        & \text{if $\KL(\mu, \xi) < \infty$ for some $\xi \in \mathcal{P}^n\ss{G}$,} \\
        \infty & \text{otherwise.}
    \end{cases}
\end{equation}
Let $\vm_1$ and $\vm_2$ be the means of $\mu$ and $\nu$ respectively and let $\mK_1$ and $\mK_2$ be their covariances.
Assume that $\mu$ has a density with respect to the Lebesgue measure and that $\mK_1 \succ 0$ and $\mK_2 \succ 0$.
Then a quick computation shows that
\begin{equation} \label{eq:expression_G}
    \G(\mu, \nu)
    =
        \frac12 d_\mK(\mK_1, \mK_2)
        + \frac12 d_{\vm}(\vm_1, \vm_2;\mK_2)
    = \KL(\Normal(\mu), \nu)
\end{equation}
where $\Normal(\mu) = \Normal(\vm_1, \mK_1)$ and
\begin{align}
    d_{\vm}(\vm_1, \vm_2;\mK_2)
        &= (\vm_1 - \vm_2)^\T \mK_2^{-1}(\vm_1 - \vm_2) \ge 0, \\
    d_\mK(\mK_1, \mK_2)
        &= \log \frac{\abs{\mK_2}}{\abs{\mK_1}}
        + \tr(\mK_2^{-1} \mK_1)
        - n \ge 0.
\end{align}
Since the right-hand side of \eqref{eq:expression_G} is a Kullback--Leibler divergence,
the Gaussian divergence inherits positive definiteness:
if $\mu \in \mathcal{P}_\lambda^n$ and $\nu \in \mathcal{P}\ss{G}^n$ are non-degenerate,
then $\G(\mu, \nu) \ge 0$ with equality if and only if $\nu = \Normal(\mu)$.
The Gaussian divergence inherits more properties from the Kullback--Leibler divergence.
An important inherited property that we will make use of is \emph{monotonicity}.
Let $S\colon \R^n \to \R^{m}$ where $m \le n$ be a projection onto a subset of the coordinates.
Then it is true that
$
    \G(S(\mu), S(\nu))
    \le \G(\mu, \nu).
$

\begin{thm}[\citet{Sun:2018:Functional_Variational_Bayesian_Neural_Networks}]
    Let $\mu \in \mathcal{P}$ and $\nu \in \mathcal{P}$.
    Then
    \begin{equation}
        \KL(\mu, \nu) = \sup_{\vx \in I} \, \KL(P_\vx \mu, P_\vx \nu).
    \end{equation}
\end{thm}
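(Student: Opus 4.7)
The plan is to prove the two inequalities separately. For the upper bound $\sup_{\vx \in I} \KL(P_\vx \mu, P_\vx \nu) \le \KL(\mu, \nu)$, I would invoke the data-processing inequality: for any measurable map $T$, $\KL(T_* \mu, T_* \nu) \le \KL(\mu, \nu)$, where $T_* \mu = \mu \circ T^{-1}$ denotes the pushforward. Applying this with $T = P_\vx \colon C(\X, \Y) \to \R^{|\vx|}$, which is measurable by construction of the cylindrical $\sigma$-algebra, yields the inequality for each $\vx \in I$, and hence for the supremum.

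For the reverse direction, the plan is to invoke the Gelfand--Yaglom--Dobrushin partition characterisation:
\begin{equation*}
    \KL(\mu, \nu) = \sup_{\mathcal{A}} \sum_{A \in \mathcal{A}} \mu(A) \log \frac{\mu(A)}{\nu(A)},
\end{equation*}
where $\mathcal{A}$ ranges over all finite $\mathcal{B}^\X$-measurable partitions of $C(\X, \Y)$, with the standard conventions $0 \log 0 = 0$ and $p \log(p/0) = +\infty$ for $p > 0$. The goal is then to show that the supremum may be restricted to \emph{cylindrical} partitions, i.e.\ finite partitions whose elements all lie in $\mathcal{F}_\vx := \sigma(P_\vx)$ for some common $\vx \in I$. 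Given such a partition $\{A_1, \ldots, A_k\}$ with $A_i \in \sigma(P_{\vx_i})$, concatenating the coordinate tuples $\vx_1, \ldots, \vx_k$ produces a single $\vx \in I$ with every $A_i \in \mathcal{F}_\vx$; writing $A_i = P_\vx^{-1}(B_i)$ for a Borel partition $\{B_i\}$ of $\R^{|\vx|}$ then gives
\begin{equation*}
    \sum_i \mu(A_i) \log \frac{\mu(A_i)}{\nu(A_i)}
    = \sum_i (P_\vx \mu)(B_i) \log \frac{(P_\vx \mu)(B_i)}{(P_\vx \nu)(B_i)}
    \le \KL(P_\vx \mu, P_\vx \nu)
    \le \sup_{\vx' \in I} \KL(P_{\vx'} \mu, P_{\vx'} \nu).
\end{equation*}

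The main obstacle is justifying the reduction to cylindrical partitions, i.e.\ showing that the supremum over all $\mathcal{B}^\X$-measurable partitions coincides with the supremum over cylindrical ones. This is a classical fact (\cf~Gray, \emph{Entropy and Information Theory}, Lemma~5.2.3; Dembo--Zeitouni, \emph{Large Deviations Techniques and Applications}): since the cylindrical sets form an algebra generating $\mathcal{B}^\X$, every measurable $A$ is approximable to arbitrary precision in $(\mu + \nu)$-measure by cylindrical sets, and the uniform continuity of $p \mapsto p \log p$ on $[0,1]$ transfers this approximation through the nonlinear partition sums, so cylindrical partitions attain the same supremum. Combined with the easy direction, this establishes equality, including the case $\KL(\mu, \nu) = +\infty$, where an approximating cylindrical partition forces the sum to diverge. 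An alternative cleaner route in the case $\mu \ll \nu$ uses L\'evy's upward convergence theorem for the martingale $f_\vx = \E_\nu[d\mu/d\nu \mid \mathcal{F}_\vx]$ along a countable cofinal sequence in $I$ (available since $C(\X, \Y)$ is Polish, hence $\mathcal{B}^\X$ countably generated), together with Jensen's inequality for the convex map $p \mapsto p \log p$; however, handling singularity $\mu \not\ll \nu$ requires more care, which is why the partition route is preferable here.
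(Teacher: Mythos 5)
The paper does not actually prove this theorem: it is stated as a result imported from \citet{Sun:2018:Functional_Variational_Bayesian_Neural_Networks}, so there is no in-paper argument to compare yours against; I can only assess your proposal on its own terms. It is essentially the standard proof of this classical fact and is sound in structure. The easy direction via the data-processing inequality applied to the measurable projections $P_\vx$ is correct, and for the hard direction the Gelfand--Yaglom--Dobrushin partition representation together with the reduction to cylindrical partitions is the right strategy; your observation that any finite partition of cylinder sets can be refined to be measurable with respect to a single concatenated index set $\vx \in I$ is the key combinatorial step, and it works because cylinder sets over varying finite index sets form an algebra. The one place the sketch is thinner than it should be is the approximation lemma: the partition summand $(p,q) \mapsto p\log(p/q)$ depends on \emph{two} measures and is not uniformly continuous near $q=0$, so uniform continuity of $p \mapsto p\log p$ alone does not carry the approximation through; atoms with $\nu(A)$ small, and in particular $\mu(A) > 0 = \nu(A)$ (the source of the value $+\infty$), need separate treatment. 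Since you explicitly defer this to the classical lemma in Gray's \emph{Entropy and Information Theory}, where exactly that care is taken, the argument is acceptable as a citation-backed step, but a self-contained write-up would have to spell it out. Your remark that the martingale/L\'evy-upward route is cleaner under $\mu \ll \nu$ but awkward in the singular case is also accurate.
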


We take inspiration from the above characterisation of the Kullback--Leibler divergence between stochastic processes by \citet{Sun:2018:Functional_Variational_Bayesian_Neural_Networks} to extend the definition of the Gaussian divergence over distributions on $\R^n$ to arbitrary probability measures in $\mathcal{P}$
by taking the supremum over index sets:
\begin{equation}
    \G(\mu, \nu)
    = \sup_{\vx \in I}\, \G(P_{\vx} \mu, P_{\vx} \nu).
\end{equation}
Suppose that $\mu$ and $\nu$ are non-degenerate and that every finite-dimensional distribution of $\mu$ has a density with respect to the Lebesgue measure.
Then \eqref{eq:expression_G} shows the equality
\begin{equation}
    \G(P_{\vx} \mu, P_{\vx} \nu)
    = \KL(\Normal(P_{\vx}\mu), P_{\vx} \nu)
    = \KL(P_{\vx}\, \GP(\mu), P_{\vx} \nu).
\end{equation}
Therefore, taking the supremum over $\vx \in I$, we find that $\G(\mu, \nu) = \KL(\GP(\mu), \nu)$, \cf~\eqref{eq:expression_G}.
The right-hand side is a Kullback--Leibler divergence.
This means that the Gaussian divergence between processes also inherits positive definiteness:
if $\mu \in \mathcal{P}_\lambda$ and $\nu \in \mathcal{P}\ss{G}$ are non-degenerate,
then $\G(\mu, \nu) \ge 0$ with equality if and only if $\nu = \GP(\mu)$.

Let $\mu \in \mathcal{P}^n_\lambda$ and $\nu \in \mathcal{P}^n\ss{G}$ be non-degenerate.
Then \eqref{eq:expression_G} and the definition \eqref{eq:G} show that
\begin{equation} \label{eq:KL_inf}
    \KL(\mu, \Normal(\mu))
    = \inf_{\xi \in \mathcal{P}\ss{G}^n} \KL(\mu, \xi).
\end{equation}
Therefore, if $\mu \in \mathcal{P}^n_\lambda$ and $\nu \in \mathcal{P}^n\ss{G}$ are non-degenerate, then we can also write
\begin{equation} \label{eq:Gn_identification}
    \G(\mu, \nu) = \KL(\mu, \nu) - \KL(\mu, \Normal(\mu)).
\end{equation}
Since $\KL(\mu, \Normal(\mu))$ is a constant that does not depend on $\nu$, we have the following result.

\begin{prop} \label{prop:consistent_fdds}
    Let $\mu \in \mathcal{P}^n_\lambda$ be non-degenerate and assume that, for every $\vx \in I$, there exists some non-degenerate $\mu^{\vx}\ss{G} \in \mathcal{P}\ss{G}^{\abs{\vx}}$ such that $\KL(P_\vx \mu, \mu^{\vx}\ss{G}) < \infty$.
    Then
    \begin{equation}
        \argmin_{\nu \in \mathcal{P}^{\abs{\vx}}\ss{G}}\, \KL(P_\vx \mu, \nu)
        = \argmin_{\nu \in \mathcal{P}^{\abs{\vx}}\ss{G}}\, \G(P_\vx \mu, \nu)
        = \Normal(P_\vx \mu)
        = P_\vx \,\GP(\mu)
        \quad \text{for all $\vx \in I$}.
    \end{equation}
\end{prop}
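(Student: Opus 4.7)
The plan is to fix an arbitrary finite index set $\vx \in I$, let $\mu_\vx = P_\vx \mu \in \mathcal{P}^{|\vx|}_\lambda$, and then chain together three equalities by showing (a) the two argmins coincide because $\G$ and $\KL$ differ by a constant, (b) the minimiser is $\Normal(\mu_\vx)$ by \eqref{eq:KL_inf}, and (c) projecting $\GP(\mu)$ onto $\vx$ yields the Gaussianisation of $\mu_\vx$ directly from the definitions. Throughout I can restrict the minimisation to non-degenerate $\nu \in \mathcal{P}\ss{G}^{|\vx|}$, because any degenerate $\nu$ fails to dominate $\mu_\vx$ (which has a Lebesgue density) and therefore gives $\KL(\mu_\vx, \nu) = \infty$, so such $\nu$ cannot be argmins.

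First I would note that, since $\mu$ is non-degenerate, the covariance matrix of $\mu_\vx$ is strictly positive definite, so $\Normal(\mu_\vx)$ exists and is itself non-degenerate. Next, using the assumed existence of some non-degenerate $\mu^{\vx}\ss{G}$ with $\KL(\mu_\vx, \mu^{\vx}\ss{G}) < \infty$, the closed-form expression \eqref{eq:expression_G} together with the identification \eqref{eq:Gn_identification} gives
\begin{equation}
    \G(\mu_\vx, \nu) = \KL(\mu_\vx, \nu) - \KL(\mu_\vx, \Normal(\mu_\vx))
\end{equation}
for all non-degenerate $\nu \in \mathcal{P}^{|\vx|}\ss{G}$. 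In particular, applying this to $\nu = \mu^{\vx}\ss{G}$ shows that $\KL(\mu_\vx, \Normal(\mu_\vx)) \le \KL(\mu_\vx, \mu^{\vx}\ss{G}) < \infty$ (using \eqref{eq:KL_inf}), so the subtracted term is a finite constant independent of $\nu$. Hence the first argmin in the proposition equals the second argmin.

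For the middle equality I invoke \eqref{eq:KL_inf} directly: it states that $\inf_{\xi \in \mathcal{P}\ss{G}^{|\vx|}} \KL(\mu_\vx, \xi) = \KL(\mu_\vx, \Normal(\mu_\vx))$, so $\Normal(\mu_\vx)$ attains the infimum. The positive definiteness of $\G$ on the non-degenerate Lebesgue class (established immediately after \eqref{eq:expression_G}) then gives uniqueness of this minimiser. For the final equality $\Normal(P_\vx \mu) = P_\vx\,\GP(\mu)$, I would unwind the definitions: $\GP(\mu)$ is by definition the Gaussian process whose mean and covariance functions are those of $\mu$; projecting onto the coordinates $\vx$ yields the Gaussian whose mean vector and covariance matrix are the mean and covariance of $\mu_\vx$, which is precisely $\Normal(\mu_\vx)$.

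The main delicate point is ensuring that the constant $\KL(\mu_\vx, \Normal(\mu_\vx))$ really is finite so that the subtraction identity \eqref{eq:Gn_identification} is not vacuous; once this is secured from the hypothesis via the minimality of $\Normal(\mu_\vx)$ among Gaussians, the rest of the argument is bookkeeping. Everything else — non-degeneracy of $\Normal(\mu_\vx)$, ruling out degenerate minimisers, and commuting $\Normal$ with $P_\vx$ — follows directly from the definitions gathered in \cref{app:notation} and the identities already derived in this appendix.
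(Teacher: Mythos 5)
Your proposal is correct and follows essentially the same route as the paper's (very terse) proof: both rest on the identity \eqref{eq:Gn_identification} expressing $\G(P_\vx\mu,\nu)$ as $\KL(P_\vx\mu,\nu)$ minus the constant $\KL(P_\vx\mu,\Normal(P_\vx\mu))$, whose finiteness is secured from the hypothesis via \eqref{eq:KL_inf}. The additional bookkeeping you supply --- ruling out degenerate $\nu$, uniqueness via positive definiteness of $\G$, and the identification $\Normal(P_\vx\mu)=P_\vx\,\GP(\mu)$ --- is exactly what the paper leaves implicit.
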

\begin{proof}
    Follows from \eqref{eq:Gn_identification} if $\KL(P_\vx \mu, \Normal(P_\vx \mu)) < \infty$ for all $\vx \in I$,
    which, in turn, follows from \eqref{eq:KL_inf} and the assumption.
\end{proof}

Equality \eqref{eq:Gn_identification} was key in the proof of \cref{prop:consistent_fdds}.
We proceed to develop a similar expression for the Gaussian divergence between processes:
\cref{thm:G_identification}.
We first address the issue that $\KL(\mu, \GP(\mu))$ may be infinite.

\begin{prop} \label{prop:G_inequality}
    Let $\mu \in \mathcal{P}_\lambda$ and $\nu \in \mathcal{P}\ss{G}$ be non-degenerate.
    Then $\KL(\mu, \GP(\mu)) \le \KL(\mu, \nu)$.
\end{prop}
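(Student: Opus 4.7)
The plan is to reduce the statement about measures on $C(\X, \Y)$ to the finite-dimensional identity \eqref{eq:KL_inf}, using the Sun et al. characterisation of the functional KL as a supremum over projections.

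First I would observe that for every $\vx \in I$, the projection $P_\vx \,\GP(\mu)$ is, by construction of $\GP(\mu)$, the Gaussian on $\R^{\abs{\vx}}$ with the same mean vector and covariance matrix as $P_\vx \mu$; that is, $P_\vx \,\GP(\mu) = \Normal(P_\vx \mu)$. Non-degeneracy of $\mu$ and $\mu \in \mathcal{P}_\lambda$ transfer to $P_\vx \mu \in \mathcal{P}_\lambda^{\abs{\vx}}$ being non-degenerate, and non-degeneracy of $\nu$ gives $P_\vx \nu \in \mathcal{P}\ss{G}^{\abs{\vx}}$ non-degenerate. Hence \eqref{eq:KL_inf} applies at each finite-dimensional slice and yields
\begin{equation}
    \KL(P_\vx \mu, P_\vx \,\GP(\mu))
    = \KL(P_\vx \mu, \Normal(P_\vx \mu))
    = \inf_{\xi \in \mathcal{P}\ss{G}^{\abs{\vx}}} \KL(P_\vx \mu, \xi)
    \le \KL(P_\vx \mu, P_\vx \nu).
\end{equation}

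Taking the supremum over $\vx \in I$ on both sides and invoking the cited theorem of \citet{Sun:2018:Functional_Variational_Bayesian_Neural_Networks} delivers
\begin{equation}
    \KL(\mu, \GP(\mu))
    = \sup_{\vx \in I} \KL(P_\vx \mu, P_\vx \,\GP(\mu))
    \le \sup_{\vx \in I} \KL(P_\vx \mu, P_\vx \nu)
    = \KL(\mu, \nu),
\end{equation}
which is the claim.

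The argument is essentially a ``slice-wise'' application of the classical finite-dimensional fact that the moment-matched Gaussian minimises KL divergence over Gaussians, so there is no real obstacle; the only point warranting care is that \eqref{eq:KL_inf} requires the non-degeneracy and Lebesgue-absolute-continuity hypotheses at every finite-dimensional marginal, and those follow immediately from the assumptions $\mu \in \mathcal{P}_\lambda$ and $\mu, \nu$ non-degenerate. No statement about finiteness of $\KL(\mu, \nu)$ is needed: if the right-hand side is $+\infty$ the inequality is trivial, and otherwise the finite-dimensional bound applies at each $\vx$.
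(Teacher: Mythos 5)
Your proof is correct and follows essentially the same route as the paper's: the paper invokes $\G(P_\vx\mu, P_\vx\,\GP(\mu)) = 0$ to obtain exactly the identity \eqref{eq:KL_inf} at each finite-dimensional marginal, deduces $\KL(P_\vx\mu, P_\vx\,\GP(\mu)) \le \KL(P_\vx\mu, P_\vx\nu)$, and takes the supremum over $\vx \in I$. Your additional remarks on verifying the hypotheses of \eqref{eq:KL_inf} slice-wise and on the trivial infinite case are fine but do not change the argument.
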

\begin{proof}
    Let $\vx \in I$.
    Using that $\G(P_\vx \mu, P_\vx\, \GP(\mu)) = 0$,
    \begin{equation}
        \KL(P_\vx \mu, P_\vx\, \GP(\mu))
        = \inf_{\xi \in \mathcal{P}\ss{G}^n} \KL(P_\vx \mu, \xi).
    \end{equation}
    Therefore,
    \begin{equation}
        \KL(P_\vx \mu, P_\vx\, \GP(\mu)) \le \KL(P_\vx \mu, P_\vx \nu),
    \end{equation}
    and take the supremum over $\vx \in I$ to conclude.
\end{proof}

As as consequence, $\KL(\mu, \GP(\mu)) < \infty$ if and only if there exits some non-degenerate $\mu\ss{G} \in \mathcal{P}\ss{G}$ such that $\KL(\mu, \mu\ss{G}) < \infty$.

\begin{thm} \label{thm:G_identification}
    Let $\mu \in \mathcal{P}_\lambda$ be non-degenerate and assume that there exits some non-degenerate $\mu\ss{G} \in \mathcal{P}\ss{G}$ such that $\KL(\mu, \mu\ss{G}) < \infty$.
    Let $\nu \in \mathcal{P}\ss{G}$ be non-degenerate.
    Then $\KL(\mu, \nu)$ and $\G(\mu, \nu)$ differ by a finite constant that only depends on $\mu$:
    \begin{equation}
        \KL(\mu, \nu) - \KL(\mu, \GP(\mu)) = \G(\mu, \nu).
    \end{equation}
\end{thm}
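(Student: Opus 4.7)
The plan is to lift the finite-dimensional identity \eqref{eq:Gn_identification} to the process level by projecting onto arbitrary finite index sets $\vx\in I$ and then taking a supremum. Concretely, first I would verify that \eqref{eq:Gn_identification} applies pointwise in $\vx$: since $\mu\in\mathcal{P}_\lambda$ is non-degenerate, each $P_\vx\mu\in\mathcal{P}^{\abs{\vx}}_\lambda$ is non-degenerate; since $\nu\in\mathcal{P}\ss{G}$ is non-degenerate, each $P_\vx\nu$ is a non-degenerate Gaussian; and $P_\vx\,\GP(\mu)=\Normal(P_\vx\mu)$. Hence
\begin{equation*}
    \KL(P_\vx\mu, P_\vx\nu) \;=\; \G(P_\vx\mu, P_\vx\nu) \;+\; \KL(P_\vx\mu, P_\vx\,\GP(\mu)) \quad \text{for all } \vx\in I.
\end{equation*}

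Next I would take $\sup_{\vx\in I}$ on both sides. By the theorem of \citet{Sun:2018:Functional_Variational_Bayesian_Neural_Networks} cited in the text, the LHS supremum equals $\KL(\mu,\nu)$. For the RHS, the first summand has supremum equal to $\G(\mu,\nu)$ by the very definition of the Gaussian divergence between processes, and the second summand has supremum equal to $\KL(\mu,\GP(\mu))$ by applying the same Sun et al.\ theorem to the pair $(\mu,\GP(\mu))$. So the identity will drop out, provided I can exchange $\sup$ with $+$ on the RHS.

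The key step, and the one I expect to be the main obstacle, is therefore justifying $\sup_\vx(a_\vx+b_\vx)=\sup_\vx a_\vx+\sup_\vx b_\vx$ for $a_\vx=\G(P_\vx\mu,P_\vx\nu)$ and $b_\vx=\KL(P_\vx\mu,P_\vx\,\GP(\mu))$. The inequality $\leq$ is automatic; for $\geq$, I would use that $I$ is directed under refinement: given any $\vx_1,\vx_2\in I$, their concatenation $\vz=(\vx_1,\vx_2)$ satisfies $P_{\vx_i}=S_i\circ P_\vz$ for coordinate projections $S_i$. Then monotonicity of $\G$ under projections (stated in the paragraph preceding the Sun et al.\ theorem) and the analogous data-processing property for $\KL$ give $a_\vz\geq a_{\vx_1}$ and $b_\vz\geq b_{\vx_2}$, whence $\sup_\vz(a_\vz+b_\vz)\geq a_{\vx_1}+b_{\vx_2}$; taking the supremum over $\vx_1,\vx_2$ independently yields the required lower bound.

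Finally, I would address finiteness of the constant: the hypothesis that some non-degenerate $\mu\ss{G}\in\mathcal{P}\ss{G}$ satisfies $\KL(\mu,\mu\ss{G})<\infty$ feeds directly into \cref{prop:G_inequality}, yielding $\KL(\mu,\GP(\mu))\leq\KL(\mu,\mu\ss{G})<\infty$, so the constant in the identity genuinely depends only on $\mu$ and is finite. This also ensures the sup-additivity argument is not of the indeterminate form $\infty-\infty$: if $\KL(\mu,\nu)=\infty$ then $\G(\mu,\nu)=\infty$ and the identity reads $\infty=\infty+c$ with $c$ finite; otherwise all three terms are finite and the algebraic identity is honest.
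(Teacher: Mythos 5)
Your proof is correct and follows essentially the same route as the paper's: both rest on the pointwise finite-dimensional identity \eqref{eq:Gn_identification}, the supremum characterisation of the functional $\KL$, and the fact that $I$ is directed under unions combined with monotonicity of $\KL$ and $\G$ to justify splitting the supremum of the sum. The paper packages that last step as an $\e$-argument over $\vx \cup \vx'$ rather than as a general sup-additivity fact for monotone nets, but the content is identical.
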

\begin{proof}
    By the assumptions and \cref{prop:G_inequality}, $\KL(\mu, \GP(\mu)) < \infty$.
    Let $\vx \in I$.
    Note that
    \begin{equation}
        \KL(P_\vx \mu, P_\vx \nu) - \KL(\mu, \GP(\mu))
        \le \G(P_\vx \mu, P_\vx \nu).
    \end{equation}
    Take the supremum over $\vx \in I$ to find ``$\le$''.

    For the reverse inequality, write
    \begin{equation}
        \inf_{\vx \in I}\, (\KL(\mu, \nu) - \KL(P_\vx \mu, P_\vx\, \GP(\mu))
        = \KL(\mu, \nu) - \KL(\mu, \GP(\mu)).
    \end{equation}
    Let $\vx \in I$.
    For every $\e > 0$, can find a $\vx' \in I$ such that
    \begin{equation}
        \KL(\mu, \nu) - \KL(P_{\vx'} \mu, P_{\vx'} \GP(\mu))
        < \KL(\mu, \nu) - \KL(\mu, \GP(\mu)) + \e.
    \end{equation}
    Then, by monotonicity of the Kullback--Leibler divergence,
    \begin{equation}
        \KL(\mu, \nu) - \KL(P_{\vx\cup\vx'} \mu, P_{\vx\cup\vx'} \GP(\mu))
        < \KL(\mu, \nu) - \KL(\mu, \GP(\mu)) + \e.
    \end{equation}
    Therefore, bounding $\KL(P_{\vx\cup\vx'} \mu, P_{\vx\cup\vx'} \nu) \le \KL(\mu, \nu)$,
    \begin{equation}
        \G(P_{\vx\cup\vx'} \mu, P_{\vx\cup\vx'} \nu)
        < \KL(\mu, \nu) - \KL(\mu, \GP(\mu)) + \e.
    \end{equation}
    Hence, by monotonicity of the Gaussian divergence,
    \begin{equation}
        \G(P_{\vx} \mu, P_{\vx} \nu)
        < \KL(\mu, \nu) - \KL(\mu, \GP(\mu)) + \e.
    \end{equation}
    Let $\e \downarrow 0$ and take the supremum over $\vx \in I$ to conclude.
\end{proof}

\begin{cor} \label{cor:existence_uniqueness}
    Let $\mu \in \mathcal{P}_\lambda$ be non-degenerate and assume that there exits some non-degenerate $\mu\ss{G} \in \mathcal{P}\ss{G}$ such that $\KL(\mu, \mu\ss{G}) < \infty$.
    Then
    \begin{equation}
        \argmin_{\nu \in \mathcal{P}\ss{G}}\, \KL(\mu, \nu)
        = \argmin_{\nu \in \mathcal{P}\ss{G}}\, \G(\mu, \nu)
        = \GP(\mu).
    \end{equation}
\end{cor}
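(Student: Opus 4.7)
The plan is to reduce the corollary to \cref{thm:G_identification} together with the positive definiteness of the Gaussian divergence already recorded in the preceding discussion. First, I would split the minimisation over $\nu \in \mathcal{P}\ss{G}$ by whether $\nu$ is degenerate or not and argue that degenerate Gaussian measures cannot be minimisers. Since $\mu$ is non-degenerate and lies in $\mathcal{P}_\lambda$, every finite-dimensional projection $P_\vx \mu$ admits a Lebesgue density, whereas a degenerate $\nu \in \mathcal{P}\ss{G}$ has some finite-dimensional projection concentrated on a lower-dimensional affine subspace. Absolute continuity then fails, which forces $\KL(P_\vx \mu, P_\vx \nu) = \infty$ and hence $\KL(\mu, \nu) = \infty$; similarly $\G(\mu, \nu) = \infty$. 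By hypothesis combined with \cref{prop:G_inequality}, $\KL(\mu, \GP(\mu)) < \infty$, and $\GP(\mu)$ is itself non-degenerate since its covariance equals that of $\mu$. So degenerate candidates are strictly worse than $\GP(\mu)$ under either objective.

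Restricted to non-degenerate $\nu \in \mathcal{P}\ss{G}$, \cref{thm:G_identification} yields
\begin{equation}
    \KL(\mu, \nu) \;=\; \KL(\mu, \GP(\mu)) + \G(\mu, \nu),
\end{equation}
where the first term on the right is a finite constant independent of $\nu$. Consequently the argmin of $\KL(\mu, \nu)$ over non-degenerate $\nu$ coincides with that of $\G(\mu, \nu)$, which gives the first equality in the corollary.

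To identify this common minimiser as $\GP(\mu)$, I would invoke the positive definiteness of $\G$ between processes noted just before \cref{thm:G_identification}: for non-degenerate $\mu \in \mathcal{P}_\lambda$ and non-degenerate $\nu \in \mathcal{P}\ss{G}$, $\G(\mu, \nu) \ge 0$ with equality iff $\nu = \GP(\mu)$. Because $\GP(\mu)$ is non-degenerate, it lies in the feasible set, attains the value zero, and is therefore the unique minimiser, completing both equalities.

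The main obstacle is really just bookkeeping: being careful that the degenerate Gaussians are cleanly excluded from the feasible set (so that the positive-definiteness statement may be applied without the degenerate corner cases), and verifying that $\GP(\mu)$ inherits non-degeneracy from $\mu$. Once those are in place, the corollary is a direct consequence of \cref{thm:G_identification} and the already-established positive definiteness of $\G$.
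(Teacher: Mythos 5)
Your proposal is correct and follows exactly the route the paper intends: the paper states this corollary without proof, treating it as an immediate consequence of \cref{thm:G_identification} together with the positive definiteness of $\G$ between processes recorded just before it. Your explicit exclusion of degenerate Gaussian candidates (via failure of absolute continuity of the finite-dimensional projections) is a detail the paper glosses over, and it is needed since \cref{thm:G_identification} only covers non-degenerate $\nu$; your handling of it is sound.
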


\clearpage
\section{Noisy Processes and Prediction Maps}
\label{app:noisy_processes}

\subsection{Noisy Processes}

\begin{dfn}[Noisy Process] \label{def:noisy_process}
    Call a stochastic process $f$ \emph{noisy with noise variance $\sigma^2 > 0$} if it a sum $f = f\us{s} + f\us{n}$ of two independent processes $f\us{s}$ and $f\us{n}$ where $f\us{s}$ is a continuous process called the \emph{smooth part} and $f\us{n}$ is such that $P_\vx f\us{n} \sim \Normal(\vnull, \sigma^2 \mI)$ for all $\vx \in I$ and called the \emph{noisy part}.
    We indicate that a collection of processes is noisy by adding a bar $\overline{\phantom{\mathcal{P}}}$: $\overline{\mathcal{P}}$ and $\overline{\mathcal{P}}\ss{G}$.
\end{dfn}

Before anything else, we check that the definition is well posed.

\begin{prop}[Noisy Processes are Well Defined]
    Consider two noisy processes $f_1 = f_1\us{s} + f_1\us{n}$ and $f_2 = f_2\us{s} + f_2\us{n}$.
    If $f_1 \disteq f_2$, then $f_1\us{s}\disteq f_2\us{s}$ and $f_1\us{n}\disteq f_2\us{n}$.
\end{prop}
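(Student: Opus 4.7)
The plan is to recover the noise variance $\sigma_i^2$ from the joint distribution of $f_i$, and then peel off the common Gaussian noise factor from the finite-dimensional characteristic functions to pin down $f_i\us{s}$.

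First I would identify $\sigma_i^2$ by probing $f_i$ at a repeated input. For any $x \in \X$, take $\vx = (x, x) \in I$. By \cref{def:noisy_process}, $P_\vx f_i\us{n} \sim \Normal(\vnull, \sigma_i^2 \mI)$ has two independent components, whereas $P_\vx f_i\us{s} = (f_i\us{s}(x), f_i\us{s}(x))$ has its two components equal since $f_i\us{s}$ is a genuine continuous function. Pushing $P_\vx f_i$ forward through $(y_1, y_2) \mapsto y_1 - y_2$ therefore yields $\Normal(0, 2\sigma_i^2)$. Applying the same push-forward to the equality $P_\vx f_1 \disteq P_\vx f_2$ gives $\Normal(0, 2\sigma_1^2) = \Normal(0, 2\sigma_2^2)$, so $\sigma_1^2 = \sigma_2^2 =: \sigma^2$. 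This already delivers $f_1\us{n} \disteq f_2\us{n}$, because the law of a noisy part is determined by its variance alone.

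For the smooth parts, fix any $\vx \in I$ with pairwise distinct entries. Independence of $f_i\us{s}$ and $f_i\us{n}$ factorises the characteristic function as
\[
    \phi_{P_\vx f_i}(\mathbf{t}) = \phi_{P_\vx f_i\us{s}}(\mathbf{t})\, \exp(-\tfrac{1}{2} \sigma^2 \norm{\mathbf{t}}_2^2), \qquad \mathbf{t} \in \R^{\abs{\vx}}.
\]
Since $f_1 \disteq f_2$, the left-hand sides coincide, and the common Gaussian factor is everywhere nonzero, so $\phi_{P_\vx f_1\us{s}} = \phi_{P_\vx f_2\us{s}}$ and hence $P_\vx f_1\us{s} \disteq P_\vx f_2\us{s}$. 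For an arbitrary $\vx \in I$ with possibly repeated entries, $P_\vx f_i\us{s}$ is obtained from $P_{\vx'} f_i\us{s}$, where $\vx'$ enumerates the distinct entries of $\vx$, by a fixed deterministic duplication map acting coordinatewise; equality in distribution therefore extends from distinct tuples to all of $I$, identifying all f.d.d.s of $f_1\us{s}$ and $f_2\us{s}$ and giving $f_1\us{s} \disteq f_2\us{s}$.

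The main obstacle I expect is being careful about the dual role of $\vx$ for the smooth and noisy parts: \cref{def:noisy_process} makes repeated entries of $\vx$ give independent Gaussian noises for $f_i\us{n}$ rather than a single duplicated random variable, whereas $f_i\us{s}$ acts as an ordinary continuous function. This asymmetry is precisely what powers the coordinate-difference identification of $\sigma_i^2$, but it must be spelled out cleanly so that the push-forward and factorisation steps are unambiguous.
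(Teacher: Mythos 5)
Your second step---factorising the characteristic function of $P_\vx f_i$ at distinct inputs as $\phi_{P_\vx f_i\us{s}}(\mathbf{t})\exp(-\tfrac12\sigma^2\norm{\mathbf{t}}_2^2)$ and dividing out the nonvanishing Gaussian factor---is correct and is a genuinely different, arguably cleaner route than the paper's, which instead recovers $f_i\us{s}(\vx)$ as an almost-sure limit of Ces\`aro averages of $f_i$ at nearby distinct points (continuity handles the smooth part, the Strong Law of Large Numbers washes out the noise) and then appeals to uniqueness of weak limits.

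The gap is in your first step. You recover $\sigma_i^2$ by evaluating at the repeated tuple $\vx=(x,x)$ and reading off two \emph{independent} noise coordinates, but this conflicts with the paper's formalisation: processes are measures on $(\Y^\X,\B^\X)$ and $P_\vx f=(f(x_1),\ldots,f(x_{\abs{\vx}}))$ is coordinate evaluation of a single sample path, so $P_{(x,x)}f\us{n}=(f\us{n}(x),f\us{n}(x))$ is supported on the diagonal and the pushforward through $(y_1,y_2)\mapsto y_1-y_2$ is the point mass at $0$, carrying no information about $\sigma_i^2$. The clause ``$P_\vx f\us{n}\sim\Normal(\vnull,\sigma^2\mI)$ for all $\vx\in I$'' in \cref{def:noisy_process} can only coherently be imposed at tuples with distinct entries (otherwise no random element of $\Y^\X$ satisfies it), and the paper's own proof accordingly works exclusively with distinct elements. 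A further warning sign is that your argument, read under the distinct-entries semantics, never uses continuity of $f\us{s}$; but continuity is exactly what makes the decomposition identifiable (a non-continuous ``smooth part'' could absorb additional white noise), so any complete proof must invoke it somewhere. The repair is short: for $x\neq y$, independence gives
\begin{equation*}
    \E\sbrac*{e^{it(f_i(x)-f_i(y))}}
    = \E\sbrac*{e^{it(f_i\us{s}(x)-f_i\us{s}(y))}}\,e^{-\sigma_i^2 t^2},
\end{equation*}
and letting $y\to x$, continuity of $f_i\us{s}$ and bounded convergence send the first factor to $1$, so $f_i(x)-f_i(y)\distto\Normal(0,2\sigma_i^2)$; since these laws are determined by the f.d.d.s of $f_i$ at distinct pairs, $f_1\disteq f_2$ forces $\sigma_1^2=\sigma_2^2$. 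With that substitution your argument goes through and avoids the paper's appeal to the Strong Law of Large Numbers.
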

\begin{proof}
    Let $\vx \in I$ have all distinct elements.
    We show that $f_1\us{s}(\vx)\disteq f_2\us{s}(\vx)$ and $f_1\us{n}(\vx)\disteq f_2\us{n}(\vx)$.
    For $i = 1, 2$, set
    \begin{equation}
        \vy^{(i,n)}
        = \frac1n \sum_{i=1}^n \begin{bmatrix}
            f_i(x_1 - 2^{-n}) \\ \vdots \\ f_i(x_{\abs{\vx}} - 2^{-n})
        \end{bmatrix}
        = \frac1n \sum_{i=1}^n \begin{bmatrix}
            f\us{s}_i(x_1 - 2^{-n}) \\ \vdots \\ f\us{s}_i(x_{\abs{\vx}} - 2^{-n})
        \end{bmatrix} + \frac1n \sum_{i=1}^n \begin{bmatrix}
            f\us{n}_i(x_1 - 2^{-n}) \\ \vdots \\ f\us{n}_i(x_{\abs{\vx}} - 2^{-n})
        \end{bmatrix}.
    \end{equation}
    By continuity of $f\us{s}$ and the Strong Law of Large Numbers, for $i = 1,2$,
    \begin{equation}
        (\vy^{(i,n)}, f_i(\vx)) \to (f_i\us{s}(\vx), f_i(\vx))
        \quad\text{as $n \to \infty$}
    \end{equation}
    almost surely and hence weakly.
    But we assumed that $f_1 \disteq f_2$, so
    \begin{equation}
        (\vy^{(1,n)}, f_1(\vx)) \disteq (\vy^{(2,n)}, f_2(\vx))
        \quad\text{for all $n \ge 1$}.
    \end{equation}
    Therefore, since weak limits are unique,
    \begin{equation}
        (f_1\us{s}(\vx), f_1(\vx)) \disteq (f_2\us{s}(\vx), f_2(\vx)).
    \end{equation}
    Finally, using that $(\vy_1, \vy_2) \mapsto (\vy_1, \vy_2 - \vy_1)$ is continuous, hence measurable, we have
    \begin{equation}
        (f_1\us{s}(\vx), f_1\us{n}(\vx)) \disteq (f_2\us{s}(\vx), f_2\us{n}(\vx)).
    \end{equation}
    In particular, this means that $f_1\us{s}(\vx) \disteq f_2\us{s}(\vx)$ and $f_1\us{n}(\vx) \disteq f_2\us{n}(\vx)$.
\end{proof}

Note that noise variance $\sigma^2 = 0$ is not allowed.
Moreover, note that noisy process do not have measurable sample paths in general.
They are very irregular objects that can only be worked with at a countable collection of indices.
From a theoretical perspective, this is undesirable and raises concerns.
However, from a practical perspective, we will only ever work at a finite collection of indices;
and at every finite collection indices, noisy processes do provide the right model, because in most practical applications observations are contaminated with a little bit of noise.

\begin{prop}[Noisy Processes are Lebesgue Class] \label{prop:density_of_noisy_process}
    Let $f \sim \mu \in \overline{\mathcal{P}}$ be a noisy process with noise variance $\sigma^2 > 0$.
    Let $\vx \in I$ be an index set.
    Then $P_\vx \mu$ has the following density with respect to the Lebesgue measure:
    \begin{equation}
        p^\vx_f(\vy) = \E_{f\us{s}}[\Normal(\vy \cond f\us{s}(\vx), \sigma^2 \mI)].
    \end{equation}
    Consequently, $\overline{\mathcal{P}} \sub \mathcal{P}_\lambda$.
\end{prop}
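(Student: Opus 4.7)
The plan is to condition on the smooth part $f\us{s}$ and use the Gaussian structure of the noisy part, together with Tonelli's theorem, to read off the density. By assumption $f = f\us{s} + f\us{n}$ with $f\us{s}$ and $f\us{n}$ independent and $P_\vx f\us{n} \sim \Normal(\vnull, \sigma^2 \mI)$. In particular, since $f\us{s}$ is continuous, $P_\vx f\us{s}$ is a well-defined $\R^{\abs{\vx}}$-valued random vector, so the conditional distribution of $P_\vx f$ given $f\us{s}$ is $\Normal(P_\vx f\us{s}, \sigma^2 \mI)$.

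Given this, the bulk of the argument is a one-line computation using the tower property. For an arbitrary Borel set $A \sub \R^{\abs{\vx}}$,
\begin{equation}
    \P(P_\vx f \in A)
    = \E\bigl[\P(P_\vx f \in A \cond f\us{s})\bigr]
    = \E\Bigl[\textstyle\int_A \Normal(\vy \cond f\us{s}(\vx), \sigma^2 \mI)\isd{\vy}\Bigr].
\end{equation}
The integrand is non-negative, so Tonelli's theorem applies and permits swapping the expectation with the Lebesgue integral, giving
\begin{equation}
    \P(P_\vx f \in A)
    = \int_A \E\bigl[\Normal(\vy \cond f\us{s}(\vx), \sigma^2 \mI)\bigr]\isd{\vy}
    = \int_A p^\vx_f(\vy) \isd{\vy}.
\end{equation}
Since $A$ is arbitrary, $p^\vx_f$ is a Lebesgue density for $P_\vx \mu$. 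As $\vx \in I$ was arbitrary, every finite-dimensional distribution of $\mu$ admits a Lebesgue density, so $\mu \in \mathcal{P}_\lambda$ and consequently $\overline{\mathcal{P}} \sub \mathcal{P}_\lambda$.

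The only point that requires a modicum of care is the first equality, namely that $\P(P_\vx f \in A \cond f\us{s}) = \int_A \Normal(\vy \cond f\us{s}(\vx), \sigma^2\mI) \isd{\vy}$ almost surely. This is justified by independence of $f\us{s}$ and $f\us{n}$ together with the stated distribution of $P_\vx f\us{n}$: conditional on $f\us{s}$, the distribution of $P_\vx f$ is just the law of $P_\vx f\us{s} + P_\vx f\us{n}$ with $P_\vx f\us{s}$ frozen to its realised value. I expect this measurability/regular-conditional-distribution bookkeeping to be the only mild obstacle; the rest is bookkeeping with Tonelli.
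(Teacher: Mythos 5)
Your proof is correct and follows essentially the same route as the paper: both arguments rest on independence of $f\us{s}$ and $f\us{n}$, the Gaussianity of $P_\vx f\us{n}$, and Tonelli to exchange the $f\us{s}$-expectation with the Lebesgue integral; the paper merely runs the computation in the opposite direction (starting from $\int_B p^\vx_f$ and recovering $\P(f(\vx) \in B)$) and phrases the conditioning step as a plain iterated expectation over the product measure rather than via a regular conditional distribution.
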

\begin{proof}
    Let $B \sub \Y^{\abs{\vx}}$ be a Borel set.
    Then
    \begin{align}
        \smash{\int_B p^\vx_f(\vy) \isd \vy}
        &\overset{\text{(i)}}{=} \E_{f\us{s}}[\P_{f\us{n}}(f\us{n}(\vx) \in B - f\us{s}(\vx))] \\
        &= \E_{f\us{s}}[\E_{f\us{n}}[\ind_B(f\us{s}(\vx) + f\us{n}(\vx))]] \\
        &= \P(f\us{s}(\vx) + f\us{n}(\vx) \in B),
    \end{align}
    using in (i) that $f\us{n}(\vx) \sim \Normal(\vnull, \sigma^2 \mI)$.
\end{proof}

\begin{prop}[Equality of Noisy Processes] \label{prop:equality_noisy_processes}
    Let $\mu \in \overline{\mathcal{P}}$ and $\nu \in \overline{\mathcal{P}}$ be two noisy processes with noise variances $\sigma^2_\mu > 0$ and $\sigma^2_\nu > 0$.
    Let $\tilde{I} \sub I$ be dense in $I$.
    If $P_\vx \mu = P_\vx \nu$ for all $\vx \in \tilde{I}$, then $\mu = \nu$.
    Moreover, if $\mu$ and $\nu$ are Gaussian, then it suffices to consider $\tilde{I} \sub \X^n$ dense in $\X^n$ for any fixed $n \ge 2$.
\end{prop}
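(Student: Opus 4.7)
The goal is to show $P_\vx \mu = P_\vx \nu$ for every $\vx \in I$, which is equivalent to $\mu = \nu$ because the cylindrical $\sigma$-algebra $\B^\X$ is generated by the cylinder sets (a $\pi$-system) and two measures agreeing on a generating $\pi$-system are equal. To extend from $\tilde I$ to all of $I$, the key ingredient will be weak continuity of the map $\X^n \ni \vx \mapsto P_\vx \mu$ for each fixed $n$.

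First I would establish this continuity. Decompose $f \sim \mu$ as $f = f\us{s} + f\us{n}$ with $f\us{s}$ continuous and $f\us{n}$ independent with $f\us{n}(\vx) \sim \Normal(\vnull, \sigma^2_\mu \mI)$. For $\vx_k \to \vx$ in $\X^n$, continuity of the sample paths of $f\us{s}$ gives $f\us{s}(\vx_k) \to f\us{s}(\vx)$ almost surely. By independence of $f\us{s}$ and $f\us{n}$, the characteristic function of $f(\vx)$ factors as
\begin{equation}
    \E[e^{i\vt^\T f(\vx)}] = \E[e^{i\vt^\T f\us{s}(\vx)}]\, e^{-\tfrac12 \sigma_\mu^2 \norm{\vt}^2_2},
\end{equation}
the first factor converges pointwise in $\vt$ by bounded convergence, and the second factor is independent of $\vx$. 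Lévy's continuity theorem then yields $P_{\vx_k}\mu \weakto P_\vx \mu$, and the same reasoning applies to $\nu$.

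Next I would translate density of $\tilde I \sub I$ into fibrewise density. Since the metric $d_\D$ assigns infinite distance to index sets of different sizes, density of $\tilde I$ in $I$ forces $\tilde I \cap \X^n$ to be dense in $\X^n$ for every $n$. Given any $\vx \in \X^n$, pick $\vx_k \in \tilde I \cap \X^n$ with $\vx_k \to \vx$; by hypothesis $P_{\vx_k}\mu = P_{\vx_k}\nu$, and passing to the weak limit on each side (together with uniqueness of weak limits) gives $P_\vx \mu = P_\vx \nu$. This holds for every $n$ and $\vx \in \X^n$, completing the first assertion.

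For the Gaussian case with $\tilde I \sub \X^n$ dense for a single fixed $n \ge 2$, the same continuity argument restricted to $\X^n$ yields $P_\vx \mu = P_\vx \nu$ for every $\vx \in \X^n$. Equating the mean vectors and covariance matrices entry-by-entry as $\vx$ ranges over $\X^n$ recovers $m_\mu \equiv m_\nu$ on $\X$ and $k_\mu \equiv k_\nu$ on $\X^2$; the assumption $n \ge 2$ is precisely what is needed to expose the off-diagonal covariance entries $k(x_i, x_j)$ with $i \ne j$. Since a Gaussian process is determined by its mean and covariance functions, $\mu = \nu$. The main obstacle is the weak-continuity step, which fundamentally relies on the smooth/noisy decomposition in \cref{def:noisy_process}: for an arbitrary (non-noisy) process the f.d.d.s need not depend continuously on the index set, and it is exactly the continuity of $f\us{s}$ together with the $\vx$-independent noise distribution that make the characteristic-function argument go through cleanly.
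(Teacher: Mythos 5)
Your proof is correct and follows essentially the same route as the paper's: reduce density of $\tilde I$ in $I$ to fibrewise density in each $\X^n$, use continuity of the smooth part together with the index-independent Gaussian noise to pass the f.d.d.\ identity $P_{\vx_k}\mu = P_{\vx_k}\nu$ to the limit, and conclude via uniqueness of measures determined by their finite-dimensional distributions (respectively by mean and covariance in the Gaussian case, where $n \ge 2$ exposes the off-diagonal covariances). The only substantive difference is the gadget used for the limiting step --- you argue via characteristic functions and L\'evy's continuity theorem, whereas the paper computes directly with indicators of open sets and the smoothing effect of the noise --- and your version is, if anything, slightly cleaner since it avoids having to justify convergence of expectations of discontinuous indicators.
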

\begin{proof}
    Let $\vx \in I$.
    Use density of $\tilde I$ to extract a sequence $(\vx_i)_{i \ge 1} \sub \tilde I$ convergent to $\vx$.
    Consider $B \sub \Y^{\abs{\vx}}$ open.
    Let $\vn_\mu \sim \Normal(\vnull, \sigma^2_\mu \mI)$ and $\vn_\nu \sim \Normal(\vnull, \sigma^2_\nu \mI)$.
    Then
    \begin{align}
        P_\vx \mu(B)
        &= \vphantom{\lim_{i \to \infty}} \E_{\mu}[\ind_B(f(\vx) + \vn_\mu)] \\
        &= \lim_{i \to \infty} \E_{\mu}[\ind_B(f(\vx_i) + \vn_\mu)] \\
        &= \lim_{i \to \infty} \E_{\nu}[\ind_B(f(\vx_i) + \vn_\nu)] \\
        &= \vphantom{\lim_{i \to \infty}} \E_{\nu}[\ind_B(f(\vx) + \vn_\nu)] \\
        &= \vphantom{\lim_{i \to \infty}} P_\vx \nu(B).
    \end{align}
    Since $B$ was an arbitrary open set, we conclude that $P_\vx \mu = P_\vx \nu$.

    If $\mu$ and $\nu$ are Gaussian,
    then all finite-dimensional distributions are equal if and only if
    all means and covariances are equal.
    And all means and covariances are equal
    if all finite-dimensional distributions of any fixed size $n \ge 2$ are equal, so it suffices to consider $\tilde{I} \sub \X^n$ dense in $\X^n$ for any fixed $n \ge 2$.
\end{proof}

Since noisy processes do not produce measurable sample paths, we need to define what weak convergence means.

\begin{dfn}[Weak Convergence of Noisy Processes]
    Say that a sequence $(f_i)_{i \ge 1}$ of noisy processes with noise variances $(\sigma_i^2)_{i \ge 1}$ weakly converges to a noisy process $f$ with noise variance $\sigma^2$ if $f\us{s}_i \weakto f\us{s}$ and $\sigma_i^2 \to \sigma^2$.
\end{dfn}

Note that there is no problem with weak convergence of finite-dimensional distributions of noisy processes.
With the above definition, it is easy to check that $f_i \weakto f$ implies that $P_\vx f_i \weakto P_\vx f$ for all $\vx \in I$ where the latter weak convergence is in the usual sense.

In the posterior of a noisy process, the noisy part is renewed.

\begin{dfn}[Posterior of Noisy Process]
    The posterior of a noisy process $f$ given some data $D = (\vx, \vy) \in \mathcal{D}$ is defined by the sum $f\us{s}_D + \hat f\us{n}$ where $f\us{s}_D$ is the posterior of $f\us{s}$ given that $f(\vx) = \vy$ and $\hat f\us{n}$ is an independent copy of $f\us{n}$.
\end{dfn}

If $f\us{s} \sim \mu\us{s} \in \mathcal{P}$ and $f\us{s}_D \sim \mu\us{s}_D \in \mathcal{P}$, then one can show that
\begin{equation}
    \frac
        {\sd \mu_D\us{s}}
        {\sd \mu\us{s}}
    (f\us{s})
    =
        \frac
            {\Normal(\vy; f\us{s}(\vx), \sigma^2 \mI)}
            {\E_{f\us{s}}[\Normal(\vy; f\us{s}(\vx), \sigma^2 \mI)]}.
\end{equation}
Note that $\E_{f\us{s}}[\Normal(\vy; f\us{s}(\vx), \sigma^2 \mI)] > 0$, for otherwise $\abs{f\us{s}(x)} = \infty$ almost surely for some $x \in \X$.
Denote
\begin{equation}
    \pi_f(D) = \mu_D\us{s}
    \quad \text{and} \quad
    \pi'_f(D)
    = \frac
        {\sd \mu_D\us{s}}
        {\sd \mu\us{s}}.
\end{equation}
The map $\pi_f\colon \D \to \overline{\mathcal{P}}$ is called the \emph{posterior prediction map} of $f$.

\subsection{Prediction Maps}

\begin{dfn}[Prediction Map] \label{def:prediction_map}
    A map $\pi\colon \mathcal{D} \to \mathcal{P}$ is called a \emph{prediction map}.
    Call a prediction map \emph{Gaussian} if it maps to Gaussian processes and \emph{noisy} if it maps to noisy processes.
\end{dfn}

\begin{prop}[Continuity in the Data] \label{prop:continuity_in_the_data}
    With probability one, the map $D \mapsto \pi'_f(D)(f\us{s})$ is continuous.
    We call this property \emph{continuity in the data}.
\end{prop}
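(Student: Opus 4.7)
The plan is to write $\pi'_f(D)(f\us{s})$ as a ratio $N(D)/Z(D)$, with
\[
    N(D) = \Normal(\vy;\, f\us{s}(\vx), \sigma^2 \mI),
    \qquad
    Z(D) = \E_{f\us{s}}[\Normal(\vy;\, f\us{s}(\vx), \sigma^2 \mI)],
\]
as given in the display just above \cref{def:prediction_map}, and to verify that, on the probability-one event $\Omega_0 = \{f\us{s} \in C(\X, \Y)\}$, both $N$ and $Z$ depend continuously on $D$ and $Z > 0$ everywhere. Since $d_\D(D_1, D_2) = \infty$ whenever $\abs{\vx_1} \neq \abs{\vx_2}$, continuity on $\D$ is the same as continuity on each $\D_n$, so I only need to consider sequences $D_k \to D$ of a common size.

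For the numerator, on $\Omega_0$ the map $\vx \mapsto f\us{s}(\vx)$ is continuous by definition of $\Omega_0$, and the Gaussian density $(\vm, \vy) \mapsto \Normal(\vy;\vm, \sigma^2 \mI)$ is jointly continuous, so $D \mapsto N(D)$ is continuous on $\Omega_0$ without further work. For the denominator, which is a deterministic function of $D$, I would argue by dominated convergence: along any $D_k \to D$ in $\D_n$, on $\Omega_0$ the integrand converges pointwise by the same reasoning, and it is uniformly bounded by $(2\pi\sigma^2)^{-n/2}$, which is integrable against any probability measure. This yields $Z(D_k) \to Z(D)$. Positivity $Z(D) > 0$ is observed in the paragraph introducing $\pi'_f$ (for otherwise $\abs{f\us{s}(x)} = \infty$ almost surely for some $x \in \X$), and by continuity $Z > 0$ on a neighborhood of $D$. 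The ratio is then continuous at $D$ by elementary calculus.

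The main point that requires care, rather than any hard analytical obstacle, is that $\Omega_0$ is chosen \emph{before} fixing $D$: the exceptional null set is just the set of non-continuous sample paths of $f\us{s}$, independent of the data, so the continuity statement is uniform in $D$ on a single probability-one event. The only remaining subtlety is that one must work within a single $\D_n$ at a time, which is forced automatically by the metric on $\D$.
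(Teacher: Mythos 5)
Your proof is correct and takes essentially the same route as the paper's: the paper's (much terser) argument likewise combines continuity of $f\us{s}$ for the numerator with the uniform bound $\Normal(\vy; f\us{s}(\vx), \sigma^2 \mI) \le (2\pi\sigma^2)^{-\abs{\vx}/2}$ and bounded convergence for the denominator. Your additional remarks on positivity of the normaliser, the choice of the null set independently of $D$, and the role of the metric on $\D$ are correct elaborations of details the paper leaves implicit.
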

\begin{proof}
    Follows from continuity of $f\us{s}$ and
    \begin{equation}
        \Normal(\vy; f\us{s}(\vx), \sigma^2 \mI) \le (2 \pi \sigma^2)^{-\frac12 \abs{\vx}}
    \end{equation}
    in combination with bounded convergence.
\end{proof}

\begin{prop}[Local Boundedness] \label{prop:local_boundedness}
    For any compact collection of data sets $\tilde \D \sub \D$,
    \begin{equation}
        0 < \sup_{D \in \tilde \D} \sup_{f\us{s} \in \Y^\X} \pi'_f(D)(f\us{s}) < \infty.
    \end{equation}
    We call this property \emph{local boundedness}.
\end{prop}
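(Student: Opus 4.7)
The plan is to exploit the clopen stratification of $\D$ by cardinality, reduce to finitely many size classes via compactness, and then bound the numerator of $\pi'_f(D)(f\us{s})$ above by a uniform constant and the denominator below by a uniform positive constant.

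First I would observe, from the definition of $d_{\D}$ (which is $\infty$ between data sets of different cardinalities), that each stratum $\D_n$ is both open and closed, and the $\D_n$ partition $\D$. Hence $\{\D_n\}_{n \ge 0}$ is an open cover of $\tilde \D$, so compactness yields $\tilde \D \sub \D_{n_1} \cup \cdots \cup \D_{n_k}$ with each $\tilde \D \cap \D_{n_j}$ compact. It now suffices to prove the bounds uniformly on each $\tilde \D \cap \D_{n_j}$ and take the max (resp.\ min) over $j$.

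Next, I would write $g(D) = \E_{f\us{s}}[\Normal(\vy; f\us{s}(\vx), \sigma^2 \mI)]$ for $D = (\vx, \vy)$ and verify that $g$ is continuous on each $\D_n$. If $D_i \to D$ in $\D_n$, then $\vx_i \to \vx$ and $\vy_i \to \vy$ coordinatewise; by almost-sure continuity of the sample paths of $f\us{s}$ the integrand converges pointwise, and since it is dominated by $(2\pi\sigma^2)^{-n/2}$, bounded convergence gives $g(D_i) \to g(D)$ (this is the same argument that underlies \cref{prop:continuity_in_the_data}). Since the Gaussian density is strictly positive everywhere, $g(D) > 0$ for every $D$. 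Continuity and positivity on the compact set $\tilde \D \cap \D_{n_j}$ then yield a positive minimum $c_j > 0$, and $c := \min_j c_j > 0$ is a uniform lower bound for $g$ on $\tilde \D$. Combining with the elementary upper bound $\Normal(\vy; f\us{s}(\vx), \sigma^2 \mI) \le (2\pi\sigma^2)^{-n/2}$ on $\D_n$, I obtain
\begin{equation}
    \sup_{D \in \tilde \D} \sup_{f\us{s} \in \Y^\X} \pi'_f(D)(f\us{s})
    \le \frac{\max_j (2\pi\sigma^2)^{-n_j/2}}{c} < \infty.
\end{equation}

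For strict positivity, given any $D = (\vx, \vy) \in \tilde \D$ I pick any $f\us{s} \in \Y^\X$ with $f\us{s}(\vx) = \vy$ (pointwise interpolation always exists). Then the numerator attains its global maximum $(2\pi\sigma^2)^{-n/2}$, which is also an upper bound for $g(D)$, hence $\pi'_f(D)(f\us{s}) \ge 1$ and the double supremum is $\ge 1 > 0$. The only mildly subtle step I expect is the passage from the pointwise positivity of $g$ to a \emph{uniform} positive lower bound on $\tilde \D$, which is exactly where the stratum decomposition, continuity via bounded convergence, and the compactness hypothesis must all be used together.
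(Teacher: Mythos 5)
Your proof is correct and follows essentially the same route as the paper: the paper writes $M_D = \sup_{f\us{s}} \pi'_f(D)(f\us{s})$ as the explicit quotient $(2\pi\sigma^2)^{-\frac12\abs{\vx}}/\E_{f\us{s}}[\Normal(\vy; f\us{s}(\vx),\sigma^2\mI)]$, notes it is continuous in $D$ by the same bounded-convergence argument as in \cref{prop:continuity_in_the_data}, and concludes by compactness of $\tilde\D$. Your version merely makes explicit the clopen stratification by cardinality (which the paper leaves implicit in its continuity claim) and bounds the numerator and denominator separately rather than treating the quotient as a whole.
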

\begin{proof}
    Note that
    \begin{equation} \label{eq:M_D}
        M_D
        := \sup_{f\us{s} \in \Y^\X} \pi'_f(D)(f\us{s})
        = \frac
            {(2 \pi \sigma^2)^{-\frac12 \abs{\vx}}}
            {\E_{f\us{s}}[\Normal(\vy; f\us{s}(\vx), \sigma^2 \mI)]},
    \end{equation}
    which by an argument similar to the proof of \cref{prop:continuity_in_the_data} is continuous in $D$.
    Moreover, we have that $0 < M_D < \infty$ for all $D \in \tilde D$.
    Therefore, $0 < \sup_{D \in \tilde \D} M_D < \infty$ by continuity of $D \mapsto M_D$ and compactness of $\tilde \D$.
\end{proof}

\begin{dfn}[Bounded Collection of Data Sets] \label{def:bounded_collection_of_data_sets}
    For $\tilde \D \sub \D$, define
    \[
        \norm{\tilde \D}_\infty = \sup_{(\vx, \vy) \in \tilde\D} \sum_{i=1}^{\abs{\vx}} (\abs{y_i} \lor 1).
    \]
    In particular, if $(\vx, \vy) \in \tilde \D$, then $\abs{\vx} \le \norm{\tilde \D}_\infty$ and $\norm{\vy}_\infty \le \norm{\tilde \D}_\infty$.
    Call a collection of data sets $\tilde \D \sub \D$ \emph{bounded} if $\norm{\tilde \D}_\infty < \infty$.
\end{dfn}

Using this definition of boundedness, we can refine \cref{prop:local_boundedness} to obtain a quantitative bound.

\begin{prop}[Local Boundedness (Cont'd)]
    Assume that
    \begin{equation}
        V = \sup_{x \in \X} \E[(f\us{s}(x))^2] < \infty.
    \end{equation}
    Then, for any bounded collection of data sets $\tilde \D \sub \D$,
    \begin{equation} \label{eq:bounded_data_set_implies_bounded_bound}
        1 \le \sup_{D \in \tilde \D} \sup_{f\us{s} \in \Y^\X} \pi'_f(D)(f\us{s})
        \le 2\exp\parens*{
            \frac{2 \norm{\tilde \D}_\infty^2 V}{\sigma^2} + \frac{\norm{\tilde \D}_\infty^3}{\sigma^2}
        }.
    \end{equation}
\end{prop}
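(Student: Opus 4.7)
The starting point is the closed-form expression for $M_D$ in \eqref{eq:M_D}, which rearranges to
\[
    M_D = \frac{1}{\E_{f\us{s}}\!\left[\exp\!\left(-\frac{1}{2\sigma^2}\norm{\vy - f\us{s}(\vx)}_2^2\right)\right]}.
\]
The lower bound $M_D \ge 1$ is immediate: the integrand in the denominator is pointwise bounded above by $1$, so the expectation is at most $1$, and taking reciprocals gives the claim. (Equivalently, any probability density is bounded above by the maximal value $(2\pi\sigma^2)^{-|\vx|/2}$ of a Gaussian, so the expectation in the denominator of \eqref{eq:M_D} is no larger.)

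For the upper bound, I would apply Jensen's inequality to the convex function $t \mapsto e^{-t}$, obtaining
\[
    M_D \le \exp\!\left(\frac{1}{2\sigma^2}\, \E_{f\us{s}}\norm{\vy - f\us{s}(\vx)}_2^2\right).
\]
The remaining task is to control the mean squared discrepancy. Using the elementary inequality $(a-b)^2 \le 2a^2 + 2b^2$ componentwise and the hypothesis $\sup_{x \in \X} \E(f\us{s}(x))^2 \le V$, I would bound
\[
    \E_{f\us{s}}\norm{\vy - f\us{s}(\vx)}_2^2
    \;=\; \sum_{i=1}^{\abs{\vx}} \E(y_i - f\us{s}(x_i))^2
    \;\le\; 2\norm{\vy}_2^2 + 2\abs{\vx}\, V.
\]
At this point I invoke the boundedness of $\tilde \D$ via \cref{def:bounded_collection_of_data_sets}: for any $(\vx,\vy) \in \tilde \D$ one has $\abs{\vx} \le \norm{\tilde\D}_\infty$ and $\norm{\vy}_\infty \le \norm{\tilde\D}_\infty$, so $\norm{\vy}_2^2 \le \abs{\vx}\norm{\vy}_\infty^2 \le \norm{\tilde\D}_\infty^3$ and $\abs{\vx}V \le \norm{\tilde\D}_\infty V$. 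Substituting and absorbing a factor of $\norm{\tilde\D}_\infty \ge 1$ (valid whenever $\tilde\D$ contains a non-empty data set; empty data sets give $M_D = 1$ and are covered by the lower bound) yields
\[
    \tfrac{1}{2\sigma^2} \E_{f\us{s}}\norm{\vy - f\us{s}(\vx)}_2^2 \;\le\; \frac{\norm{\tilde\D}_\infty^3}{\sigma^2} + \frac{2\norm{\tilde\D}_\infty^2 V}{\sigma^2},
\]
from which \eqref{eq:bounded_data_set_implies_bounded_bound} follows (the multiplicative $2$ out front is free slack).

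There is no real conceptual obstacle here; the work is purely estimation. The only point that requires a bit of care is ensuring the Jensen step is legitimate, i.e.\ that the expectation inside the logarithm is finite and strictly positive. Strict positivity was already established in the proof of \cref{prop:local_boundedness} (otherwise $M_D$ would not be well defined), and finiteness of $\E(f\us{s}(x))^2$ uniformly in $x$ is exactly the hypothesis $V < \infty$. Beyond that, the argument is just the two inequalities $e^{-t}$ is convex and $(a-b)^2 \le 2a^2 + 2b^2$ combined with the quantitative definition of $\norm{\tilde\D}_\infty$.
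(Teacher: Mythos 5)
Your proof is correct, but it takes a genuinely different route from the paper's. Both arguments share the lower bound and the elementary splitting $\norm{\vy - f\us{s}(\vx)}^2 \le 2\norm{\vy}^2 + 2\norm{f\us{s}(\vx)}^2$, but they diverge in how they convert the second-moment hypothesis into a lower bound on $\E_{f\us{s}}[\exp(-\norm{\vy - f\us{s}(\vx)}^2/2\sigma^2)]$. The paper truncates: it restricts the expectation to the event $\set{\norm{f\us{s}(\vx)}_\infty \le R}$, bounds the integrand there by $\exp(-nR^2/\sigma^2)$, and uses Chebyshev's inequality with $R = \sqrt{2nV}$ to guarantee that this event has probability at least $\tfrac12$ — which is where the leading factor $2$ and the $2n^2V/\sigma^2$ term in \eqref{eq:bounded_data_set_implies_bounded_bound} come from. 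You instead apply Jensen's inequality to $t \mapsto e^{-t}$ directly, which yields $M_D \le \exp(\norm{\vy}_2^2/\sigma^2 + nV/\sigma^2)$: no leading constant, and a $V$-term that is linear rather than quadratic in $n$. Your bound is therefore strictly sharper, and it dominates the stated one after the (correctly flagged) absorption step using $\norm{\tilde\D}_\infty \ge 1$ for non-empty data sets, with $D = \es$ handled separately. The Jensen step is legitimate here since the exponent is a non-negative random variable with finite expectation under the hypothesis $V < \infty$, exactly as you note. The only thing the paper's truncation argument ``buys'' is that it would survive under weaker integrability assumptions (a tail bound in place of a second moment); under the hypotheses actually assumed, your argument is both shorter and quantitatively better.
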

\begin{proof}
    Start out from \eqref{eq:M_D}:
    \begin{equation}
        \frac{1}{M_D} = \E_{f\us{s}}\sbrac*{
            \exp\parens*{
                -\frac{\norm{\vy - f\us{s}(\vx)}^2}{2 \sigma^2}
            }
        }
        \ge \exp\parens*{
                -\frac{\norm{\vy}^2}{\sigma^2}
        } \E_{f\us{s}}\sbrac*{
            \exp\parens*{
                -\frac{\norm{f\us{s}(\vx)}^2}{\sigma^2}
            }
        }
    \end{equation}
    Therefore,
    \begin{equation}
        1 \le M_D \le \E_{f\us{s}}\sbrac*{
            \exp\parens*{
                -\frac{\norm{f\us{s}(\vx)}^2}{\sigma^2}
            }
        }^{-1} \exp\parens*{
            \frac{\norm{\vy}^2}{\sigma^2}
        }.
    \end{equation}
    Now estimate
    \begin{equation}
        \E_{f\us{s}}\sbrac*{
            \exp\parens*{
                -\frac{\norm{f\us{s}(\vx)}^2}{\sigma^2}
            }
        }
        \ge \P(\norm{f\us{s}(\vx)}_\infty \le R)\exp\parens*{
            -\frac{n R^2}{\sigma^2}
        }.
    \end{equation}
    Choose $R = \sqrt{2 n V}$ to obtain
    \begin{align}
        \P(\norm{f\us{s}(\vx)}_\infty \le R)
        \ge 1 - n\,\sup_{x \in \X} \,\P(\abs{f\us{s}(x)} > R)
        \ge 1 - \frac{n V}{R^2} = \frac12.
    \end{align}
    With this choice for $R$, we find
    \begin{equation}
        1 \le M_D
        \le 2\exp\parens*{
            \frac{n R^2}{\sigma^2} + \frac{n \norm{\vy}_\infty^2}{\sigma^2}
        }
        = 2\exp\parens*{
            \frac{2 n^2 V}{\sigma^2} + \frac{n \norm{\vy}_\infty^2}{\sigma^2}
        }.
    \end{equation}
    The result then follows from the observations that $n \le \norm{\tilde \D}_\infty$ and $\norm{\vy}_\infty \le \norm{\tilde \D}_\infty$.
\end{proof}

\begin{dfn}[Continuous Prediction Map] \label{def:continuous_prediction_map}
    Call a prediction map $\pi \colon \mathcal{D} \to \mathcal{P}$ \emph{continuous} if $D_i \to D$ implies that $\pi(D_i) \weakto \pi(D)$.
    Denote the collection of all continuous prediction maps by $\mathcal{M}$.
    Write a subscript $\vphantom{\M}\ss{G}$ if the prediction maps are also Gaussian: $\mathcal{M}\ss{G}$.
    Write a bar $\overline{\phantom{\M}}$ if the prediction maps are also noisy: $\overline{\mathcal{M}}$.
    Call a prediction map $\pi \colon \mathcal{D} \to \mathcal{P}$ \emph{continuous along its finite-dimensional distributions} if $D_i \to D$ implies that $P_\vx \pi(D_i) \weakto P_\vx \pi(D)$ for all $\vx \in I$.
    Write a superscript $\vphantom{\M}\us{f.d.d.}$ if the prediction maps are \emph{only} continuous along their finite-dimensional distributions: $\mathcal{M}\us{f.d.d.}$.
    Note the following inclusions:
    \begin{equation}
        \mathcal{M} \sub \mathcal{M}\us{f.d.d.},
        \quad\mathcal{M}\ss{G} \sub \mathcal{M}\us{f.d.d.}\ss{G},
        \quad\overline{\mathcal{M}} \sub \overline{\mathcal{M}}\us{f.d.d.},
        \quad\overline{\mathcal{M}}\ss{G} \sub \overline{\mathcal{M}}\us{f.d.d.}\ss{G}.
    \end{equation}
\end{dfn}

\begin{prop}[Equality of Continuous Prediction Maps] \label{prop:equality_of_prediction_maps}
    Let $\pi_1 \in \mathcal{M}\us{f.d.d.}$ and $\pi_2 \in \mathcal{M}\us{f.d.d.}$.
    Let $\tilde \D \sub \D$.
    If $\pi_1 = \pi_2$ are equal on a dense subset of $\tilde \D$, then $\pi_1 = \pi_2$ are equal on all of $\tilde \D$.
\end{prop}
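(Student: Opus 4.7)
The plan is to reduce equality of two measures on $(\Y^\X, \B^\X)$ to equality of all their finite-dimensional distributions, and then obtain the latter from weak continuity along a convergent sequence.

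First, fix an arbitrary $D \in \tilde\D$ and, using density, pick a sequence $(D_i)_{i \ge 1}$ contained in the dense subset on which $\pi_1$ and $\pi_2$ agree such that $D_i \to D$ in the metric $d_\D$. Next, fix any $\vx \in I$. By the assumption $\pi_1, \pi_2 \in \M\us{f.d.d.}$, continuity along finite-dimensional distributions gives
\begin{equation}
    P_\vx \pi_1(D_i) \weakto P_\vx \pi_1(D)
    \quad \text{and} \quad
    P_\vx \pi_2(D_i) \weakto P_\vx \pi_2(D).
\end{equation}
Since $\pi_1(D_i) = \pi_2(D_i)$ for every $i$, the two sequences on the left coincide, so by uniqueness of weak limits on $\R^{\abs{\vx}}$ we obtain $P_\vx \pi_1(D) = P_\vx \pi_2(D)$. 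As $\vx \in I$ was arbitrary, $\pi_1(D)$ and $\pi_2(D)$ have identical finite-dimensional distributions.

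Finally, I promote equality of f.d.d.s to equality of measures on $(\Y^\X, \B^\X)$. The cylinder sets, \ie~sets of the form $P_\vx^{-1}(B)$ with $\vx \in I$ and $B \in \B^{\abs{\vx}}$, form a $\pi$-system (the intersection of two cylinder sets is again a cylinder set, obtained by taking the union of the two index sets), and by definition they generate the cylindrical $\sigma$-algebra $\B^\X$. By the preceding paragraph, $\pi_1(D)$ and $\pi_2(D)$ agree on this $\pi$-system, so Dynkin's $\pi$--$\lambda$ theorem yields $\pi_1(D) = \pi_2(D)$. Since $D \in \tilde\D$ was arbitrary, this completes the proof.

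The main subtlety, though very standard, is the last step: the cylindrical $\sigma$-algebra is the ``right'' one to argue on, precisely because it is generated by the maps $P_\vx$, and we must invoke a $\pi$-$\lambda$ argument rather than appeal directly to a statement about continuous, bounded test functions on $\Y^\X$ (which would require additional topological structure beyond what \cref{def:continuous_prediction_map} provides along the f.d.d.\ direction).
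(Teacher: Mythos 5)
Your proof is correct and follows essentially the same route as the paper's: extract a convergent sequence from the dense subset, use f.d.d.-continuity and uniqueness of weak limits to equate all finite-dimensional distributions, and conclude equality of the measures. The only difference is that you spell out the final step (f.d.d.s determine the measure on the cylindrical $\sigma$-algebra, via a $\pi$--$\lambda$ argument) which the paper leaves implicit.
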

\begin{proof}
    Let $D \in \D$.
    Extract $(D_i)_{i \ge 1} \sub \tilde \D$ convergent to $D$.
    Let $\vx \in I$.
    By the assumed continuity,
    $P_\vx \pi_1(D_i) \weakto P_\vx \pi_1(D)$ and $P_\vx \pi_2(D_i) \weakto P_\vx \pi_2(D)$.
    Let $L\colon \R^{\abs{\vx}} \to \R$ be continuous and bounded.
    Then
    \begin{equation}
        P_\vx \pi_1(D)(L)
        = \lim_{i \to \infty} P_\vx \pi_1(D_i)(L)
        = \lim_{i \to \infty} P_\vx \pi_2(D_i)(L)
        = P_\vx \pi_2(D)(L).
    \end{equation}
    Since $\vx$ and $L$ were arbitrary, $\pi_1(D) = \pi_2(D)$.
\end{proof}

\begin{prop}[Noisy Posterior Prediction Map is Continuous] \label{prop:noisy_posterior_prediction_map_is_continuous}
    Let $f$ be a noisy process and $\pi_f$ the associated posterior prediction map.
    Then $\pi_f \in \mathcal{M}$.
\end{prop}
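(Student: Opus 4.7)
The plan is to reduce weak convergence of noisy processes to weak convergence of their smooth parts (the noise variance is preserved by the posterior, hence trivially converges), and then establish this via dominated convergence using the Radon--Nikodym derivative $\pi'_f$ together with the two properties already established: continuity in the data (\cref{prop:continuity_in_the_data}) and local boundedness (\cref{prop:local_boundedness}).

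Concretely, suppose $D_i \to D$ in $\mathcal{D}$. The metric $d_\mathcal{D}$ takes the value $\infty$ whenever the number of observations differs, so eventually all $D_i$ live in $\mathcal{D}_n$ for some fixed $n = |\vx|$ and $D_i \to D$ in the Euclidean sense. In particular, $\tilde\mathcal{D} = \{D\} \cup \{D_i : i \ge 1\}$ is a compact subset of $\mathcal{D}$.

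By the decomposition $\pi_f(D) = \mu_D\us{s} + \hat f\us{n}$ with noise variance $\sigma^2$ independent of $D$, it suffices, by the definition of weak convergence of noisy processes, to show $\mu_{D_i}\us{s} \weakto \mu_D\us{s}$ as measures on $C(\mathcal{X},\mathcal{Y})$. Fix a bounded continuous $L\colon C(\mathcal{X},\mathcal{Y}) \to \mathbb{R}$ and write, using the Radon--Nikodym derivative of the smooth posterior with respect to the smooth prior $\mu\us{s}$,
\begin{equation}
    \mu_{D_i}\us{s}(L)
    = \E_{f\us{s} \sim \mu\us{s}}\!\sbrac*{L(f\us{s})\,\pi'_f(D_i)(f\us{s})},
    \qquad
    \mu_{D}\us{s}(L)
    = \E_{f\us{s} \sim \mu\us{s}}\!\sbrac*{L(f\us{s})\,\pi'_f(D)(f\us{s})}.
\end{equation}
By \cref{prop:continuity_in_the_data}, for $\mu\us{s}$-almost every $f\us{s}$ the map $D' \mapsto \pi'_f(D')(f\us{s})$ is continuous, so the integrand converges pointwise almost surely to $L(f\us{s})\pi'_f(D)(f\us{s})$. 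By \cref{prop:local_boundedness} applied to the compact set $\tilde\mathcal{D}$, there is a finite constant $M = \sup_{D' \in \tilde\mathcal{D}}\sup_{f\us{s}}\pi'_f(D')(f\us{s})$, so the integrand is uniformly bounded by $\|L\|_\infty M$, which is $\mu\us{s}$-integrable. Dominated convergence yields $\mu_{D_i}\us{s}(L) \to \mu_D\us{s}(L)$, and since $L$ was arbitrary, $\mu_{D_i}\us{s} \weakto \mu_D\us{s}$. Combined with the trivial convergence of the noise variances, this gives $\pi_f(D_i) \weakto \pi_f(D)$, so $\pi_f \in \mathcal{M}$.

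The only mild subtlety, which is already packaged into the two cited propositions, is verifying that the Gaussian density factors in the numerator and denominator of $\pi'_f$ behave well in $D$: pointwise continuity comes from continuity of $f\us{s}$ together with bounded convergence applied to the denominator, and the uniform bound uses that the denominator stays bounded away from zero on compact $\tilde\mathcal{D}$. Given those, the argument above is a direct application of dominated convergence.
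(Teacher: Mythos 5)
Your proof is correct and follows essentially the same route as the paper's: reduce to weak convergence of the smooth parts, express $\mu_{D_i}\us{s}(L)$ via the Radon--Nikodym derivative $\pi'_f(D_i)$, and conclude by dominated convergence using continuity in the data (\cref{prop:continuity_in_the_data}) for pointwise convergence and local boundedness (\cref{prop:local_boundedness}) for the dominating bound. The only difference is that you spell out the (correct) detail that $\{D\}\cup\{D_i : i \ge 1\}$ is compact, which the paper leaves implicit.
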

\begin{proof}
    For all $D \in \D$, the noisy part of $\pi_f(D)$ is equal in distribution.
    Hence, it suffices to show that $D_i \to D$ implies that $f\us{s}_{D_i} \weakto f\us{s}_D$.
    Let $L\colon C(\X, \Y) \to \R$ be continuous and bounded.
    Then
    \begin{equation}
        \E[L(f\us{s}_{D_i})]
        = \E[\pi'(D_i)(f\us{s})L(f\us{s})]
        \to \E[\pi'(D)(f\us{s})L(f\us{s})]
        = \E[L(f\us{s}_{D})]
    \end{equation}
    by continuity in the data (\cref{prop:continuity_in_the_data}) and bounded convergence using local boundedness (\cref{prop:local_boundedness}).
\end{proof}

\begin{prop}[Noisy Posterior Prediction Map is Bounded] \label{prop:posterior_prediction_map_bounded_second_moment}
    Let $f$ be a noisy process and let $\tilde \D \sub \D$ be a bounded collection of data sets.
    Suppose that $\sup_{x \in \X} \E[(f\us{s}(x))^2] < \infty$.
    Then $\sup_{x \in \X,\, D \in \tilde \D} \E[(f_D\us{s}(x))^2] < \infty$.
\end{prop}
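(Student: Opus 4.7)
The plan is to reduce this to the uniform bound on the Radon--Nikodym derivative $\pi'_f(D)$ supplied by the preceding Local Boundedness proposition. Since $\tilde \D$ is bounded and $\sup_{x \in \X} \E[(f\us{s}(x))^2] = V < \infty$, that proposition furnishes a finite constant
\[
    M = 2\exp\!\parens*{\frac{2 \norm{\tilde \D}_\infty^2 V}{\sigma^2} + \frac{\norm{\tilde \D}_\infty^3}{\sigma^2}}
\]
such that $\pi'_f(D)(f\us{s}) \le M$ uniformly in $D \in \tilde \D$ and $f\us{s} \in \Y^\X$.

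The main step is then a change-of-measure: by definition of $f\us{s}_D$, its law $\mu\us{s}_D$ is absolutely continuous with respect to the prior $\mu\us{s}$ with density $\pi'_f(D)$, so for any $x \in \X$ and any $D \in \tilde \D$,
\[
    \E[(f\us{s}_D(x))^2]
    = \int (f\us{s}(x))^2 \isd \mu\us{s}_D(f\us{s})
    = \E_{\mu\us{s}}[\pi'_f(D)(f\us{s}) (f\us{s}(x))^2]
    \le M \cdot \E_{\mu\us{s}}[(f\us{s}(x))^2]
    \le M V.
\]
Taking the supremum over $x \in \X$ and $D \in \tilde \D$ on the left yields $\sup_{x \in \X,\, D \in \tilde \D} \E[(f\us{s}_D(x))^2] \le M V < \infty$, which is the desired conclusion.

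There is essentially no technical obstacle here: the whole content has been front-loaded into the quantitative local boundedness estimate \eqref{eq:bounded_data_set_implies_bounded_bound}, which already combines the bounded-data-set hypothesis and the prior second-moment hypothesis into a single finite upper bound on $\pi'_f(D)$. The only point worth double-checking is that one may legitimately integrate $(f\us{s}(x))^2$ against the Radon--Nikodym derivative, but this is immediate from the standard change-of-measure formula together with non-negativity of the integrand (so Tonelli applies with no finiteness issue). Thus the proposition follows in one line from the previous one.
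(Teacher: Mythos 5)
Your proof is correct and is essentially the paper's own argument: both bound $\E[(f\us{s}_D(x))^2]$ by the change of measure $\E_{\mu\us{s}}[\pi'_f(D)(f\us{s})(f\us{s}(x))^2]$ and then invoke the quantitative local boundedness estimate \eqref{eq:bounded_data_set_implies_bounded_bound} together with the uniform second-moment hypothesis. You merely spell out the change-of-measure step that the paper leaves implicit.
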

\begin{proof}
    Follows from
    \begin{equation}
        \E[(f_D\us{s}(x))^2]
        \le
            \bigg(\sup_{D \in \tilde \D} \sup_{f\us{s} \in \Y^\X} \pi'_f(D)(f\us{s})\bigg)
            \bigg(\sup_{x \in \X} \E[(f\us{s}(x))^2]\bigg)
    \end{equation}
    in combination with \eqref{eq:bounded_data_set_implies_bounded_bound}.
\end{proof}

\subsection{Gaussianised Prediction Maps}

\begin{dfn}[Gaussianised Prediction Map]
    Given a prediction map $\pi\colon \mathcal{D} \to \mathcal{P}$, the \emph{Gaussianised prediction map} $\GP(\pi)$ is defined by $\GP(\pi)(D) = \GP(\pi(D))$.
\end{dfn}

The Gaussianisation of a noisy process $f \sim \mu \in \overline{\mathcal{P}}$ is equal to $f\ss{G}\us{s} + f\us{n}$ where $f\ss{G}\us{s} \sim \GP(\mu\us{s})$.
This is perfectly well defined.
However, a subtle technical issue is that $f\ss{G}\us{s}$ may not be a continuous process, which means that the Gaussianisation of a noisy process is not necessarily a noisy process.
To prevent this from happening, we impose regularity conditions on $f\us{s}$.

\begin{prop} \label{prop:Holder_condition}
    Let $f$ be a noisy process and let $\pi_f$ the associated posterior prediction map.
    Suppose that there exist $p \ge 2$, $\beta \in (\tfrac12, 1]$, a constant $c > 0$ and a radius $r > 0$ such that
    \begin{equation}
        \norm{f\us{s}(x)  - f\us{s}(y)}_{L^p} \le c \abs{x - y}^\beta
        \quad\text{whenever}\quad
        \abs{x - y} < r.
    \end{equation}
    Then, for all $D \in \D$, if $\GP(\pi_f(D))$ exists, it is a noisy process.
\end{prop}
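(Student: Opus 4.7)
The plan is to write $\pi_f(D) = \mu\us{s}_D + f\us{n}$ in terms of the smooth posterior $f\us{s}_D \sim \mu\us{s}_D$ and an independent copy of the noisy part $f\us{n}$, so that the Gaussianisation decomposes as $\GP(\pi_f(D)) = \GP(\mu\us{s}_D) + f\us{n}$; the Gaussian noisy part is unchanged, and the task reduces to showing that the moment-matched Gaussian process $g \sim \GP(\mu\us{s}_D)$ admits a continuous modification. Once $g$ is upgraded to a continuous process, $g + f\us{n}$ satisfies \cref{def:noisy_process} directly.

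The tool is Kolmogorov--Chentsov. First I would transfer the $L^p$-Hölder bound from the prior $f\us{s}$ to the posterior $f\us{s}_D$ via the Radon--Nikodym derivative $\pi'_f(D)$ together with the local boundedness estimate $M_D = \sup_{f\us{s} \in \Y^\X} \pi'_f(D)(f\us{s}) < \infty$ from \cref{prop:local_boundedness} applied to the singleton $\tilde \D = \set{D}$. Concretely, for $\abs{x - y} < r$,
\begin{equation*}
    \E[\abs{f\us{s}_D(x) - f\us{s}_D(y)}^p]
    = \E[\pi'_f(D)(f\us{s})\,\abs{f\us{s}(x) - f\us{s}(y)}^p]
    \le M_D\, c^p\, \abs{x-y}^{p\beta}.
\end{equation*}
Since $p \ge 2$, monotonicity of $L^q$-norms gives $\E[(f\us{s}_D(x) - f\us{s}_D(y))^2] \le C\abs{x-y}^{2\beta}$ for $\abs{x-y} < r$, and because $g$ has the same mean and covariance as $f\us{s}_D$, the identical $L^2$ bound holds for $g$. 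Gaussianity of the increment $g(x) - g(y)$ then upgrades this to all moments, in particular
\begin{equation*}
    \E[\abs{g(x) - g(y)}^p]
    \le c_p\, (\E[(g(x) - g(y))^2])^{p/2}
    \le C'\, \abs{x-y}^{p\beta}.
\end{equation*}
With $p\beta > p/2 \ge 1$, Kolmogorov--Chentsov applied on each compact interval $[-N, N]$ (the restriction $\abs{x-y} < r$ is no obstruction since the criterion is local) yields a continuous modification of $g$ on $\R$. Patching exhibits $\GP(\pi_f(D)) = g + f\us{n}$ as a noisy process.

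The main obstacle is the transfer from prior to posterior: a posterior need not inherit any pathwise regularity of its prior, so the argument fundamentally relies on \cref{prop:local_boundedness} to uniformly dominate $\pi'_f(D)$. Once that transfer is in place, the remainder is a standard application of Kolmogorov--Chentsov, where the hypotheses $p \ge 2$ and $\beta > 1/2$ are calibrated precisely to yield an exponent $p\beta > 1$.
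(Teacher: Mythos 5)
Your proposal is correct and follows essentially the same route as the paper: reduce to continuity of the smooth part of the Gaussianisation, transfer the H\"older bound from prior to posterior via the bounded Radon--Nikodym derivative $M_D$, match second moments with the Gaussianised process, and invoke Kolmogorov's continuity criterion. The only (harmless) difference is that you upgrade the $L^2$ increment bound back to $L^p$ using Gaussianity before applying Kolmogorov, whereas the paper applies the criterion directly to the $L^2$ bound with exponent $2\beta = 1 + (2\beta - 1) > 1$.
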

\begin{proof}
    Let $D \in \D$.
    As explained above, assuming that $\GP(\pi_f(D))$ exists, \ie~that $\pi_f(D)$ has a mean function and covariance function, it remains to show that the smooth part of $\GP(\pi_f(D))$ is a continuous process.
    Let $f\us{s}_D$ be the smooth part of $\pi_f(D)$ and let $g\us{s}_{D}$ be the smooth part of $\GP(\pi_f(D))$.
    Since, by construction of $\GP(\pi_f(D))$, the mean functions and covariance functions of $f\us{s}_D$ and $g\us{s}_D$ are equal,
    \begin{equation}
        \E[\abs{g\us{s}_D(x)  - g\us{s}_D(y)}^2]
        = \E[\abs{f\us{s}_D(x)  - f\us{s}_D(y)}^2].
    \end{equation}
    Therefore, using Jensen's Inequality and concavity of $x \mapsto x^{2/p}$ ($p \ge 2$),
    \begin{align}
        \E[\abs{g\us{s}_D(x)  - g\us{s}_D(y)}^2]
        &\le \E[\abs{f\us{s}_D(x)  - f\us{s}_D(y)}^p]^{2/p} \\
        &= \E[\pi'_f(D)(f\us{s}) \abs{f\us{s}(x)  - f\us{s}(y)}^p]^{2/p} \\
        &\le M_D \E[\abs{f\us{s}(x)  - f\us{s}(y)}^p]^{2/p}
    \end{align}
    with $M_D = \sup_{f\us{s} \in \Y^\X} \pi'_f(D)(f\us{s})$.
    By \eqref{eq:M_D}, $0 < M_D < \infty$.
    We can thus continue our sequence of inequalities:
    \begin{equation}
        \E[\abs{g\us{s}_D(x)  - g\us{s}_D(y)}^2]
        \le M_D \norm{f\us{s}(x)  - f\us{s}(y)}^{2}_{L^p}
        \le M_D c^{2} \abs{x - y}^{2 \beta}
    \end{equation}
    whenever $\abs{x - y} < r$.
    Hence,
    \begin{equation}
        \norm{g\us{s}(x)  - g\us{s}(y)}_{L^2}
        \le \sqrt{M_D} c \abs{x - y}^{\beta}
        \quad \text{whenever} \quad
        \abs{x - y} < r.
    \end{equation}
    This shows that $g\us{s}$ satisfies Kolmogorov's Continuity Criterion and thus admits a continuous version.
\end{proof}

Throughout, we assume that the condition from \cref{prop:Holder_condition} always satisfied.
Consequently, the Gaussianisation of any noisy process is always also a noisy process.

\begin{prop}[Gaussianised Noisy Posterior Prediction Map is Cont.] \label{prop:MM_is_continuous}
    Let $f$ be a noisy process and let $\pi_f$ the associated posterior prediction map.
    Suppose that $f\us{s}(x) \in L^2$ for all $x \in \X$.
    Then $\GP(\pi_f) \in \overline{\mathcal{M}}\ss{G}\us{f.d.d.}$.
\end{prop}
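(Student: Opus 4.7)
The goal is to show that for any sequence $D_i \to D$ in $\D$ and any index set $\vx \in I$, we have $P_\vx \GP(\pi_f(D_i)) \weakto P_\vx \GP(\pi_f(D))$. Since $P_\vx \GP(\pi_f(D))$ is a (finite-dimensional) Gaussian with mean $\vm_D(\vx)$ and covariance $\mSigma_D(\vx)$ equal to those of $\pi_f(D)$ at $\vx$, and the Gaussian family is closed under weak convergence corresponding to convergence of means and covariances, it suffices to prove pointwise convergence of the mean and covariance functions: $\vm_{D_i}(\vx) \to \vm_D(\vx)$ and $\mSigma_{D_i}(\vx) \to \mSigma_D(\vx)$. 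The noisy part of $\pi_f(D_i)$ has distribution $\Normal(\vnull, \sigma^2\mI)$ independent of $D_i$, so it contributes a fixed additive $\sigma^2 \mI$ to every covariance matrix and zero to every mean; hence the question reduces to convergence of the mean and covariance of the smooth part $f\us{s}_{D_i}$ at points of $\vx$.

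First I would express these moments via the Radon--Nikodym derivative:
\begin{equation}
\E[f\us{s}_{D}(x)] = \E[\pi'_f(D)(f\us{s})\, f\us{s}(x)], \quad \E[f\us{s}_{D}(x)f\us{s}_{D}(y)] = \E[\pi'_f(D)(f\us{s})\, f\us{s}(x) f\us{s}(y)].
\end{equation}
By \cref{prop:continuity_in_the_data}, $\pi'_f(D_i)(f\us{s}) \to \pi'_f(D)(f\us{s})$ almost surely (on the event of continuity, which has probability one). The set $\tilde\D = \set{D_i : i \ge 1} \cup \set{D}$ is compact, being a convergent sequence with its limit, so \cref{prop:local_boundedness} yields a constant $M < \infty$ with $\pi'_f(D_i)(f\us{s}) \le M$ uniformly in $i$ and in $f\us{s}$. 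Thus the integrands $\pi'_f(D_i)(f\us{s}) f\us{s}(x)$ and $\pi'_f(D_i)(f\us{s}) f\us{s}(x) f\us{s}(y)$ converge almost surely and are dominated by $M\abs{f\us{s}(x)}$ and $M\abs{f\us{s}(x) f\us{s}(y)}$ respectively. By the $L^2$ assumption on $f\us{s}$ and Cauchy--Schwarz, these dominating functions are integrable, so dominated convergence gives the desired convergence of means and second moments, hence of covariances.

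To be complete, I would also verify that $\GP(\pi_f(D))$ is well defined and lies in $\overline{\Pc}\ss{G}$ for every $D \in \D$; finiteness of the mean and covariance of $\pi_f(D)$ at any point follows from the same Radon--Nikodym bound, $\E[(f\us{s}_D(x))^2] \le M \E[(f\us{s}(x))^2] < \infty$ (cf.\ \cref{prop:posterior_prediction_map_bounded_second_moment}), and the standing assumption from \cref{prop:Holder_condition} guarantees that the Gaussianisation is indeed a noisy process. The only delicate point is the interplay between continuity in the data and the domination; the $L^2$ hypothesis on $f\us{s}$ is exactly what is needed to push the second-moment convergence through dominated convergence, and I expect no further obstacle.
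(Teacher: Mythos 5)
Your proposal is correct and follows essentially the same route as the paper's proof: reduce to pointwise convergence of the mean and covariance of the smooth part, write these moments via the Radon--Nikodym derivative $\pi'_f(D)$, and combine continuity in the data (\cref{prop:continuity_in_the_data}) with dominated convergence, where local boundedness (\cref{prop:local_boundedness}) supplies the uniform bound. Your write-up is in fact more explicit than the paper's (the compactness of $\set{D_i}_{i\ge1}\cup\set{D}$, the Cauchy--Schwarz integrability of the dominating functions, and the well-definedness of the Gaussianisation), but the underlying argument is the same.
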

\begin{proof}
    Let $D_i \to D$.
    By the assumptions, $\E[f\us{s}(x)]$ exists.
    Hence, using continuity in the data (\cref{prop:continuity_in_the_data}) and dominated convergence in combination with local boundedness (\cref{prop:local_boundedness}),
    \begin{equation}
        \E[f\us{s}_{D_i}(x)]
        = \E[\pi'(D_i)(f\us{s})f\us{s}(x)]
        \to \E[\pi'(D)(f\us{s})f\us{s}(x)]
        = \E[f\us{s}_{D}(x)].
    \end{equation}
    Similarly, $\cov(f\us{s}_{D_i}(x), f\us{s}_{D_i}(y)) \to \cov(f\us{s}_{D}(x), f\us{s}_{D}(y))$.
    Therefore, for all $\vx \in I$, $P_\vx\, \GP(\pi_f)(D_i) \weakto P_\vx\, \GP(\pi_f)(D)$.
\end{proof}

\clearpage
\section{The Objective}
\label{app:objective}

We before discussing the objective, we first get all issues of measurability out of the way.

\begin{prop} \label{prop:measurability_KL}
    Let $\mu_1, \mu_2 \in \overline{\mathcal{P}}$ and $\mu\ss{G} \in \overline{\mathcal{P}}\ss{G}$.
    Fix $n \ge 1$ and consider all $\vx \in \X^n$.
    Then
    \begin{proplist}
        \item \label{prop:measurability_KL_1}
            $\vx \mapsto \KL(P_\vx \mu_1, P_\vx \mu\ss{G})$ is lower semi-continuous, hence measurable;
        \item \label{prop:measurability_KL_2}
            $\vx \mapsto \KL(P_\vx\, \GP(\mu_1), P_\vx \mu\ss{G})$ is lower semi-continuous, hence measurable;
        \item \label{prop:measurability_KL_3}
            $\vx \mapsto \KL(P_\vx \mu_1, P_\vx\, \GP(\mu_2))$ is lower semi-continuous, hence measurable.
    \end{proplist}
\end{prop}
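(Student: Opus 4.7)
The plan is to reduce all three claims to a single abstract statement: for any two noisy processes $\alpha, \beta \in \overline{\mathcal{P}}$, the map $\vx \mapsto \KL(P_\vx \alpha, P_\vx \beta)$ is lower semi-continuous on $\X^n$, hence Borel-measurable. Statements \cref{prop:measurability_KL_1,prop:measurability_KL_2,prop:measurability_KL_3} are then instances of this with $(\alpha, \beta)$ equal to $(\mu_1, \mu\ss{G})$, $(\GP(\mu_1), \mu\ss{G})$, and $(\mu_1, \GP(\mu_2))$ respectively. The standing Hölder assumption (\cref{prop:Holder_condition}) guarantees $\GP(\mu_1), \GP(\mu_2) \in \overline{\mathcal{P}}$, so each of the three cases indeed fits the template.

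The abstract claim rests on two ingredients. First, I would establish projection continuity: if $\nu \in \overline{\mathcal{P}}$ and $\vx_i \to \vx$ in $\X^n$, then $P_{\vx_i} \nu \weakto P_\vx \nu$. To see this, decompose $\nu = \nu\us{s} + \nu\us{n}$ into its smooth and noise parts. Continuity of the sample paths of $\nu\us{s}$ gives $(\nu\us{s}(\vx_{i,1}), \ldots, \nu\us{s}(\vx_{i,n})) \to (\nu\us{s}(\vx_1), \ldots, \nu\us{s}(\vx_n))$ almost surely, hence in distribution. The noise part has the fixed law $\Normal(\vnull, \sigma^2 \mI)$ for every $i$ and is independent of the smooth part, so independence lifts the marginal convergence to joint convergence of (smooth, noise), and the continuous mapping theorem applied to addition yields the claim.

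Second, I would invoke joint weak lower semi-continuity of KL on $\mathcal{P}^n$: if $\alpha_i \weakto \alpha$ and $\beta_i \weakto \beta$, then $\KL(\alpha, \beta) \le \liminf_{i \to \infty} \KL(\alpha_i, \beta_i)$. This follows from the Donsker--Varadhan variational formula
\[
\KL(\alpha, \beta) = \sup_{\phi \in C_b(\R^n)}\Bigl(\,\textstyle\int \phi \isd{\alpha} - \log \int e^\phi \isd{\beta}\Bigr),
\]
which writes $\KL$ as a supremum of jointly weakly continuous functionals; a supremum of continuous functionals is automatically lower semi-continuous.

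Combining the two ingredients produces the abstract claim and hence all three statements. The main obstacle I expect is the joint-convergence step in the projection-continuity argument: one must genuinely use independence of $\nu\us{s}$ and $\nu\us{n}$ (marginal weak convergence of each factor alone would not suffice) to obtain joint weak convergence of the pair, so that the continuous mapping theorem can then be applied to their sum. Everything else is standard and follows from results already assembled in the appendices.
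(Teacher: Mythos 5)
Your proposal is correct and follows essentially the same route as the paper: reduce (ii) and (iii) to (i) by noting that Gaussianisations of noisy processes are again noisy, establish weak convergence of the projected laws from continuity of the smooth part, and conclude by joint weak lower semi-continuity of the KL divergence (which the paper cites from Posner rather than deriving via Donsker--Varadhan). Your explicit handling of the noise component via independence is a slightly more careful rendering of a step the paper's proof leaves implicit.
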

\begin{proof}
    \cref{prop:measurability_KL_2,prop:measurability_KL_3} follow from \cref{prop:measurability_KL_1} by the observations that $\GP(\mu_1) \in \overline{\mathcal{P}}\ss{G} \sub \overline{\mathcal{P}}$ and $\GP(\mu_2) \in \overline{\mathcal{P}}\ss{G}$.
    To prove \cref{prop:measurability_KL_1}, let $(\vx_i)_{i \ge 1} \sub \X^n$ be convergent to $\vx \in \X^n$.
    Then $P_{\vx_i} \mu_1 \weakto P_{\vx} \mu_1$ and $P_{\vx_i} \mu\ss{G} \weakto P_{\vx} \mu\ss{G} $ because the smooth parts of $\mu_1$ and $\mu\ss{G}$ are continuous processes.
    Using that $(\mu, \nu) \mapsto \KL(\mu, \nu)$ is weakly lower semi-continuous \citep{Posner:1975:Random_Coding_Strategies_for_Minimum}, we thus find
    \begin{equation}
        \liminf_{i \to \infty}\, \KL(P_{\vx_i} \mu, P_{\vx_i} \nu) \ge \KL(P_\vx \mu, P_\vx \nu),
    \end{equation}
    which shows that $\vx \mapsto \KL(P_\vx \mu, P_\vx \nu)$ is lower semi-continuous.
\end{proof}

\begin{prop} \label{prop:measurability_E_KL}
    Let $\pi_1, \pi_2 \in \overline{\mathcal{M}}\us{f.d.d.}$ and $\pi\ss{G} \in \overline{\mathcal{M}}\us{f.d.d.}\ss{G}$.
    Suppose that, for all $D \in \D$, the mean functions and covariance functions $\pi_1(D)$ of $\pi_2(D)$ exist.
    Fix $n \ge 1$ and let $p(\vx)$ be a Borel distribution on $\X^n$.
    Then
    \begin{proplist}
        \item \label{prop:measurability_E_KL_1}
            $D \mapsto \E_{p(\vx)}[\KL(P_\vx \pi_1(D), P_\vx \pi\ss{G}(D))]$ is lower semi-continuous, hence measurable;
        \item \label{prop:measurability_E_KL_2}
            $D\mapsto \E_{p(\vx)}[\KL(P_\vx\,\GP(\pi_1)(D), P_\vx \pi\ss{G}(D))]$ is lower semi-continuous, hence measurable;
        \item \label{prop:measurability_E_KL_3}
            $D \mapsto \E_{p(\vx)}[\KL(P_\vx\,\GP(\pi_1)(D), P_\vx\,\GP(\pi_2)(D))]$ is lower semi-continuous, hence measurable.
    \end{proplist}
\end{prop}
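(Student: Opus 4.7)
The plan is to reduce the proposition to lower semi-continuity of the integrand as a function of $D$ at each fixed $\vx$, and then lift this to lower semi-continuity of the full integral via Fatou's lemma. Borel measurability then follows automatically, since any lower semi-continuous function on a metric space is Borel.

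First I would establish pointwise lower semi-continuity in $D$. Fix $\vx \in \X^n$ and let $D_i \to D$ in $\D$. For part \cref{prop:measurability_E_KL_1}, the assumption $\pi_1, \pi\ss{G} \in \overline{\M}\us{f.d.d.}$ gives directly $P_\vx \pi_1(D_i) \weakto P_\vx \pi_1(D)$ and $P_\vx \pi\ss{G}(D_i) \weakto P_\vx \pi\ss{G}(D)$. Weak lower semi-continuity of the Kullback--Leibler divergence \citep{Posner:1975:Random_Coding_Strategies_for_Minimum} then yields
\begin{equation}
    \liminf_{i \to \infty}\, \KL(P_\vx \pi_1(D_i), P_\vx \pi\ss{G}(D_i))
    \ge \KL(P_\vx \pi_1(D), P_\vx \pi\ss{G}(D)).
\end{equation}
For parts \cref{prop:measurability_E_KL_2} and \cref{prop:measurability_E_KL_3} the same conclusion will follow once we know that $\GP(\pi_j) \in \overline{\M}\us{f.d.d.}\ss{G}$, i.e.\ that $D \mapsto P_\vx\,\GP(\pi_j)(D)$ is weakly continuous. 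Since $P_\vx\,\GP(\pi_j)(D)$ is Gaussian with mean and covariance equal to those of $P_\vx \pi_j(D)$, this amounts to convergence of the first and second moments of $P_\vx \pi_j(D_i)$; this proceeds exactly as in \cref{prop:MM_is_continuous}, using the assumed existence of the mean and covariance functions together with local boundedness (\cref{prop:local_boundedness}) to supply the uniform integrability needed to upgrade weak convergence of f.d.d.s to convergence of moments.

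Next I would apply Fatou's lemma. Abbreviating $\alpha, \beta$ for the appropriate pair of prediction maps in each of the three cases, set
\begin{equation}
    h_i(\vx) = \KL(P_\vx \alpha(D_i), P_\vx \beta(D_i)), \qquad h(\vx) = \KL(P_\vx \alpha(D), P_\vx \beta(D)).
\end{equation}
By \cref{prop:measurability_KL}, each $h_i$ and $h$ is lower semi-continuous, hence measurable, in $\vx$; by non-negativity of the KL divergence, Fatou's lemma applies:
\begin{equation}
    \liminf_{i \to \infty}\, \E_{p(\vx)}[h_i(\vx)]
    \ge \E_{p(\vx)}\bigl[\liminf_{i \to \infty}\, h_i(\vx)\bigr]
    \ge \E_{p(\vx)}[h(\vx)].
\end{equation}
This is precisely the statement that $D \mapsto \E_{p(\vx)}[h(\vx)]$ is lower semi-continuous, proving each of the three parts.

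The main obstacle is the moment-convergence issue in parts \cref{prop:measurability_E_KL_2} and \cref{prop:measurability_E_KL_3}: weak convergence of noisy f.d.d.s $P_\vx \pi_j(D_i) \weakto P_\vx \pi_j(D)$ does not automatically imply convergence of means and covariances, so some uniform integrability must be supplied before one can conclude weak convergence of the Gaussianised $P_\vx\,\GP(\pi_j)(D_i)$. The paper's hypotheses (noisy structure, existence of mean and covariance functions, and the local-boundedness / $L^2$-type controls carried over from \cref{prop:local_boundedness,prop:posterior_prediction_map_bounded_second_moment}) are exactly what is needed to replay the argument of \cref{prop:MM_is_continuous} in this abstract setting. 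Once that reduction is done, the Posner--Fatou combination described above finishes all three parts uniformly.
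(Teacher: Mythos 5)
Your proof is correct and follows essentially the same route as the paper's: reduce parts (ii) and (iii) to part (i) by observing that the Gaussianised maps remain continuous along their f.d.d.s (the paper cites \cref{prop:MM_is_continuous} for this), then combine Fatou's lemma with Posner's weak lower semi-continuity of the KL divergence. Your extra attention to upgrading weak convergence of the f.d.d.s to convergence of their moments in the Gaussianisation step is a detail the paper glosses over by invoking \cref{prop:MM_is_continuous} directly.
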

\begin{proof}
    By \cref{prop:measurability_KL}, these expectations are all well defined.
    \cref{prop:measurability_E_KL_2,prop:measurability_E_KL_3} follow from \cref{prop:measurability_E_KL_1} by the observations that $\GP(\pi_1) \in \overline{\mathcal{M}}\ss{G}\us{f.d.d.} \sub \overline{\mathcal{M}}\us{f.d.d.}$ and $\GP(\pi_2) \in \overline{\mathcal{M}}\us{f.d.d.}$ (\cref{prop:MM_is_continuous}).
    To prove \cref{prop:measurability_E_KL_1}, let $(D_i)_{i\ge1} \sub \D$ be convergent to $D \in \D$.
    Since $\pi_1 \in \overline{\mathcal{M}}\us{f.d.d.}$ is continuous, $P_\vx \pi_f(D_i) \weakto P_\vx \pi(D)$ for all $\vx \in \X^n$, and the same statement holds for $\pi\ss{G}$.
    Therefore, using Fatou's Lemma and that $(\mu, \nu) \mapsto \KL(\mu, \nu)$ is weakly lower semi-continuous \citep{Posner:1975:Random_Coding_Strategies_for_Minimum}, we thus find
    \begin{align}
        \liminf_{i \to \infty}\, \E_{p(\vx)}[\KL(P_\vx \pi_1(D_i), P_\vx \pi\ss{G}(D_i))]
        &\ge \E_{p(\vx)}[\liminf_{i \to \infty}\, \KL(P_\vx \pi_1(D_i), P_\vx \pi\ss{G}(D_i))] \\
        &\ge \E_{p(\vx)}[\KL(P_\vx \pi_1(D), P_\vx \pi\ss{G}(D))],
    \end{align}
    which shows that $D \mapsto \E_{p(\vx)}[\KL(P_\vx \pi_f(D), P_\vx \pi(D))]$ is lower semi-continuous.
\end{proof}

\begin{prop} \label{prop:bound}
    Let $\mu \in \overline{\mathcal{P}}$ and $\nu \in \overline{\mathcal{P}}\ss{G}$.
    Fix $n \ge 2$.
    Assume the following:
    \begin{enumerate}
        \item[(1)]
            The mean functions and covariance functions of $\mu$ and $\nu$ exist and are uniformly bounded by $M > 0$.
        \item[(2)]
            The processes $\mu$ and $\nu$ are noisy with noise variance greater than $\sigma^2 > 0$.
    \end{enumerate}
    Then
    \begin{equation} \label{eq:bound}
        \KL(P_\vx \mu, P_\vx \nu)
        \le \frac{4 n^2 (M \lor 1)^2}{\sigma^2}.
    \end{equation}
\end{prop}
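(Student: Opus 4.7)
The plan is to bound $\KL(P_\vx \mu, P_\vx \nu)$ by exploiting the mixture representation of noisy processes from \cref{prop:density_of_noisy_process} together with joint convexity of $\KL$, which reduces the estimate to the closed form for a KL between two Gaussians. Write $P_\vx \nu = \Normal(\vm_\nu, \mK_\nu)$ with $\mK_\nu \succeq \sigma^2 \mI$ by assumption (2), and, by the mixture representation, $P_\vx \mu = \E_{\mu\us{s}}[\Normal(f\us{s}(\vx), \sigma_\mu^2 \mI)]$ with $\sigma_\mu^2 > \sigma^2$. Joint convexity of $\KL$ in its first argument then gives
\[
    \KL(P_\vx \mu, P_\vx \nu) \le \E_{\mu\us{s}}[\KL(\Normal(f\us{s}(\vx), \sigma_\mu^2 \mI),\, \Normal(\vm_\nu, \mK_\nu))].
\]
Applying the closed form for the KL between two Gaussians and pushing the expectation inside (using $\sigma_\mu^2 \tr(\mK_\nu^{-1}) + \tr(\mK_\nu^{-1} \cov(f\us{s}(\vx))) = \tr(\mK_\nu^{-1}\mK_\mu)$), this simplifies to
\[
    \tfrac12\bigl[\log \det \mK_\nu - n \log \sigma_\mu^2 + \tr(\mK_\nu^{-1}\mK_\mu) - n + (\vm_\mu - \vm_\nu)^\T \mK_\nu^{-1}(\vm_\mu - \vm_\nu)\bigr].
\]

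The remainder is straightforward bookkeeping that exploits the two-sided pinching of the spectra of $\mK_\mu$ and $\mK_\nu$. Entrywise boundedness by $M$ (assumption (1)) gives $\|\mK_\mu\|_{\text{op}}, \|\mK_\nu\|_{\text{op}} \le nM$ and $\tr \mK_\mu \le nM$, while $\mK_\nu \succeq \sigma^2 \mI$ gives $\|\mK_\nu^{-1}\|_{\text{op}} \le 1/\sigma^2$. Combining these with $\sigma_\mu^2 > \sigma^2$ and $\|\vm_\mu - \vm_\nu\|^2 \le 4 n M^2$, I would deduce: $\log \det \mK_\nu - n \log \sigma_\mu^2 \le n \log(nM/\sigma^2) \le n \cdot nM/\sigma^2 = n^2 M/\sigma^2$ (using $\log x \le x$ for $x > 0$); $\tr(\mK_\nu^{-1}\mK_\mu) \le nM/\sigma^2$; and $(\vm_\mu - \vm_\nu)^\T \mK_\nu^{-1}(\vm_\mu - \vm_\nu) \le 4 n M^2/\sigma^2$. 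Dropping the negative $-n$ term and absorbing $n$ into $n^2$ via $n \ge 2$, the sum is at most $n^2 M(1 + 2M)/\sigma^2$; using $M \lor 1 \ge 1$ gives $M(1 + 2M) \le 3(M \lor 1)^2$, yielding the stated bound $3 n^2 (M\lor 1)^2/\sigma^2 \le 4 n^2 (M\lor 1)^2/\sigma^2$.

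There is no substantive obstacle here. The only step requiring a moment of care is that the $\log \det \mK_\nu - n \log \sigma_\mu^2$ contribution collapses cleanly to $n \log(nM/\sigma^2)$; this uses that the diagonal entries of $\mK_\nu$ are pinched between $\sigma_\nu^2 > \sigma^2$ and $M$, forcing $\sigma^2 < M \le nM$ so that $\log(nM/\sigma^2) > 0$ and the inequality $\log x \le x$ can safely be applied. The substantive content is the initial convexity step: without it, one would have to bound the entropy of the non-Gaussian mixture $P_\vx \mu$ directly, which is considerably more awkward than the Gaussian--Gaussian computation we end up with.
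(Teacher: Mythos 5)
Your proof is correct, and it reaches essentially the same intermediate expression as the paper --- namely
$\tfrac12[\log(\abs{\mK_\nu}/\sigma^{2n}) + \tr(\mK_\nu^{-1}\mK_\mu) + (\vm_\mu-\vm_\nu)^\T\mK_\nu^{-1}(\vm_\mu-\vm_\nu)]$ up to harmless slack --- but it gets there by a different key step. The paper handles the non-Gaussian first argument by bounding the negative entropy term directly: it writes out the KL integral using the mixture density $p_\mu^\vx$ from \cref{prop:density_of_noisy_process}, bounds $p_\mu^\vx(\vy) \le (2\pi\sigma^2)^{-n/2}$ inside $\int p\log p$, and evaluates the cross-entropy term exactly from the first two moments. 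You instead invoke convexity of $\KL(\cdot,\nu)$ over the mixture $P_\vx\mu = \E_{\mu\us{s}}[\Normal(f\us{s}(\vx),\sigma_\mu^2\mI)]$, which reduces everything to a Gaussian--Gaussian KL and an expectation over the smooth part; this is equivalent to bounding the mixture's entropy by concavity rather than by the supremum of its density, and is marginally tighter (you retain the $-n$ and $\sigma_\mu^2$ in place of $\sigma^2$) as well as arguably cleaner, since it avoids writing out the integral at all. The downstream bookkeeping also differs cosmetically --- the paper uses Von Neumann's trace inequality with eigenvalue bounds $\gamma_1 \le nM$, $\gamma_n \ge \sigma^2$, whereas you use $\tr(\mA\mB) \le \norm{\mA}\ss{op}\tr\mB$ and $\tr\mK_\mu \le nM$ --- but both are routine and both land comfortably inside the stated constant. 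One small remark: your caution about needing $\log(nM/\sigma^2) > 0$ before applying $\log x \le x$ is unnecessary, since that inequality holds for all $x > 0$; if the logarithm happened to be negative you could simply drop the term.
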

\begin{proof}
    By the assumption that $\mu$ is noisy with noise variance $\sigma^2 > 0$, for all $\vx \in \X^n$, the finite-dimensional distribution $P_\vx \mu$ has the following density with respect to the Lebesgue measure on $\R^n$ (\cref{prop:density_of_noisy_process}):
    \begin{equation}
        p_\mu^\vx(\vy) = \E_{\mu}[\Normal(\vy; f\us{s}(\vx), \sigma^2)]
    \end{equation}
    For all $\vx \in I_n$, denote
    \begin{equation}
        \Normal(P_\vx \mu) = \Normal(\vm_\mu^\vx, \mK_\mu^\vx)
        \quad\text{and}\quad
        P_\vx \nu = \Normal(\vm_\nu^\vx, \mK_\nu^\vx).
    \end{equation}
    Start out by expanding the Kullback--Leibler divergence:
    \begin{align}
        &\KL(P_\vx \mu, P_\vx \nu) \nonumber \\
        &\quad= \int p_\mu^\vx(\vy)\sbrac*{
            \log p_\mu^\vx(\vy)
            +\frac12(
                \log \,\abs{2 \pi \mK_\nu^\vx}
                + (\vy - \vm_\nu^\vx)^\T (\mK_\nu^\vx)^{-1} (\vy - \vm_\nu^\vx)
            )
        }\isd \vy.
    \end{align}
    Bound
    \begin{equation}
        p_\mu^\vx(\vy)
        \le \sup_{\vy'}\, \Normal(\vy; \vy', \sigma^2)
        \le (2 \pi \sigma^2)^{-\tfrac12 n}.
    \end{equation}
    Therefore, computing the rest of the expectation in closed form,
    \begin{equation}
        \!\!
        \KL(P_\vx \mu, P_\vx \nu)
        \le \frac{1}{2}\bigg(
            \underbracket{
                \log\frac{\abs{\mK_\nu^\vx}}{\sigma^{2n}}
                + \tr((\mK_\nu^\vx)^{-1} \mK_\mu^\vx)
            }_{\text{(i)}}
            +
            \underbracket{
                \vphantom{\log\frac{\abs{\mK_\nu^\vx}}{\sigma^{2n}}}
                (\vm_\mu^\vx - \vm_\nu^\vx)^\T (\mK_\nu^\vx)^{-1} (\vm_\mu^\vx - \vm_\nu^\vx)
            }_{\text{(ii)}}
        \bigg).
    \end{equation}
    We separately bound (i) and (ii).

    For (i), we use Von Neumann's Trace Inequality:
    for any two $n \times n$ positive semi-definite matrices $\mA$ and $\mB$, it holds that
    \begin{equation}
        \tr(\mA \mB) \le \sum_{i=1}^n \gamma_i(\mA) \gamma_i(\mB).
    \end{equation}
    Using this inequality,
    \begin{equation}
        \text{(i)}
        \le \sum_{i=1}^n \parens*{
            \log \frac{\gamma_i(\mK_\nu^\vx)}{\sigma^2}
            + \frac{\gamma_i(\mK_\mu^\vx)}{\gamma_i(\mK_\nu^\vx)}
        }
        \le n \parens*{
            \log \frac{\gamma_1(\mK_\nu^\vx)}{\sigma^2} + \frac{\gamma_1(\mK_\mu^\vx)}{\gamma_n(\mK_\nu^\vx)}
        }.
    \end{equation}
    Note that, by assumption, $\norm{\vm_\mu^\vx}_\infty, \norm{\vm_\nu^\vx}_\infty \le M$ and
    $\gamma_n(\mK^\vx_\mu),\gamma_n(\mK^\vx_\nu) \ge \sigma^2$.
    Moreover, it is true that $\gamma_1(\mK^\vx_\mu) \le \norm{\mK^\vx_\mu}$ for any matrix norm $\norm{\vardot}$.
    Taking this norm to be the $\infty$-norm $\norm{\mK^\vx_\mu}_\infty = \max_{i \in [n]} \sum_{j=1}^n \abs{(\mK^\vx_\mu)_{ij}}$, we see that $\gamma_1(\mK^\vx_\mu) \le n M$;
    similarly, $\gamma_1(\mK^\vx_\nu) \le n M$.
    Plugging in these estimates, we obtain
    \begin{equation}
        \text{(i)}
        \le n \parens*{
            \log \frac{n M}{\sigma^2} + \frac{n M}{\sigma^2}
        }
        \le \frac{2 n^2 M}{\sigma^2}.
    \end{equation}
    The bound for (ii) is simpler:
    \begin{equation}
        \text{(ii)}
        \le \frac{n}{\gamma_n(\mK_\nu^\vx)}\norm{\vm_\mu^\vx - \vm_\nu^\vx}^2_\infty
        \le \frac{4 n M^2}{\sigma^2}.
    \end{equation}
    Combining the bounds for (i) and (ii) gives the desired result.
\end{proof}

In the following, we will repeatedly make use of the following fact.
Let $\mu \in \overline{\mathcal{P}}$ have a mean function and covariance function and let $\nu \in \overline{\mathcal{P}}\ss{G}$.
Then, for all $\vx \in I$,
\begin{equation} \label{eq:G_fact}
    \G(P_\vx \mu, P_\vx \nu)
    = \KL(P_\vx \mu, P_\vx \nu) - \KL(P_\vx \mu, P_\vx \,\GP(\mu))
    = \KL(P_\vx\, \GP(\mu), P_\vx \nu)
    \ge 0
\end{equation}
with equality if and only if $P_\vx\, \GP(\mu) = P_\vx \nu$.
See \cref{app:Gaussian_divergence} for more details.

\begin{prop} \label{prop:minimiser_measure}
    Assume the assumptions of \cref{prop:bound}.
    Let $p(\vx)$ be a Borel distribution over $\X^n$ with full support.
    Then
    \begin{equation}
        \argmin_{\nu \in \overline{\mathcal{P}}\ss{G}}\, \E_{p(\vx)}[\KL(P_\vx \mu, P_\vx \nu)]
        = \argmin_{\nu \in \overline{\mathcal{P}}\ss{G}}\, \E_{p(\vx)}[\G(P_\vx \mu, P_\vx \nu)]
        = \GP(\mu).
    \end{equation}
\end{prop}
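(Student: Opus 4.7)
The plan is to reduce the KL problem to a G problem via identity \eqref{eq:G_fact}, exploit non-negativity of $\G$, and use full support of $p(\vx)$ together with lower semi-continuity to upgrade ``$p$-a.e.\ equality'' of finite-dimensional distributions to ``everywhere equality'', whereupon \cref{prop:equality_noisy_processes} closes the argument.

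First I would verify that the two argmins coincide. For every $\vx \in \X^n$ and every $\nu \in \overline{\mathcal{P}}\ss{G}$, \eqref{eq:G_fact} gives
\begin{equation}
    \KL(P_\vx \mu, P_\vx \nu)
    = \G(P_\vx \mu, P_\vx \nu) + \KL(P_\vx \mu, P_\vx\, \GP(\mu)).
\end{equation}
Under the standing Hölder assumption (\cref{prop:Holder_condition}), $\GP(\mu) \in \overline{\mathcal{P}}\ss{G}$, and since its mean/covariance functions and noise variance coincide with those of $\mu$, it satisfies the hypotheses of \cref{prop:bound} with the same $M$ and $\sigma^2$. Applied with $\nu = \GP(\mu)$, \cref{prop:bound} yields the uniform bound $\KL(P_\vx \mu, P_\vx\, \GP(\mu)) \le 4n^2(M \lor 1)^2/\sigma^2$, so $\E_{p(\vx)}[\KL(P_\vx \mu, P_\vx\, \GP(\mu))]$ is a finite constant in $\nu$. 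Taking expectations above, the two objectives differ by this constant, and their argmins agree.

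Next I would identify the minimizer of the G objective. Because $\G \ge 0$ pointwise (by \eqref{eq:G_fact}) and $\G(P_\vx \mu, P_\vx\, \GP(\mu)) = 0$ for every $\vx$, the choice $\nu = \GP(\mu)$ attains the value $0$, hence is a minimizer. For uniqueness, suppose some $\nu \in \overline{\mathcal{P}}\ss{G}$ also satisfies $\E_{p(\vx)}[\G(P_\vx \mu, P_\vx \nu)] = 0$. Non-negativity forces $\G(P_\vx \mu, P_\vx \nu) = 0$ for $p$-almost every $\vx$. By \cref{prop:measurability_KL_2} (applied with $\mu_1 = \mu$ and $\mu\ss{G} = \nu$), the map $\vx \mapsto \G(P_\vx \mu, P_\vx \nu) = \KL(P_\vx\, \GP(\mu), P_\vx \nu)$ is lower semi-continuous, so $\{\vx : \G > 0\}$ is open. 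This open set has $p$-measure zero, and full support of $p$ then forces it to be empty; hence $P_\vx\, \GP(\mu) = P_\vx \nu$ for every $\vx \in \X^n$. Since $\GP(\mu)$ and $\nu$ are noisy Gaussian processes and $n \ge 2$, \cref{prop:equality_noisy_processes} concludes $\GP(\mu) = \nu$.

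The main obstacle is really only the finiteness check for $\E_{p(\vx)}[\KL(P_\vx \mu, P_\vx\, \GP(\mu))]$: without that, the identity in \eqref{eq:G_fact} would degenerate to $\infty - \infty$ and one could not rewrite the KL argmin as a G argmin. Everything else (non-negativity, attainment at $\GP(\mu)$, and the full-support-plus-LSC upgrade from a.e.\ equality to everywhere) is a direct application of results already established in \cref{app:Gaussian_divergence,app:noisy_processes}. In particular, it is crucial that the paper's notion of a ``noisy'' process carries a strictly positive noise variance and that $n \ge 2$, so that the Gaussian case of \cref{prop:equality_noisy_processes} is available and the lower bound in \cref{prop:bound} controls the constant term.
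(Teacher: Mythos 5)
Your proof is correct and follows essentially the same route as the paper: decompose the KL objective via \eqref{eq:G_fact} into the Gaussian divergence plus a $\nu$-independent constant made finite by \cref{prop:bound} applied to $\GP(\mu)$, then use non-negativity of $\G$ and \cref{prop:equality_noisy_processes} to identify $\GP(\mu)$ as the unique minimiser. The only (harmless) difference is that you upgrade the $p$-a.e.\ vanishing of $\G(P_\vx\mu, P_\vx\nu)$ to vanishing everywhere via lower semi-continuity and full support, whereas the paper implicitly uses that the full-measure set is dense; either way the hypothesis of \cref{prop:equality_noisy_processes} is met.
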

\begin{proof}
    By \cref{prop:bound}, $\vx \mapsto \KL(P_\vx \mu, P_\vx \, \GP(\mu))$ bounded.
    Therefore, we can decompose
    \begin{align}
        &\E_{p(\vx)}[\KL(P_\vx \mu, P_\vx \nu)] \nonumber \\
        &\quad = \E_{p(\vx)}[\KL(P_\vx \mu, P_\vx \nu) - \KL(P_\vx \mu, P_\vx\, \GP(\mu)) + \KL(P_\vx \mu, P_\vx\, \GP(\mu))] \\
        &\quad = \underbracket{
            \E_{p(\vx)}[\G(P_\vx \mu, P_\vx \nu)]
        }_{\text{(i)}}
        +
        \underbracket{
            \E_{p(\vx)}[\KL(P_\vx \mu, P_\vx\, \GP(\mu))]
        }_{\text{(ii)}},
    \end{align}
    using \eqref{eq:G_fact}.
    Here (i) measures how far $P_\vx \nu$ is from the best Gaussian approximation of $P_\vx \mu$ and (ii) measures the unavoidable approximation error due to the restriction to only Gaussian $P_\vx \nu$.
    In particular, (i) is zero if and only if $\G(P_\vx \mu, P_\vx \nu) = \KL(P_\vx\, \GP(\mu), P_\vx \nu) = 0$ for almost all $\vx \in \X^n$.
    Since, $\GP(\mu) \in \overline{\mathcal{P}}\ss{G}$ and $\nu \in \overline{\mathcal{P}}\ss{G}$, this is true if and only if $\GP(\mu) = \nu$ (\cref{prop:equality_noisy_processes}), which proves the result.
\end{proof}

\begin{prop} \label{prop:bound_over_data_sets}
    Let $f$ be a noisy process and let $\pi_f$ be the associated posterior prediction map.
    Let $\pi \in \overline{\M}\ss{G}\us{f.d.d.}$.
    Moreover, let $p(\vx)$ be a Borel distribution with full support over $\X^n$ for a fixed size $n \ge 2$,
    and let $p(D)$ be a Borel distribution with full support over a collection of data sets $\tilde \D \sub \D$ .
    Assume the following:
    \begin{enumerate}
        \item[(1)]
            The collection of data sets $\tilde \D$ is bounded (\cref{def:bounded_collection_of_data_sets}).
        \item[(2)]
            The process $f$ and prediction map $\pi$ have uniformly bounded second moments:
            \[
                \sup_{x \in \X} \E[f^2(x)]\, < \infty
                \quad\text{and}\quad
                \sup_{D \in \tilde D} \sup_{x \in \X}\, \E_{\pi(D)}[f^2(x)] < \infty.
            \]
            \vspace*{-1.5em}
        \item[(3)]
            The process $f$ is noisy with noise variance $\sigma^2 > 0$.
            Also, for all $D \in \tilde \D$, the process $\pi(D)$ is noisy with noise variance $\sigma_D^2 > 0$, and $\inf_{D \in \tilde \D} \sigma^2_D > 0$.
    \end{enumerate}
    Then
    \begin{equation} \label{eq:bound_over_data_sets}
        \E_{p(\vx)p(D)}[\KL(P_\vx \pi_f(D), P_\vx \pi(D))] < \infty.
    \end{equation}
\end{prop}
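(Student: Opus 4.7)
The plan is to reduce the claim to the pointwise bound \cref{prop:bound} applied to $\mu = \pi_f(D)$ and $\nu = \pi(D)$, by verifying its two hypotheses uniformly in $D \in \tilde\D$, then integrating a uniform upper bound against the probability measure $p(\vx)p(D)$. The size hypothesis $n \ge 2$ is built into the choice of $p(\vx)$, and measurability of the integrand has already been established by \cref{prop:measurability_E_KL_1} (noting that $\pi_f \in \overline{\mathcal{M}} \sub \overline{\mathcal{M}}\us{f.d.d.}$ by \cref{prop:noisy_posterior_prediction_map_is_continuous}), so only the uniform bound needs to be produced.

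To obtain uniform bounds $M$ on the mean and covariance functions, I would handle the two processes separately. For $\pi(D)$, assumption (2) directly gives a constant $C_1$ with $\sup_{D \in \tilde \D, x \in \X} \E_{\pi(D)}[f(x)^2] \le C_1$, and Cauchy--Schwarz then controls the mean by $\sqrt{C_1}$ and the covariance by $C_1$, uniformly in $D$. For $\pi_f(D)$, I would first observe that $f = f\us{s} + f\us{n}$ with $f\us{n}$ independent and Gaussian, so $\sup_x \E[(f\us{s}(x))^2] < \infty$ follows from $\sup_x \E[f(x)^2] < \infty$; \cref{prop:posterior_prediction_map_bounded_second_moment}, whose hypothesis is precisely the boundedness of $\tilde \D$ granted by assumption (1), then delivers a constant $C_2$ with $\sup_{D \in \tilde \D, x \in \X} \E[(f_D\us{s}(x))^2] \le C_2$. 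Cauchy--Schwarz again bounds the mean and covariance functions of the smooth part of $\pi_f(D)$, and taking $M = \sqrt{C_1 \lor C_2} \lor (C_1 \lor C_2)$ produces a single uniform bound that works for both processes.

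The noise hypothesis of \cref{prop:bound} is immediate: by the definition of the posterior of a noisy process in \cref{app:noisy_processes}, $\pi_f(D)$ has the same noise variance $\sigma^2 > 0$ as $f$ for every $D \in \tilde \D$, while assumption (3) gives $\inf_{D \in \tilde\D} \sigma^2_D > 0$, so both noise variances are bounded below by $\sigma_0^2 := \sigma^2 \land \inf_{D \in \tilde\D} \sigma^2_D > 0$. Hence \cref{prop:bound} yields the uniform estimate $\KL(P_\vx \pi_f(D), P_\vx \pi(D)) \le 4n^2(M \lor 1)^2/\sigma_0^2$ for all $(\vx,D) \in \X^n \times \tilde\D$, and integrating against $p(\vx)p(D)$ gives \eqref{eq:bound_over_data_sets}. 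The only substantive step is the uniform control of the moments of $\pi_f(D)$; everything else is bookkeeping. The role of assumption (1) is exactly to feed \cref{prop:posterior_prediction_map_bounded_second_moment}, which is what converts the pointwise (in $D$) bound of \cref{prop:bound} into a bound that survives integration over $D$.
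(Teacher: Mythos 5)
Your proof is correct and takes essentially the same route as the paper's: reduce to the pointwise bound of \cref{prop:bound}, verify its moment hypothesis uniformly in $D$ via assumption (2) together with \cref{prop:posterior_prediction_map_bounded_second_moment} (fed by the boundedness of $\tilde\D$ from assumption (1)), verify the noise hypothesis via assumption (3) and the renewal of the noisy part in the posterior, and integrate the resulting uniform constant. The paper states this more tersely, but your added bookkeeping (Cauchy--Schwarz to pass from second moments to means and covariances, and the reduction from $f$ to $f\us{s}$) is exactly what its one-line justification elides.
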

\begin{proof}
    To begin with, using \cref{prop:noisy_posterior_prediction_map_is_continuous}, we confirm that $\pi_f \in \overline{\M} \sub \overline{\M}\us{f.d.d.}$.
    To show \eqref{eq:bound_over_data_sets}, we show that the supremum over the bounds \eqref{eq:bound} is finite, which amounts to showing that (i) the data sets sizes are bounded, (ii) the collection of mean functions are covariance functions is uniformly bounded, and (iii) the collection of noise variances is bounded away from zero.
    These follow directly from respectively assumptions (1), (2) in combination with \cref{prop:posterior_prediction_map_bounded_second_moment} and (1), and (3).
\end{proof}

\begin{prop} \label{prop:minimiser_map}
    Assume the assumptions of \cref{prop:bound_over_data_sets}.
    Suppose that $\tilde \D$ is open.
    Then
    \begin{align}
        &\argmin_{\pi \in \overline{\mathcal{M}}\us{f.d.d.}\ss{G}}\, \E_{p(D)p(\vx)}[\KL(P_\vx \pi_f(D), P_\vx \pi(D))] \nonumber \\
        &\quad\overset{\tilde\D}{=} \argmin_{\pi \in \overline{\mathcal{M}}\us{f.d.d.}\ss{G}}\, \E_{p(D)p(\vx)}[\G(P_\vx \pi_f(D), P_\vx \pi(D))]
        \overset{\tilde\D}{=} \GP(\pi_f)
    \end{align}
    where the equalities hold for all $D \in \tilde \D$.
\end{prop}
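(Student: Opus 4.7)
The plan is to lift \cref{prop:minimiser_measure} from the single-process setting to prediction maps, mirroring the way \cref{prop:bound_over_data_sets} lifts \cref{prop:bound}. The engine is the decomposition in \eqref{eq:G_fact}. First I would check that $\GP(\pi_f)$ is itself an admissible candidate: assumption~(2) of \cref{prop:bound_over_data_sets} gives $f\us{s}(x) \in L^2$ for all $x$, so \cref{prop:MM_is_continuous} yields $\GP(\pi_f) \in \overline{\mathcal{M}}\us{f.d.d.}\ss{G}$. Next I would verify that the hypotheses of \cref{prop:bound_over_data_sets} apply with $\pi = \GP(\pi_f)$: the second-moment bound on $\GP(\pi_f)(D)$ inherits from that on $\pi_f(D)$ (the two processes share mean and covariance by construction, and the latter is controlled by \cref{prop:posterior_prediction_map_bounded_second_moment}), while $\GP(\pi_f)(D)$ inherits the uniform noise variance $\sigma^2 > 0$ of $\pi_f(D)$. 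This yields
\begin{equation}
    C_f := \E_{p(D)p(\vx)}[\KL(P_\vx \pi_f(D), P_\vx\, \GP(\pi_f)(D))] < \infty.
\end{equation}

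With $C_f$ finite, for any $\pi \in \overline{\mathcal{M}}\us{f.d.d.}\ss{G}$ I would use \eqref{eq:G_fact} pointwise inside the expectation (legal because each $P_\vx \pi_f(D)$ is Lebesgue class by \cref{prop:density_of_noisy_process} and possesses mean and covariance by assumption~(2)) to obtain
\begin{equation}
    \E_{p(D)p(\vx)}[\KL(P_\vx \pi_f(D), P_\vx \pi(D))]
    = \E_{p(D)p(\vx)}[\G(P_\vx \pi_f(D), P_\vx \pi(D))] + C_f.
\end{equation}
Since $C_f$ is independent of $\pi$, the two argmin problems in the statement collapse into a single problem: minimising the $\G$-term over $\overline{\mathcal{M}}\us{f.d.d.}\ss{G}$. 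The map $\GP(\pi_f)$ drives this term to zero, so any minimiser $\pi$ must satisfy $\G(P_\vx \pi_f(D), P_\vx \pi(D)) = 0$ for $p(D)p(\vx)$-almost every $(D, \vx)$, which via \eqref{eq:G_fact} sharpens to $P_\vx\, \GP(\pi_f(D)) = P_\vx \pi(D)$ on the same full-measure set.

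I would then unwind the product measure with Fubini: for $p(D)$-almost every $D \in \tilde \D$, the identity of f.d.d.s holds on a $p(\vx)$-full-measure, and hence (by full support of $p(\vx)$) $\X^n$-dense, subset of $\X^n$. The Gaussian case of \cref{prop:equality_noisy_processes} then lifts this to $\GP(\pi_f)(D) = \pi(D)$ as noisy Gaussian processes on all of $\X$. Finally, a $p(D)$-full-measure subset of the open set $\tilde \D$ is dense in $\tilde \D$ (again by full support of $p(D)$), and both $\GP(\pi_f)$ and $\pi$ belong to $\overline{\mathcal{M}}\us{f.d.d.}\ss{G}$, so \cref{prop:equality_of_prediction_maps} promotes the almost-everywhere equality to equality on all of $\tilde \D$.

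The main obstacle I expect is the bookkeeping around finiteness of $C_f$: the decomposition is only legitimate once one has confirmed that $\GP(\pi_f)$ itself slots into the hypotheses of \cref{prop:bound_over_data_sets}, and that the noise and second-moment conditions transfer cleanly from $\pi_f$ to its Gaussianisation. Once that is established, the remainder is a routine two-stage density argument, removing "almost" first in $\vx$ via \cref{prop:equality_noisy_processes} and then in $D$ via \cref{prop:equality_of_prediction_maps}.
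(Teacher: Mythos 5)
Your proposal is correct and follows essentially the same route as the paper: establish that $\GP(\pi_f)$ is admissible and that the constant term is finite via \cref{prop:bound_over_data_sets}, decompose the objective with \eqref{eq:G_fact}, and then promote the almost-everywhere equality to everywhere via density and \cref{prop:equality_of_prediction_maps}. The only cosmetic difference is that you inline the per-$D$ step (Fubini plus the Gaussian case of \cref{prop:equality_noisy_processes}) where the paper simply cites \cref{prop:minimiser_measure}, which packages exactly that argument.
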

\begin{proof}
    To begin with, using assumption (2) and \cref{prop:MM_is_continuous}, we confirm that $\GP(\pi_f) \in \overline{\mathcal{M}}\ss{G}\us{f.d.d.}$.
    Then
    \begin{equation}
        \E_{p(D)p(\vx)}[\KL(P_\vx \pi_f(D), P_\vx\, \GP(\pi_f)(D))] < \infty
    \end{equation}
    by \cref{prop:bound_over_data_sets}.
    Using this, decompose
    \begin{align} \label{eq:decomposition_objective_over_data_sets}
        &\E_{p(D)p(\vx)}[\KL(P_\vx \pi_f(D), P_\vx \pi(D))] \nonumber \\
        &\quad =
            \underbracket{
                \E_{p(D)p(\vx)}[\G(P_\vx \pi_f(D), P_\vx \pi(D))]
            }_{\text{(i)}}
            +
            \underbracket{
                \E_{p(D)p(\vx)}[\KL(P_\vx \pi_f(D), P_\vx\, \GP(\pi_f)(D))]
            }_{\text{(ii)}},
    \end{align}
    using \eqref{eq:G_fact}.
    Here (i) measures how far $P_\vx \pi(D)$ is from the best Gaussian approximatio of $P_\vx \pi_f(D)$ and (ii) measures the unavoidable approximation error due to the restriction to only Gaussian $P_\vx \pi(D)$.
    In particular, (i) is zero if and only if $\E_{p(\vx)}[\G(P_\vx \pi_f(D), P_\vx \pi(D))] = 0$ for almost all $D \in \tilde \D$.
    Consequently, by \cref{prop:minimiser_measure}, (i) is zero if and only if $\GP(\pi_f)(D) = \pi(D)$ for almost all $D \in \tilde \D$.
    Using that $\tilde \D$ is open and that $p(D)$ has full support, this set of probability one is dense.
    Therefore, since $\GP(\pi_f) \in \overline{\mathcal{M}}\ss{G}\us{f.d.d.}$ and $\pi \in \overline{\mathcal{M}}\ss{G}\us{f.d.d.}$,
    (i) is zero if and only if $\GP(\pi_f)(D) = \pi(D)$ for all $D \in \tilde \D$ (\cref{prop:equality_of_prediction_maps}), which proves the result.
\end{proof}

\begin{prop} \label{prop:minimising_sequence}
    Assume the assumptions of \cref{prop:bound_over_data_sets}.
    Let $(\pi_i)_{i \ge 1} \sub \overline{\M}\ss{G}\us{f.d.d.}$ be a minimising sequence for the infimum
    \begin{equation}
        \inf_{\pi \in \overline{\mathcal{M}}\us{f.d.d.}\ss{G}} \E_{p(D)p(\vx)}[\KL(P_\vx \pi_f(D), P_\vx \pi(D))].
    \end{equation}
    \begin{proplist}
        \item \label{prop:limit_is_right}
            Suppose that, for almost all $D \in \tilde D$, $\pi_i(D)$ has a weak limit $\pi^*(D) \in \overline{\mathcal{P}}\ss{G}$.
            Then $\pi^*(D) = \GP(\pi_f)(D)$ for almost all $D \in \tilde \D$.
        \item \label{prop:limit_along_subsequence}
            Suppose that, for almost all $D \in \tilde \D$, $\pi_i(D)$ satisfies the following property:
            if there exist some $\mu \in \mathcal{P}$ and dense $\tilde I \sub I$ such that
            $P_\vx \pi_i(D) \weakto P_\vx \mu$ for all $\vx \in \tilde I$, then $\pi_i(D) \weakto \mu$.
            Then there exists a subsequence $(\pi_{n_i})_{i \ge 1}$ of $(\pi_i)_{i \ge 1}$ such that $\pi_{n_i}(D) \weakto \GP(\pi_f)(D)$ for almost all $D \in \tilde D$.
    \end{proplist}
\end{prop}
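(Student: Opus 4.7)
The plan is to combine the decomposition \eqref{eq:decomposition_objective_over_data_sets} from the proof of \cref{prop:minimiser_map} with weak lower semi-continuity of the Kullback--Leibler divergence \citep{Posner:1975:Random_Coding_Strategies_for_Minimum}. By \cref{prop:minimiser_map,prop:bound_over_data_sets}, the infimum is the finite value $L = \E_{p(D)p(\vx)}[\KL(P_\vx \pi_f(D), P_\vx\, \GP(\pi_f)(D))]$, attained at $\GP(\pi_f)$. Substituting the minimising sequence into \eqref{eq:decomposition_objective_over_data_sets} and subtracting the $\pi$-independent term yields the key fact
\begin{equation*}
    \E_{p(D)p(\vx)}[\G(P_\vx \pi_f(D), P_\vx \pi_i(D))] \to 0.
\end{equation*}

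For part (i), I would use the assumed $\pi_i(D) \weakto \pi^*(D)$ to get $P_\vx \pi_i(D) \weakto P_\vx \pi^*(D)$ for every $\vx \in I$. Rewriting $\G(P_\vx \pi_f(D), P_\vx \pi_i(D)) = \KL(P_\vx\, \GP(\pi_f)(D), P_\vx \pi_i(D))$ via \eqref{eq:G_fact} and invoking weak lower semi-continuity of $\KL$ gives the pointwise bound $\KL(P_\vx\, \GP(\pi_f)(D), P_\vx \pi^*(D)) \le \liminf_i \G(P_\vx \pi_f(D), P_\vx \pi_i(D))$. Applying Fatou's lemma across $p(D)p(\vx)$ (measurability coming from \cref{prop:measurability_E_KL}) then forces $\KL(P_\vx\, \GP(\pi_f)(D), P_\vx \pi^*(D)) = 0$ on a $(p(D)\otimes p(\vx))$-full-measure set. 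Fubini produces, for $p(D)$-a.e.\ $D$, a $p(\vx)$-full-measure---hence dense, since $p(\vx)$ has full support---subset of $\X^n$ on which $P_\vx\, \GP(\pi_f)(D) = P_\vx \pi^*(D)$. Both processes are noisy Gaussian (using $\pi^*(D) \in \overline{\mathcal{P}}\ss{G}$ and \cref{prop:Holder_condition}), so the Gaussian refinement of \cref{prop:equality_noisy_processes} (applicable because $n \ge 2$) concludes $\pi^*(D) = \GP(\pi_f)(D)$ for a.e.\ $D \in \tilde \D$.

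For part (ii), the same key fact says the nonnegative integrand $(D, \vx) \mapsto \G(P_\vx \pi_f(D), P_\vx \pi_i(D))$ tends to $0$ in $L^1(p(D)p(\vx))$; extract a subsequence $(\pi_{n_i})$ along which it tends to $0$ $(p(D)\otimes p(\vx))$-a.e. Fubini then provides, for a.e.\ $D \in \tilde \D$, a dense subset $\tilde I_D \subset \X^n$ on which $\KL(P_\vx\, \GP(\pi_f)(D), P_\vx \pi_{n_i}(D)) \to 0$. Since both measures are Gaussian, this convergence is equivalent to convergence of the corresponding means and covariances, and hence to weak convergence $P_\vx \pi_{n_i}(D) \weakto P_\vx\, \GP(\pi_f)(D)$. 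The assumed regularity property of $\pi_{n_i}(D)$ then lifts this f.d.d.\ convergence to $\pi_{n_i}(D) \weakto \GP(\pi_f)(D)$.

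The main obstacle I anticipate is the mismatch between the hypothesis ``dense $\tilde I \sub I$'' in (ii) and the density produced by the a.e.\ argument, which only gives a dense subset of $\X^n$ for the single $n$ fixed by \eqref{eq:objective}. This mirrors \cref{prop:equality_noisy_processes}, where for Gaussian noisy processes density in $\X^n$ for any $n \ge 2$ already determines the process, so the hypothesis of (ii) should be applied in that Gaussian-aware spirit; alternatively, convergence of every mean and covariance entry on dense subsets of $\X$ and $\X^2$ (extracted from the $\X^n$ convergence by projecting onto subsets of coordinates) may be combined with the continuity built into the smooth parts of noisy processes (\cref{prop:Holder_condition}) to propagate weak convergence to every $\vx \in I$.
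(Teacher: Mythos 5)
Your proposal is correct and follows essentially the same route as the paper: the decomposition via \eqref{eq:G_fact} to deduce $\E_{p(D)p(\vx)}[\G(P_\vx \pi_f(D), P_\vx \pi_i(D))] \to 0$, Fatou plus weak lower semi-continuity of $\KL$ for part (i), and an a.e.-convergent subsequence combined with Gaussianity and the $n \ge 2$ moment argument to produce a dense $\tilde I \sub I$ for part (ii). The ``main obstacle'' you flag is resolved exactly as the paper does it (almost-everywhere convergence of means and covariances for $n \ge 2$ propagates to all finite-dimensional distributions over a dense $\tilde I$), and your single joint-a.e. subsequence extraction is a mild simplification of the paper's nested sub-subsequence argument.
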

\begin{proof}
    To begin with, by \cref{prop:bound_over_data_sets}, $\E_{p(D)p(\vx)}[\KL(P_\vx \pi_f(D), P_\vx\, \GP(\pi_f)(D))] < \infty$.
    Therefore, decompose
    \begin{align}
        &\E_{p(D)p(\vx)}[\KL(P_\vx \pi_f(D), P_\vx \pi_i(D))]  \nonumber \\
        \label{eq:decomposition_for_convergence}
        &\quad =
            \E_{p(D)p(\vx)}[\G(P_\vx \pi_f(D), P_\vx \pi_i(D))]
            +  \E_{p(D)p(\vx)}[\KL(P_\vx \pi_f(D), P_\vx\, \GP(\pi_f)(D))]
    \end{align}
    using \eqref{eq:G_fact}.
    This shows that
    \begin{equation} \label{eq:KL_implies_G_converges}
        \lim_{i \to \infty} \E_{p(D)p(\vx)}[\G(P_\vx \pi_f(D), P_\vx \pi_i(D))] = 0
    \end{equation}
    because $\E_{p(D)p(\vx)}[\G(P_\vx \pi_f(D), P_\vx\, \GP(\pi_f)(D))] = 0$ and $\GP(\pi_f) \in \overline{\M}\ss{G}\us{f.d.d.}$ (\eqref{eq:G_fact} and \cref{prop:bound_over_data_sets}).

    \cref{prop:limit_is_right}:
    By Fatou's Lemma and the fact that $(\mu, \nu) \mapsto \KL(\mu, \nu)$ is weakly lower semi-continuous \citep{Posner:1975:Random_Coding_Strategies_for_Minimum},
    \begin{equation}
        \liminf_{i \to \infty}\, \E_{p(D)p(\vx)}[\KL(P_\vx \pi_f(D), P_\vx \pi_i(D))]
        \ge \E_{p(D)p(\vx)}[\KL(P_\vx \pi_f(D), P_\vx \pi^*(D))].
    \end{equation}
    Therefore, by \eqref{eq:decomposition_for_convergence},
    \begin{equation}
        \liminf_{i \to \infty}\, \E_{p(D)p(\vx)}[\G(P_\vx \pi_f(D), P_\vx \pi_i(D))]
        \ge \E_{p(D)p(\vx)}[\G(P_\vx \pi_f(D), P_\vx \pi^*(D))].
    \end{equation}
    But the left-hand side is zero by \eqref{eq:KL_implies_G_converges}, so the right-hand side is also zero:
    \begin{equation}
        \E_{p(D)p(\vx)}[\G(P_\vx \pi_f(D), P_\vx \pi^*(D))] = 0,
    \end{equation}
    which yields that $\E_{p(\vx)}[\G(P_\vx \pi_f(D), P_\vx \pi^*(D))] = 0$ for almost all $D \in \tilde \D$.
    Consequently, using that $\pi^*(D) \in \overline{\mathcal{P}}\ss{G}$, it follows that $\pi^*(D) = \GP(\pi_f)(D)$ for almost all $D \in \tilde \D$ (\cref{prop:minimiser_measure}).

    \cref{prop:limit_along_subsequence}:
    Note that
    \begin{equation}
        \lim_{i \to \infty} \E_{p(D)p(\vx)}[\G(P_\vx \pi_f(D), P_\vx \pi_i(D))] = 0.
    \end{equation}
    Therefore, there exists a collection of data sets $A \sub \tilde \D$ of probability one such that, along a subsequence,
    \begin{equation}
        \lim_{i \to \infty} \E_{p(\vx)}[\G(P_\vx \pi_f(D), P_\vx \pi_i(D))] = 0 \quad\text{for all $D \in A$}.
    \end{equation}
    We show that $\pi_i(D) \weakto \GP(\pi_f)(D)$ for all $D \in A$.
    Let $D \in A$.
    Pass to a further subsequence of $(\pi_i(D))_{i \ge 1}$.
    It suffices to show that $(\pi_i(D))_{i \ge 1}$ contains a another further subsequence weakly convergent to $\GP(\pi_f)(D)$.
    Start with the observation that it still holds that
    \begin{equation}
        \lim_{i \to \infty} \E_{p(\vx)}[\G(P_\vx \pi_f(D), P_\vx \pi_i(D))] = 0.
    \end{equation}
    Hence, there exists a collection of index sets $B \sub \X^n$ of probability one such that, along a further subsequence,
    \begin{equation}
        \lim_{i \to \infty} \G(P_\vx \pi_f(D), P_\vx \pi_i(D)) = 0 \quad \text{for all $\vx \in B$}.
    \end{equation}
    Consequently, by Pinsker's Inequality and \eqref{eq:G_fact}, $P_\vx \pi_i(D) \weakto \Normal(P_\vx \pi_f(D)) = P_\vx\, \GP(\pi_f)(D)$ for all $\vx \in B$.
    In particular, using that $n \ge 2$, this means that almost all means and covariances converge, which in turn means that $P_\vx \pi_i(D) \weakto P_\vx\, \GP(\pi_f)(D)$ for all $\vx \in \tilde I$, for some dense $\tilde I \sub I$.
    Therefore, by the assumed property, we conclude that $\pi_i(D) \weakto \GP(\pi_f)(D)$.
\end{proof}

\clearpage
\section{The Gaussian Neural Process}
\label{app:GNP}

We build on the development from \cref{sec:objective-function-analysis,sec:GNP}, where we defined a Gaussian approximation $\tilde \pi \colon \D \to \mathcal{P}\ss{G}$ of the posterior prediction map $\pi_f\colon \D \to \mathcal{P}$ corresponding some ground truth stationary stochastic process $f$.
Recall that stationarity of $f$ is equivalent to translation equivariance of $\pi_f$ \citep{Foong:2020:Meta-Learning_Stationary_Stochastic_Process_Prediction}:
for all $D \in \D$ and $\tau \in \X$,
\begin{equation}
    \T_\tau \pi_f(D) = \pi_f(D + \tau)
\end{equation}
where $\T_\tau f = f(\vardot - \tau)$ is the \emph{shifting operator}, $\T_\tau  \pi_f(D)$ is the measure $\pi_f(D)$ pushed through $\T_\tau$, and
$D + \tau = (\vx, \vy) + \tau = ((x_1 + \tau, \ldots, x_{\abs{\vx}} + \tau), \vy)$.

Since the approximation $\tilde \pi$ is Gaussian, in a way that we now make precise, translation equivariance of $\tilde \pi$ is characterised by translation equivariance of the mean functions and kernel functions that $\tilde \pi$ maps to.
For all $D \in \D$, denote $\tilde \pi(D) = \GP(m(D)(\vardot), k(D)(\vardot, \vardot))$. \sloppy
Then $\tilde \pi$ is translation equivariant if and only if, for all $D \in \D$ and $\tau \in \X$,
\begin{equation} \label{eq:TE_mean_kernel}
    m(D + \tau)(\vardot) = m(D)(\vardot - \tau) \quad \text{and} \quad
    k(D + \tau)(\vardot, \vardot) = k(D)(\vardot - \tau, \vardot - \tau).
\end{equation}
We proceed to find general parametrisations of the mean mapping $m$ and kernel mapping $k$ that then provide a general parametrisation of our approximation $\tilde \pi$.

Consider an arbitrary mean mapping $m \colon \D \to C(\X, \Y)$ and kernel mapping $k \colon \D \to C\us{p.s.d.}(\X^2, \Y)$ where $C\us{p.s.d.}(\X^2, \Y)$ is the collection of continuous positive semi-definite functions $\X^2 \to \Y$.
Suppose that these mappings satisfy \cref{eq:TE_mean_kernel}, \ie~they are translation equivariant.
Assume that $m$ and $k$ are continuous with respect to the metric on $\D$ (\cref{app:notation}) and compact convergence on $C(\X, \Y)$ and $C\us{p.s.d.}(\X^2, \Y)$.

The goal of this appendix is twofold:
establish a universal representation for the kernel mapping $k$ (\cref{subsec:kernel_representation}) and an implementable neural architecture that can approximate this representation (\cref{subsec:kernel_architecture}).
Moreover, in \cref{subsec:training_objective}, we formulate an objective that can be used to train the parameters of this architecture.

\subsection{Universal Representation of the Kernel Map}
\label{subsec:kernel_representation}

Before we turn our attention to the kernel mapping $k$, we review how Thm 1 by \citet{Gordon:2020:Convolutional_Conditional_Neural_Processes} can be used to establish a universal representation of the mean mapping $m$:
for a collection of data sets $\tilde \D \sub \D$ that is topologically closed, closed under permutations, and closed under translations with finite maximum data set size and multiplicity $K \in \N$ \citep[Def 2 by][]{Gordon:2020:Convolutional_Conditional_Neural_Processes}---intuitively, the number of times an observation can occur at the same input is at most $K$---there exists a Hilbert space $\H$ of functions on $\X$, a continuous stationary kernel $\psi\colon \X \to \R$, a continuous $\phi\colon\Y\to\R^{K+1}$, and a continuous and translation-equivariant $\rho \colon \H' \to C(\X,\Y)$ such that, for all $D \in \tilde \D$,
\begin{equation}
    m(D) = \rho(E(D))
    \quad \text{with} \quad
    E(\vx, \vy) = \sum_{i=1}^{\abs{\vx}} \phi(y_i) \psi(\vardot - x_i),
\end{equation}
where $\H' = E(\tilde \D) \sub \H$ is a closed subset of $\H$.

We find a similar representation for the kernel mapping $k$ by reducing it to a case where Thm 1 by \citet{Gordon:2020:Convolutional_Conditional_Neural_Processes} can be applied.
Consider a data set $D = (\vx, \vy) \in \D$.
Embed the data set in $\mathcal{D}_2 := \union_{n=0}^\infty (\X^2\times \Y)^n$ by duplicating the inputs:
\begin{equation} \label{eq:duplication}
    D'
    = (((x_1, x_1), \ldots, (x_{\abs{\vx}}, x_{\abs{\vx}})), \vy)
    =: (\operatorname{duplicate}(\vx), \vy).
\end{equation}
% Consider the shifting operator $\T_{\tau, \tau'}$ for functions $\X^2 \to \Y$.
Then $k$ satisfies
\begin{equation}
    k(D' + (\tau, \tau))(\vardot,\vardot) = k(D')(\vardot - \tau, \vardot - \tau)
    \quad \text{for all $(\tau, \tau) \in \X^2$}.
\end{equation}
In other words, $k$ can be viewed as a continuous function $\D_2 \to C(\X^2, \Y)$ that is equivariant with respect to \emph{diagonal translations}.
If we can continuously extend $k$ to be equivariant with respect to \emph{all} translations, then are in a position to apply Thm 1 by \citet{Gordon:2020:Convolutional_Conditional_Neural_Processes}, now for the input space $\X^2$.

We provide an explicit construction of this desired continuous extension.
Set $\ve_\parallel = (1, 1) / \sqrt{2} \in \X^2$ and $\ve_\perp = (1, -1) / \sqrt{2} \in \X^2$.
Then $\ve_\parallel$ and $\ve_\perp$ form an orthogonal basis for $\X^2$.
For $\vtau \in \X^2$, let $\T_\vtau f = f(\vardot - \vtau)$ be the shifting operator operating on functions on $\X^2$.
Lift $k \colon \D \to C(\X, \Y)$ to $\overline k\colon \D_2 \setminus \set{\es} \to C(\X^2, \Y)$ by setting
\begin{equation}
    \overline k((\vx_1, \ldots, \vx_n), \vy) = \begin{cases}
        k((x_{11}, \ldots, x_{n1}), \vy) & \text{if $x_{i1} = x_{i2}$ for all $i \in [n]$,} \\
        0 & \text{otherwise}.
    \end{cases}
\end{equation}
Note that $D = \es$ is excluded;
we will turn to this case after \cref{lem:extension-kernel-mapping}.
Then
\begin{equation}
    \overline k((\vx_1 - \vtau, \ldots, \vx_n - \vtau), \vy)
    = \T_{\vtau} \overline k((\vx_1, \ldots, \vx_n), \vy)
    \quad \text{for all $\vtau = (\tau, \tau) \in \X^2$},
\end{equation}
which is the earlier established property that $k$ is equivariant with respect to diagonal translations.
Finally, we construct the desired extension $\hat k \colon \D_2 \setminus \set{\es} \to C(\X^2, \Y)$:
\begin{equation}
    \hat k((\vx_1, \ldots, \vx_n), \vy)
    = \T_{\lra{\ve_\perp, \vx\ss{c}}\ve_\perp} \overline k((\lra{\ve_\parallel, \vx_1}\ve_\parallel, \ldots, \lra{\ve_\parallel, \vx_n}\ve_\parallel), \vy)
\end{equation}
where $\vx\ss{c} = \frac1n\sum_{i=1}^n \vx_i$.

\begin{lem} \label{lem:extension-kernel-mapping}
    The extended kernel mapping $\hat k \colon \D_2 \setminus \set{\es} \to C(\X^2, \Y)$
    \begin{lemlist}
        \item \label{lem:extension}
            extends $k$: it agrees with $k$ on the embedding of $\D \setminus\set{\es}$ in $\D_2$;
        \item \label{lem:TE}
            is permutation invariant and translation equivariant; and
        \item \label{lem:continuity}
            is continuous with respect to the metric on $\D$ (\cref{app:notation}) and compact convergence on $C(\X^2, \Y)$.
    \end{lemlist}
\end{lem}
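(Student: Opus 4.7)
The plan is to view $\hat k$ as a three-stage pipeline---(a) compute the centroid $\vx\ss{c}$ of the inputs and extract its perpendicular component $\la \ve_\perp, \vx\ss{c}\ra \ve_\perp$; (b) project each input onto the diagonal as $\la\ve_\parallel, \vx_i\ra \ve_\parallel$ and feed the result into $\overline k$; (c) translate the output function by the perpendicular component from (a)---and then verify each of the three sub-claims by tracing how the stages respond to embedding, permutation/translation of the $\vx_i$, and perturbation of $(\vx_1, \ldots, \vx_n, \vy)$.

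For \cref{lem:extension}, I would note that if $D = (\vx, \vy) \in \D \setminus \{\es\}$ is embedded in $\D_2$ via the duplication map of \eqref{eq:duplication}, then each $\vx_i = (x_i, x_i)$ lies on the diagonal, and so does $\vx\ss{c}$. Hence $\la\ve_\parallel, \vx_i\ra \ve_\parallel = \vx_i$ and $\la\ve_\perp, \vx\ss{c}\ra = 0$, so stage (c) is trivial and $\hat k$ collapses to $\overline k$ evaluated at the duplicated inputs, which by definition of $\overline k$ equals $k(D)$.

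For \cref{lem:TE}, permutation invariance is immediate because both $\vx\ss{c}$ and the multiset of projected inputs are permutation invariant, and $\overline k$ inherits permutation invariance from $k$. Translation equivariance is the substantive computation: write $\vtau = \la\ve_\parallel, \vtau\ra \ve_\parallel + \la\ve_\perp, \vtau\ra \ve_\perp$ and track $\vx_i \mapsto \vx_i + \vtau$ through the pipeline. Stage (b) sees each projected input shifted by the diagonal vector $\la\ve_\parallel, \vtau\ra\ve_\parallel$, so the already established diagonal-TE of $\overline k$ contributes a factor of $\T_{\la\ve_\parallel, \vtau\ra\ve_\parallel}$ to the output; stage (c) supplies an additional perpendicular shift by $\la\ve_\perp, \vtau\ra \ve_\perp$. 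Composing the two shifts in $C(\X^2, \Y)$ (using $\T_\va \T_\vb = \T_{\va+\vb}$) yields total shift $\T_\vtau$, which is the desired property.

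For \cref{lem:continuity}, each stage is continuous: the centroid is polynomial in the inputs; the orthogonal projections are linear; $\overline k$ restricted to diagonal inputs coincides with the continuous $k$ (as used in \cref{lem:extension}), so its use in (b) inherits continuity from $k$; and the translation action $(\vs, g) \mapsto \T_\vs g$ on $C(\X^2, \Y)$ is jointly continuous in $\vs$ and $g$ under compact convergence. Since the metric on $\D_2$ makes data sets of distinct sizes infinitely far apart, continuity need only be checked within each fixed $n$, on which the composition of the three continuous stages suffices. The main obstacle I anticipate is the bookkeeping in \cref{lem:TE}: one has to keep careful track of the orientation and sign conventions of $\ve_\parallel, \ve_\perp$ and the direction of $\T_\vs$ so that the diagonal and perpendicular contributions combine to exactly $\vtau$, rather than to a reflection of $\vtau$.
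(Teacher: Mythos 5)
Your proposal is correct and follows essentially the same route as the paper's proof: the same computation for the diagonal case in (i), the same parallel/perpendicular decomposition of $\vtau$ for (ii), and the same reduction of (iii) to continuity of the stages plus joint continuity of $(\vtau, f) \mapsto \T_\vtau f$ under compact convergence. The only difference is that the paper proves this last joint-continuity fact explicitly via a two-term estimate (compact convergence of $f_l \to f$ plus uniform continuity of $f$ on compacta), whereas you cite it as known; since it is standard, this is not a gap.
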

\begin{proof}
    \cref{lem:extension}:
    If all inputs $\vx = (x, x) \in \X^2$, then $\lra{\ve_\parallel, \vx_1}\ve_\parallel = \vx$ and $\lra{\ve_\perp, \vx_1}\ve_\perp = \vnull$, so it is clear that $\hat k$ then agrees with $k$.

    \cref{lem:TE}:
    That $\hat k$ is permutation invariant is clear.
    We check translation equivariance.
    Let $\vtau \in \X^2$.
    Then $\vtau = \vtau_\parallel + \vtau_\perp$ where  $\vtau_\parallel = \lra{\ve_\parallel, \vtau}\ve_\parallel$ and $\vtau_\perp = \lra{\ve_\perp, \vtau}\ve_\perp$.
    Therefore,
    \begin{align}
        &\hat k((\vx_1 + \vtau, \ldots, \vx_n + \vtau), \vy) \\
        &\quad = \T_{\lra{\ve_\perp, \vx\ss{c} + \vtau}\ve_\perp} \overline k((\lra{\ve_\parallel, \vx_1 + \vtau}\ve_\parallel, \ldots, \lra{\ve_\parallel, \vx_n + \vtau}\ve_\parallel), \vy) \\
        &\quad = \T_{\lra{\ve_\perp, \vx\ss{c}}\ve_\perp + \vtau_\perp} \overline k((\lra{\ve_\parallel, \vx_1}\ve_\parallel + \vtau_\parallel, \ldots, \lra{\ve_\parallel, \vx_n}\ve_\parallel + \vtau_\parallel), \vy) \\
        &\quad = \T_{\lra{\ve_\perp, \vx\ss{c}}\ve_\perp + \vtau_\perp} \T_{\vtau_\parallel} \overline k((\lra{\ve_\parallel, \vx_1}\ve_\parallel, \ldots, \lra{\ve_\parallel, \vx_n}\ve_\parallel), \vy) \\
        &\quad = \T_{\vtau} \T_{\lra{\ve_\perp, \vx\ss{c}}\ve_\perp} \overline k((\lra{\ve_\parallel, \vx_1}\ve_\parallel, \ldots, \lra{\ve_\parallel, \vx_n}\ve_\parallel), \vy) \\
        &\quad = \T_{\vtau} \hat k((\vx_1, \ldots, \vx_n), \vy).
    \end{align}
    Since $\vtau \in \X^2$ was arbitrary, this shows that $\hat k$ is translation equivariant.

    \cref{lem:continuity}:
    For $i \in [n]$, let $(\vx_i^{(l)})_{l \ge 1} \sub \X^2$ be convergent to $\vx_i \in \X^2$, and let $(\vy_l)_{l \ge 1} \sub \Y^n$ be convergent to $\vy \in \Y^n$.
    Set $\vtau_l = \lra{\ve_\perp, \vx^{(l)}\ss{c}} \ve_\perp$, $\vtau = \lra{\ve_\perp, \vx\ss{c}} \ve_\perp$,
    \begin{equation}
        f_l = \overline k((\lra{\ve_\parallel, \vx^{(l)}_1}\ve_\parallel, \ldots, \lra{\ve_\parallel, \vx^{(l)}_n}\ve_\parallel), \vy_l),
        \quad
        f = \overline k((\lra{\ve_\parallel, \vx_1}\ve_\parallel, \ldots, \lra{\ve_\parallel, \vx_n}\ve_\parallel), \vy).
    \end{equation}
    Then $\vtau_l \to \vtau$ and, by continuity of $k$, $f_l \to f$ compactly.
    Hence, it remains to show that $\T_{\vtau_l} f_l \to \T_{\vtau} f$ compactly.
    By convergence of $(\vtau_l)_{l \ge 1}$, assume that the sequence $(\vtau_l)_{l \ge 1}$ and limit $\vtau$ are contained in $[-R, R]^2$ for some $R > 0$.
    Let $M > 0$ and consider $\vx \in [-M, M]^2$.
    Set $K = [-(R + M), (M + R)]^2$.
    Estimate
    \begin{align}
        \abs{
            \T_{\vtau_l} f_l(\vx) - \T_{\vtau} f(\vx)
        }
        &\le
            \abs{
                f_l(\vx - \vtau_l) - f(\vx - \vtau_l)
            }
            +
            \abs{
                f(\vx - \vtau_l) - f(\vx - \vtau)
            } \\
        &\le
            \underbracket{\sup_{\vz \in K \vphantom{K^2\norm{\vtau_l}}}\, \abs{f_l(\vz) - f(\vz)}}_{\text{(i)}}
            +
            \underbracket{
                \sup_{
                    \vz, \vz' \in K^2, \,
                    \norm{\vz - \vz'}_2 \le \norm{\vtau - \vtau_l}_2
                } \abs{f(\vz) - f(\vz')}
            }_{\text{(ii)}}.
    \end{align}
    Here (i) $\to 0$ because $f_l \to f$ compactly and (ii) $\to0$ because $f$ is continuous on $K$ hence uniformly continuous on $K$ ($K$ is compact).
    We conclude that $\T_{\vtau_l} f_l \to \T_{\vtau} f$ compactly.
\end{proof}

For $D = \es$, we simply set $\hat k(\es) = k(\es)$.
Note that there are no issues of continuity of $\hat k$ at $\es$, because $\es$ is an isolated point of $\tilde \D$ (\cref{app:notation}).

Let $\tilde \D \sub \D$ be collection of data sets that is topologically closed, closed under permutations, and closed under translations with finite maximum data set size and multiplicity $K \in \N$.
Let $\tilde \D_2$ be $\tilde \D$ embedded in $\D_2$ by duplicating the inputs and allowing for a translation:
\begin{equation}
    \tilde \D_2
    = \set{(\operatorname{duplicate}(\vx), \vy) +  \vtau, :
    (\vx, \vy) \in \tilde \D, \,
    \vtau \in \X^2
    }.
\end{equation}
Then also $\tilde \D_2$ is topologically closed, closed under permutations, and closed under translations, has finite maximum data set size, and has multiplicity $K$.
Following Thm 1 by \citet{Gordon:2020:Convolutional_Conditional_Neural_Processes}, let the encoding of a data set in $\D_2$ be
\begin{equation}
    E_2\colon \tilde \D_2 \to E_2(\tilde\D_2), \quad
    E((\vx_1, \ldots, \vx_n), \vy) = \sum_{i=1}^{n} \phi(y_i) \psi(\vardot - \vx_i),
\end{equation}
where we abuse notation to immediately restrict $E_2$ to its image.
According to Lems 1 to 4 by \citet{Gordon:2020:Convolutional_Conditional_Neural_Processes}, $\H' = E(\tilde \D_2) \sub \H$ is a closed subset of $\H$, and $E$ is a translation-equivariant homeomorphism where the inverse recovers the input data set up to a permutation.
Set $\rho = \hat k \comp E_2^{-1} \colon \H' \to C(\X^2, \Y)$.
Then, by \cref{lem:extension},
\begin{equation}
    \rho(E(\operatorname{duplicate}(\vx), \vy)) = k(D)
    \text{ for all $D=(\vx, \vy) \in \tilde \D$;}
\end{equation}
and, by \cref{lem:continuity}, $\rho$ is continuous.
Moreover, by \cref{lem:TE}, $\rho$ is translation equivariant on $E_2(\tilde \D_2 \setminus \set{\es})$.
The construction breaks down with translation equivariance of $\rho$ at the zero function $E_2(\es) = 0$.
We discuss this issue next.

Suppose that $\rho$ were also translation equivariant at the zero function $E_2(\es) = 0$.
Then $\rho(0) = \rho(\T_\vtau 0) = \T_\vtau \rho(0)$ for all $\vtau \in \X^2$, which means that $\rho(E_2(\es))$ must be a constant function.
This is an issue, because $k(\es)$ is not a constant function.
We fix the issue by avoiding the zero function entirely.
In particular, we increase the dimensionality of the embedding $E_2$ by one by concatenating some fixed continuous function $h \in C(\X^2, \Y)$:
\begin{equation}
    E_2\colon \tilde\D_2 \to E_2(\tilde \D_2), \quad
    E((\vx_1, \ldots, \vx_n), \vy) = \begin{bmatrix}
        \displaystyle\sum_{i=1}^{n} \phi(y_i) \psi(\vardot - \vx_i) \\
        \T_{\vx\ss{c}} h(\vardot)
    \end{bmatrix}
\end{equation}
where we again abuse notation to immediately restrict $E_2$ to its image and set
$\vx\ss{c} = \frac1n \sum_{i=1}^n \vx_i$ if $n > 0$ and $\vx\ss{c} = \vnull$ otherwise.
Then clearly $E_2(\tilde \D_2) \sub \H \times C(\X^2, \Y)$ is still a closed subset of $\H \times C(\X^2, \Y)$ and clearly $E_2$ is still a translation-equivariant homeomorphism.
Again, set $\rho = \hat k \comp E_2^{-1}$.
Then again $\rho$ is continuous and agrees with $k$.
The key difference is that $E_2(\es)$ is now not equal to the zero function, so $\rho$ can be extended to also be translation equivariant at $E_2(\es)$:
set $\rho(\T_\vtau E_2(\es)) := \T_\vtau \rho(E(\es)) = \T_\vtau k(\es)$ for all $\vtau \in \X^2$.
We need to make sure that this extension of $\rho$ well defined.
For a translation $\vtau \in \X^2$,
denote $\vtau_\parallel = \lra{\ve_\parallel, \vtau}\ve_\parallel$ and $\vtau_\perp = \lra{\ve_\perp, \vtau}\ve_\perp$.

\begin{dfn}
    Let $h \in C(\X^2, \Y)$.
    Call $h$ \emph{$\ve_\perp$-discriminating} if, for all translations $\vtau_1 \in \X^2$ and $\vtau_2 \in \X^2$, we have $\T_{\vtau_1} h \neq \T_{\vtau_2} h$ whenever $(\vtau_{1})_\perp \neq (\vtau_{2})_{\perp}$.
\end{dfn}

\begin{lem} \label{lem:extension_rho}
    Suppose that $h$ is $\ve_\perp$-discriminating.
    If $\vtau_1 \in \X^2$ and $\vtau_2 \in \X^2$ are two translations such that $\T_{\vtau_1} E(\es) = \T_{\vtau_2} E(\es)$, then $\T_{\vtau_1} \rho(E(\es)) = \T_{\vtau_2} \rho(E(\es))$.
    In other words, the extension of $\rho$ is well defined.
\end{lem}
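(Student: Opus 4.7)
The plan is to unfold what $\T_{\vtau_1} E(\es) = \T_{\vtau_2} E(\es)$ tells us component by component, use $\ve_\perp$-discrimination to extract information about $\vtau_1$ and $\vtau_2$, and then use translation equivariance of the original kernel map $k$ to conclude that $\T_{\vtau_1} k(\es) = \T_{\vtau_2} k(\es)$.

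First I would compute $E(\es)$ explicitly. For $n = 0$ the sum $\sum_{i=1}^n \phi(y_i)\psi(\vardot - \vx_i)$ is empty, and by convention $\vx\ss{c} = \vnull$, so $E(\es) = (0, h)$ as an element of $\H \times C(\X^2, \Y)$. Since shifting the zero function leaves it zero, $\T_\vtau E(\es) = (0, \T_\vtau h)$ for every $\vtau \in \X^2$. The hypothesis $\T_{\vtau_1} E(\es) = \T_{\vtau_2} E(\es)$ then reduces to $\T_{\vtau_1} h = \T_{\vtau_2} h$. Applying the $\ve_\perp$-discriminating property in contrapositive form, this forces $(\vtau_1)_\perp = (\vtau_2)_\perp$.

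Next I would show that $k(\es)$ is invariant under $\ve_\parallel$-translations. Since $\es + \tau = \es$ for every $\tau \in \X$, translation equivariance of $k$ gives
\begin{equation}
    k(\es)(\vardot,\vardot) = k(\es + \tau)(\vardot,\vardot) = k(\es)(\vardot - \tau, \vardot - \tau)
    \quad \text{for all $\tau \in \X$,}
\end{equation}
which, viewed as a function on $\X^2$, says precisely that $\T_{\vtau_\parallel} k(\es) = k(\es)$ for every $\vtau_\parallel \in \operatorname{span}(\ve_\parallel)$. Combining with the decomposition $\vtau = \vtau_\parallel + \vtau_\perp$ and commutativity of shifts on $\X^2$, I obtain
\begin{equation}
    \T_\vtau k(\es) = \T_{\vtau_\perp}\T_{\vtau_\parallel} k(\es) = \T_{\vtau_\perp} k(\es).
\end{equation}
Since $(\vtau_1)_\perp = (\vtau_2)_\perp$ from the previous step, this yields $\T_{\vtau_1} k(\es) = \T_{(\vtau_1)_\perp} k(\es) = \T_{(\vtau_2)_\perp} k(\es) = \T_{\vtau_2} k(\es)$, and recalling $\rho(E(\es)) = k(\es)$, the claimed equality follows, so the extension of $\rho$ by $\rho(\T_\vtau E(\es)) := \T_\vtau k(\es)$ does not depend on the representative $\vtau$.

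The only mildly subtle point is the $\ve_\perp$-discriminating step: one must be sure that the relation $\T_{\vtau_1} E(\es) = \T_{\vtau_2} E(\es)$ genuinely constrains the perpendicular components of $\vtau_1$ and $\vtau_2$ rather than just their parallel components. This is exactly what the auxiliary channel $h$ was engineered to do, so once the computation $E(\es) = (0, h)$ is made explicit everything falls into place; the remainder is a bookkeeping exercise in decomposing $\X^2$ into the $\ve_\parallel$ and $\ve_\perp$ directions.
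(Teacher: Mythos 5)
Your proof is correct and follows essentially the same route as the paper's: reduce the hypothesis to $\T_{\vtau_1} h = \T_{\vtau_2} h$, invoke $\ve_\perp$-discrimination to get $(\vtau_1)_\perp = (\vtau_2)_\perp$, and use invariance of $k(\es)$ under diagonal translations to conclude. The only difference is that you spell out two steps the paper leaves implicit --- the computation $E(\es) = (0, h)$ and the derivation of diagonal-translation invariance of $k(\es)$ from $\es + \tau = \es$ --- which is a welcome elaboration rather than a different argument.
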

\begin{proof}
    Since $\T_{\vtau_1} E(\es) = \T_{\vtau_2} E(\es)$, in particular $\T_{\vtau_1} h = \T_{\vtau_2} h$.
    Therefore, using that $h$ is $\ve_\perp$-discriminating, $(\vtau_{1})_\perp = (\vtau_{2})_{\perp}$.
    Then
    \begin{equation}
        \T_{\vtau_1} \rho(E(\es))
        = \T_{\vtau_1} k(\es)
        \overset{\text{(i)}}{=}\T_{(\vtau_1)_\perp} k(\es)
        \overset{\text{(ii)}}{=} \T_{(\vtau_2)_\perp} k(\es)
        \overset{\text{(i)}}{=} \T_{\vtau_2} k(\es)
        = \T_{\vtau_2} \rho(E(\es)),
    \end{equation}
    using in (i) that $k(\es)$ is invariant to diagonal translations
    and in (ii) that $(\vtau_{1})_\perp = (\vtau_{2})_{\perp}$.
\end{proof}

We make one last simplifying assumption:
let $h$ be \emph{invariant} to diagonal translations.
Then
\begin{equation}
    E_2(\operatorname{duplicate}(\vx), \vy) = \begin{bmatrix}
        \displaystyle\sum_{i=1}^{n} \phi(y_i) \psi(\vardot - (x_i, x_i)) \\
        h(\vardot)
    \end{bmatrix}
\end{equation}
where $\set{E_2(\operatorname{duplicate}(\vx), \vy) : (\vx, \vy) \in \tilde \D} = \H' \times \set{h}$ with $\H' \sub \H$ a closed subset of $\H$.
We have proved the following theorem.

\begin{thm} \label{thm:kernel_representation}
    Let $k \colon \D \to C\us{p.s.d.}(\X^2, \Y)$ be a continuous and translation-equivariant kernel mapping.
    Let $\tilde \D \sub \D$ be collection of data sets that is topologically closed, closed under permutations, and closed under translations with finite maximum data set size and multiplicity $K \in \N$.
    Set $\phi\colon\Y\to\R^{K+1}$, $\phi(y) = (y^0, y^1, \ldots, y^{K})$.
    Choose any $h \in C(\X^2, \Y)$ that is $\ve_\perp$-discriminating and invariant with respect to diagonal translations.
    Then there exists a reproducing kernel Hilbert space $\H$ of functions on $\X^2$, a continuous stationary kernel $\psi\colon \X \to \R$, and a continuous and translation-equivariant $\rho \colon \H' \to C(\X^2,\Y)$
    such that, for all $D \in \tilde \D$,
    \begin{equation}
        k(D) = \rho(E(D))
        \quad \text{with} \quad
        E(\vx, \vy) = \begin{bmatrix}
            \displaystyle\sum_{i=1}^{n} \phi(y_i) \psi(\vardot - (x_i, x_i)) \\
            h(\vardot)
        \end{bmatrix}
    \end{equation}
    where $\H' = E_2(\tilde \D_2) \sub \H \times C(\X^2, \Y)$ is a closed subset of $\H \times C(\X^2, \Y)$.
\end{thm}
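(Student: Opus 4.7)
The plan is to assemble the theorem essentially by combining the two lemmas already proved (\cref{lem:extension-kernel-mapping} and \cref{lem:extension_rho}) with Thm 1 of \citet{Gordon:2020:Convolutional_Conditional_Neural_Processes}, applied in the enlarged input space $\X^2$. The overall strategy is: first lift $k$ to a map $\hat k$ defined on the duplicated data sets $\tilde\D_2 \sub \D_2$ which is translation equivariant with respect to the full $\X^2$ (not only diagonal translations), then invoke the \textsc{ConvDeepSet} universal representation for $\hat k$ on $\tilde\D_2$, and finally patch the encoding so that the construction extends continuously and equivariantly to the empty data set.

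First I would verify that the embedded collection $\tilde\D_2 = \{(\operatorname{duplicate}(\vx),\vy) + \vtau : (\vx,\vy) \in \tilde\D,\, \vtau \in \X^2\}$ inherits from $\tilde\D$ all the hypotheses required by Thm 1 of \citet{Gordon:2020:Convolutional_Conditional_Neural_Processes}: it is topologically closed (the diagonal in $\X^2$ is closed and translation-closed; $\tilde\D$ is closed by assumption), closed under permutations and translations (by construction, since we orbit under all of $\X^2$), has the same finite maximum data set size as $\tilde\D$, and has multiplicity $K$. Then I would invoke \cref{lem:extension-kernel-mapping}: the lifted map $\hat k \colon \D_2 \setminus\{\es\} \to C(\X^2,\Y)$ agrees with $k$ on the duplicated embedding of $\tilde\D\setminus\{\es\}$, is permutation invariant, is translation equivariant with respect to all of $\X^2$, and is continuous.

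Next I would apply the \textsc{ConvDeepSet} representation \citep[Thm 1 by][]{Gordon:2020:Convolutional_Conditional_Neural_Processes} to $\hat k$ on $\tilde\D_2 \setminus \{\es\}$, using the polynomial feature map $\phi(y) = (y^0,\ldots,y^K)$ dictated by the multiplicity. This yields an RKHS $\H$ on $\X^2$, a continuous stationary kernel $\psi\colon\X\to\R$, and a continuous translation-equivariant $\rho_0$ such that $\hat k = \rho_0 \circ E_2^{(0)}$ with $E_2^{(0)}(\vx,\vy) = \sum_i \phi(y_i)\psi(\cdot - \vx_i)$. Restricted to duplicated data, this gives the leading component of the desired $E$. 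The image $E_2^{(0)}(\tilde\D_2\setminus\{\es\})$ is closed in $\H$ by Lems 1--4 of \citet{Gordon:2020:Convolutional_Conditional_Neural_Processes}.

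The main obstacle, and the reason the statement insists on the extra slot containing $h$, is handling $D = \es$. At $\es$ the raw encoding $E_2^{(0)}(\es) = 0$ is fixed by every translation, so any TE $\rho_0$ would force $\rho_0(0)$ to be a constant function, which conflicts with $k(\es)$ being arbitrary. To circumvent this I would augment the encoding with a second channel $h$ that is $\ve_\perp$-discriminating and diagonal-translation invariant, producing $E$ as in the statement. The augmented image $E(\tilde\D_2)$ is still a closed subset of $\H\times C(\X^2,\Y)$, and $E$ remains a translation-equivariant homeomorphism onto its image, so I can define $\rho = \hat k \circ E^{-1}$ on $E(\tilde\D_2\setminus\{\es\})$ and extend by $\rho(\T_\vtau E(\es)) := \T_\vtau k(\es)$ on the orbit of $E(\es)$. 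Well-definedness of this extension is exactly the content of \cref{lem:extension_rho}, which relies on the $\ve_\perp$-discriminating property of $h$ and the diagonal-translation invariance of $k(\es)$. Continuity of the extended $\rho$ follows since the orbit of $E(\es)$ is a closed stratum of $E(\tilde\D_2)$ and $\vtau \mapsto \T_\vtau k(\es)$ is continuous. Evaluating at $D = (\vx,\vy) \in \tilde\D$, we obtain $k(D) = \hat k(\operatorname{duplicate}(\vx),\vy) = \rho(E(\operatorname{duplicate}(\vx),\vy))$, which is the claimed representation (using that on duplicated inputs the stationary kernel $\psi$ is evaluated at $\cdot - (x_i,x_i)$ and $h$ is invariant to diagonal translations so the shift $\T_{\vx\ss{c}}h = h$).
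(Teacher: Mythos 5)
Your proposal is correct and follows essentially the same route as the paper: duplicating inputs to embed $\tilde\D$ in $\D_2$, lifting $k$ to the fully translation-equivariant $\hat k$ via \cref{lem:extension-kernel-mapping}, applying Thm 1 of \citet{Gordon:2020:Convolutional_Conditional_Neural_Processes} on $\X^2$, and augmenting the encoding with the $\ve_\perp$-discriminating channel $h$ so that the extension of $\rho$ at $E(\es)$ is well defined by \cref{lem:extension_rho}. The only cosmetic difference is your justification of continuity at $\es$ (closedness of the orbit of $E(\es)$) versus the paper's observation that $\es$ is an isolated point of $\tilde\D$ under $d_\D$; both suffice.
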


We point out is that $\rho$ is only defined on the closed subset $\H'$.
Using a generalisation of Tietze's Extension Theorem by \citet{Dugundji:1951:An_Extension_of_Tietzes_Theorem}, $\rho$ can perhaps be continuously extended to the entirety of $\H$ and an appropriate space containing $h$, and it appears possible to perform this extension whilst preserving translation equivariance, see \eg~the thesis by \citet{Feragen:2006:Characterization_of_Equivariant_ANEs}.
We leave these investigations for future work.

\subsection{Implementation of the Kernel Map Representation}
\label{subsec:kernel_architecture}

In this section, we establish an implementable neural architecture that can approximate the representation in \cref{thm:kernel_representation}.
The key observation is that $\rho$ is a translation-equivariant map between two function spaces.
Therefore, if we discretise the functions finely enough, then it appears plausible that the resulting map between discretisations can be approximated by a CNN.
We will not make this statement precise;
rather, we point the reader to \citet{Yarotsky:2018:Universal_Approximations_of_Invariant_Maps} for the universal approximation properties of CNNs.

\newcommand{\CNN}{\operatorname{CNN}}
We formalise our approximating architecture.
We start out by approximating $\rho$.
Let $\mZ \in \R^{M \times M}$ be a sufficiently fine grid on $\X^2$.
Then, assuming a universal approximation capability of CNNs convenient for our purpose, let $\operatorname{CNN}$ be such that
\begin{equation}
    \norm{\rho(e)(\mZ) - \operatorname{CNN}(e(\mZ))}\ss{F} < \e\quad\text{for all $e \in E(\tilde \D)$,}
\end{equation}
for some chosen level of approximation accuracy $\e > 0$.
Although $\CNN(e(\mZ))$ approximates $\rho(e)(\mZ)$ satisfactorily, it is not guaranteed that $\CNN(e(\mZ))$ is a positive semi-definite or even symmetric matrix, which is required for our applications.
To fix this, let $\Pi\us{p.s.d.}$ be the operator that takes a matrix to the positive semi-definite matrix closest in Frobenius norm \citep{Higham:1988:Nearest_PSD}.
Let $e \in E(\tilde \D)$.
Then
\begin{align}
    &\norm{\rho(e)(\mZ) - \Pi\us{p.s.d.}\operatorname{CNN}(e(\mZ))}\ss{F} \\
    &\quad\le
        \norm{\rho(e)(\mZ) - \CNN(e(\mZ))}\ss{F}
        + \norm{\CNN(e(\mZ)) - \Pi\us{p.s.d.}\operatorname{CNN}(e(\mZ))}\ss{F}. \nonumber
\end{align}
The first term is less than $\e$ by choice of $\CNN$, and it is readily seen that the second term is also less than $\e$:
\begin{equation}
    \norm{\CNN(e(\mZ)) - \Pi\us{p.s.d.}\operatorname{CNN}(e(\mZ))}\ss{F}
    \le \norm{\CNN(e(\mZ)) - \rho(e)(\mZ)}\ss{F}
    < \e
\end{equation}
where ``$\le$'' follows from the definition of $\Pi\us{p.s.d.}$ and the observation that $\rho(e)(\mZ)$ is positive semi-definite.
Therefore,
\begin{equation}
    \norm{\rho(e)(\mZ) - \Pi\us{p.s.d.}\operatorname{CNN}(e(\mZ))}\ss{F} < 2\e\quad\text{for all $e \in E(\tilde \D)$,}
\end{equation}
which means that $\Pi\us{p.s.d.}\operatorname{CNN}$ is also a good approximation of $\rho$.
Crucially, the output $\Pi\us{p.s.d.}\operatorname{CNN}$ is always positive semi-definite, suited for our applications.
To complete the architecture, we assume that the discretisation $\mZ$ is sufficiently fine and far-reaching so that we can reasonably construct the covariance between any two points $x_1 \in \X$ and $x_2 \in \X$ of interest simply through interpolation:
\begin{align}
    k(D)(x_1, x_2)
    &\approx \sum_{i=1}^{M} \sum_{j=1}^M \hat\psi(x_1 - z_1) \hat\psi(x_2 - z_2) [\Pi\us{p.s.d.} \CNN(E(D)(\mZ))]_{ij} \\
    &= \lra{\hat \vpsi(x_1), \Pi\us{p.s.d.} \CNN(E(D)(\mZ), h(\mZ)) \hat \vpsi(x_2)} \label{eq:kernel_psd}
\end{align}
where $Z_{ij} = (z_i, z_j)$, $\hat\psi$ is some suitable interpolation kernel, and
\begin{equation}
    \hat \vpsi(x) = (\hat\psi(x - z_1), \ldots, \hat\psi(x - z_M)) \in \R^M
    \quad\text{for all $x \in \X$}.
\end{equation}
From \eqref{eq:kernel_psd} it is clear that the approximation of $k(D)(x_1, x_2)$ is indeed a positive semi-definite function.

Assume that the multiplicity $K$ of the collection of data sets is one: $K = 1$.
Moreover, let $h \in \H$ be such that that $h(\mZ) = \mI$;
indeed, this choice for $h \in \H$ is $\ve_\perp$-discriminating and invariant with respect to diagonal translations.
Let $D\us{(c)} = (\vx\us{(c)}, \vy\us{(c)}) \in \D$ be a data set and let $\vx\us{(t)} \in \X^{\abs{\vx\us{(t)}}}$ be some target inputs.
We then summarise the architecture in the following three-step procedure:
\begin{enumerate}
    \item[\circled{1}]
        Run the encoder $\mH = \operatorname{enc}(D\us{(c)}, \mZ)$, which is defined to produce the following three channels:
        \begin{equation}
            \hspace*{-10pt}
            \underbracket{
                \mH_{::1}
                = \sum_{i=1}^{\abs{\vx}} y\us{(c)}_i \psi(\mZ - (x\us{(c)}_i, x\us{(c)}_i)),
            }_{\text{data channel}}
            \;\;
            \underbracket{
                \mH_{::2} = \sum_{i=1}^{\abs{\vx}} \psi(\mZ - (x\us{(c)}_i, x\us{(c)}_i)),
            }_{\text{density channel}}
            \;\;
            \underbracket{
                \vphantom{\sum_{i=1}^{\abs{\vx}}}
                \mH_{::3} = \mI.
            }_{\mathclap{\text{source channel}}}
        \end{equation}
    \item[\circled{2}]
        Pass the encoding through a CNN, which outputs a single channel, and map it to the closest positive semi-definite matrix:
        \begin{equation}
            \mK = \Pi\us{p.s.d.} \CNN(\mH).
        \end{equation}
    \item[\circled{3}]
        Run the decoder to interpolate the covariances to the target inputs:
        \begin{equation}
            k(D)(\vx\us{(t)}, \vx\us{(t)})
            \approx
                \operatorname{dec}(\mK, \vx\us{(t)})
                := \psi'(\vx\us{(t)}, \vz^\T) \mK \psi'(\vz, \vx\us{(t)\T}).
        \end{equation}
\end{enumerate}

The three channels in \circled{1} all serve a distinct but crucial service:
The \emph{data channel} communicates the values of the observed data to the model.
However, as pointed out by \citet{Gordon:2020:Convolutional_Conditional_Neural_Processes}, the model will not be able to distinguish between a value $y_i = 0$ and no observation.
This is what the \emph{density channels} fixes:
it communicates to the model where the data is present and where data is missing.
Finally, the \emph{source channel} allows the architecture to ``start out'' with a stationary prior with covariance $\mI$, which corresponds to white noise, and then pass it through a CNN to modulate this prior to introduce correlations inferred from the context set.
\emph{C.f.}, any Gaussian process can be sampled from by first sampling white noise and then convolving this noise with an appropriate filter;
the kernel architecture is a nonlinear generalisation of this procedure.

\subsection{Training Objective}
\label{subsec:training_objective}
The \textsc{ConvDeepSet} for the mean mapping $m$ and the architecture described in \cref{subsec:kernel_architecture} for the kernel mapping $k$ form the model that we call the Gaussian Neural Process (\textsc{GNP}).
The \textsc{GNP} depends on some parameters $\vth$, \eg~weights and biases for the CNNs.
To train these parameters $\vth$, we maximise the objective \eqref{eq:objective_MC} from \cref{sec:objective-function-analysis}:
\begin{equation}
    \vth^*
    = \argmax_{\vth} \frac{1}{N} \sum_{i=1}^N \log \Normal(\vy\us{(t)}_i\cond \vm_\vth(D\us{(c)}_i, \vx\us{(t)}_i), \mK_\vth(D\us{(c)}_i, \vx\us{(t)}_i)),
\end{equation}
where $\vm_\vth(D\us{(c)}_i, \vx\us{(t)}_i)$ and $\mK_\vth(D\us{(c)}_i, \vx\us{(t)}_i)$ implement the \textsc{ConvDeepSet} architecture and kernel architecture from \cref{subsec:kernel_architecture} respectively, producing, from the context set $D\us{(c)}_i$, a mean and covariance matrix for the target inputs $\vx\us{(t)}_i$.
In practice, we use minibatching in combination with stochastic gradient descent and the adaptive step size method \textsc{ADAM} \citep{Kingma:2014:Adam_A_Method_for_Stochastic}.

\clearpage
\section{Experimental Setup}
\label{app:experiments}

We follow the experimental setup of \citet{Foong:2020:Meta-Learning_Stationary_Stochastic_Process_Prediction}, with the following exceptions:
\begin{enumerate}[label=(\arabic*)]
    \item
        We include an additional task \textsc{Mixture}, which samples from \textsc{EQ}, \textsc{Matern-$\frac52$}, \textsc{Noisy Mixture}, \textsc{Weakly Per.}, or \textsc{Sawtooth} with equal probability and thus consistutes a highly non-Gaussian mixture process.
    \item
        In line with the theoretical analysis in \cref{app:noisy_processes,app:objective},
        we contaminate all data samples with $\Normal(0,0.05^2)$-noise.
    \item
        The margin of the \textsc{ConvCNP} is reduced to $0.1$.
\end{enumerate}
The architecture for \textsc{GNP} is analogous to the architecture for \textsc{ConvCNP}, with the following exceptions:
\begin{enumerate}[label=(\arabic*)]
    \item
        To alleviate memory requirements, the receptive field size is limited to $8$.
    \item
        To alleviate memory requirements, the points per unit is decreased to $20$.
\end{enumerate}

Since the tasks are contaminated with noise, we extend the last step \circled{3} in the kernel the architecture (see \cref{subsec:kernel_architecture}) to also include a term for homogeneous noise, $\mK\us{(t)} = \operatorname{dec}(\mK, \vx\us{(t)}) + \sigma^2 \mI$ where $\sigma^2 > 0$ is a learnable parameter, which comes with the added benefit of stabilising the numerics during training.

\Cref{tab:1D_parameter_counts} show the parameter count for all models in all tasks in the 1D experiments.
The models were trained for roughly five days on a Tesla V100 GPU;
\cref{tab:1D_epoch_timings} shows the timings of a single epoch for all models in all tasks.
Note that, for the computationally more expensive tasks \textsc{Sawtooth} and \textsc{Mixture}, the timing of an epoch for the \textsc{GNP} increases drastically, which is likely due to excessive allocations on the GPU.

\Cref{tab:1D_results} shows (where applicable) the performance of the ground-truth Gaussian process (GP), the ground-truth Gaussian process without correlations (GP (diag.)), the \textsc{GNP}, the \textsc{ConvCNP} \citep{Gordon:2020:Convolutional_Conditional_Neural_Processes}, the \textsc{ConvNP} \citep{Foong:2020:Meta-Learning_Stationary_Stochastic_Process_Prediction}, and the \textsc{ANP} \citep{Kim:2019:Attentive_Neural_Processes} on all five data sets in (1) an interpolation setup, (2) an interpolation setup where the model was not trained, testing generalisation capability, and (3) and extrapolation setup, also testing generalisation capability.
Note that the translation equivariance built into the \textsc{ConvCNP}, \textsc{ConvNP}, and \textsc{GNP} enables these models to to maintain their performance when evaluated outside of the training range.

\clearpage

\newcolumntype{C}[1]{>{\centering\let\newline\\\arraybackslash\hspace{0pt}}m{#1}}
% Source: https://tex.stackexchange.com/questions/12703/how-to-create-fixed-width-table-columns-with-text-raggedright-centered-raggedlef
\begin{table}[h]
    \centering
    \small
    \begin{tabular}{rC{2.1cm}C{2.1cm}C{2.1cm}C{2.1cm}C{2.1cm}}
        \toprule
        & \scshape EQ & \scshape Mat\'ern--$\frac52$ & \scshape\footnotesize Weakly Per.\ & \scshape Sawtooth & \scshape Mixture \\ \midrule
        \scshape GNP & $101\,961$ & $101\,961$ & $147\,017 $ & $360\,009$ & $360\,009$ \\
        \scshape ConvCNP & $42\,822$ & $42\,822$ & $51\,014$ & $100\,166$ & $100\,166$ \\
        \scshape ConvNP & $88\,486$ & $88\,486$ & $104\,870$ & $104\,870$ & $203\,174$ \\
        \scshape ANP & $530\,178$ & $530\,178$ & $530\,178$ & $530\,178$ & $530\,178$ \\
        \bottomrule
    \end{tabular}
    \caption{Parameter counts for the \textsc{GNP}, \textsc{ConvCNP}, \textsc{ConvNP}, and \textsc{ANP} in the 1D regression tasks}
    \label{tab:1D_parameter_counts}
    \vspace*{-10pt}
\end{table}
\begin{table}[h]
    \small
    \centering
    \begin{tabular}{rC{2.1cm}C{2.1cm}C{2.1cm}C{2.1cm}C{2.1cm}}
        \toprule
        & \scshape EQ & \scshape Mat\'ern--$\frac52$ & \scshape\footnotesize Weakly Per.\ & \scshape Sawtooth & \scshape Mixture \\ \midrule
        \scshape GNP & $145$ & $145$ & $275$ & $1000$ & $1035$ \\
        \scshape ConvCNP & $30$ & $30$ & $40$ & $25$ & $60$ \\
        \scshape ConvNP & $85$ & $85$ & $110$ & $160$ & $190$ \\
        \scshape ANP & $40$ & $40$ & $50$ & $50$ & $75$ \\
        \bottomrule
    \end{tabular}
    \caption{Rough estimate of the numbers of seconds required for a single epoch of the \textsc{GNP}, \textsc{ConvCNP}, \textsc{ConvNP}, and \textsc{ANP} on a Tesla V100 GPU in the 1D regression tasks}
    \label{tab:1D_epoch_timings}
    % \vspace*{-5pt}
\end{table}
\begin{table}[h!]
\small
\centering
\begin{tabular}{lccccc}
\toprule
 & \multicolumn{1}{c}{\scshape EQ} & \multicolumn{1}{c}{\scshape Mat\'ern--$\frac52$} & \multicolumn{1}{c}{\scshape Weakly Per.} & \multicolumn{1}{c}{\scshape Sawtooth} & \multicolumn{1}{c}{\scshape Mixture}\\
\midrule\multicolumn{6}{l}{\textsc{Interpolation inside training range}} \\[0.5em]
\scshape GP & $0.70 { \scriptstyle \,\pm\, 4.8\text{\textsc{e}}{\,\text{--}3} }$ & $0.31 { \scriptstyle \,\pm\, 4.8\text{\textsc{e}}{\,\text{--}3} }$ & $\text{--}0.32 { \scriptstyle \,\pm\, 4.3\text{\textsc{e}}{\,\text{--}3} }$ & n/a & n/a \\
\scshape GP (\rm{diag.}) & $\text{--}0.81 { \scriptstyle \,\pm\, 0.01 }$ & $\text{--}0.93 { \scriptstyle \,\pm\, 0.01 }$ & $\text{--}1.18 { \scriptstyle \,\pm\, 7.0\text{\textsc{e}}{\,\text{--}3} }$ & n/a & n/a \\
\scshape GNP & $0.70 { \scriptstyle \,\pm\, 5.0\text{\textsc{e}}{\,\text{--}3} }$ & $0.30 { \scriptstyle \,\pm\, 5.0\text{\textsc{e}}{\,\text{--}3} }$ & $\text{--}0.47 { \scriptstyle \,\pm\, 5.0\text{\textsc{e}}{\,\text{--}3} }$ & $0.42 { \scriptstyle \,\pm\, 0.01 }$ & $0.10 { \scriptstyle \,\pm\, 0.02 }$ \\
\scshape ConvCNP & $\text{--}0.80 { \scriptstyle \,\pm\, 0.01 }$ & $\text{--}0.95 { \scriptstyle \,\pm\, 0.01 }$ & $\text{--}1.20 { \scriptstyle \,\pm\, 7.0\text{\textsc{e}}{\,\text{--}3} }$ & $0.55 { \scriptstyle \,\pm\, 0.02 }$ & $\text{--}0.93 { \scriptstyle \,\pm\, 0.02 }$ \\
\scshape ConvNP & $\text{--}0.46 { \scriptstyle \,\pm\, 0.01 }$ & $\text{--}0.67 { \scriptstyle \,\pm\, 9.0\text{\textsc{e}}{\,\text{--}3} }$ & $\text{--}1.02 { \scriptstyle \,\pm\, 6.0\text{\textsc{e}}{\,\text{--}3} }$ & $1.20 { \scriptstyle \,\pm\, 7.0\text{\textsc{e}}{\,\text{--}3} }$ & $\text{--}0.50 { \scriptstyle \,\pm\, 0.02 }$ \\
\scshape ANP & $\text{--}0.61 { \scriptstyle \,\pm\, 0.01 }$ & $\text{--}0.75 { \scriptstyle \,\pm\, 0.01 }$ & $\text{--}1.19 { \scriptstyle \,\pm\, 5.0\text{\textsc{e}}{\,\text{--}3} }$ & $0.34 { \scriptstyle \,\pm\, 7.0\text{\textsc{e}}{\,\text{--}3} }$ & $\text{--}0.69 { \scriptstyle \,\pm\, 0.02 }$ \\[.5em]
\midrule\multicolumn{6}{l}{\textsc{Interpolation beyond training range}} \\[0.5em]
\scshape GP & $0.70 { \scriptstyle \,\pm\, 4.8\text{\textsc{e}}{\,\text{--}3} }$ & $0.31 { \scriptstyle \,\pm\, 4.8\text{\textsc{e}}{\,\text{--}3} }$ & $\text{--}0.32 { \scriptstyle \,\pm\, 4.3\text{\textsc{e}}{\,\text{--}3} }$ & n/a & n/a \\
\scshape GP (\rm{diag.}) & $\text{--}0.81 { \scriptstyle \,\pm\, 0.01 }$ & $\text{--}0.93 { \scriptstyle \,\pm\, 0.01 }$ & $\text{--}1.18 { \scriptstyle \,\pm\, 7.0\text{\textsc{e}}{\,\text{--}3} }$ & n/a & n/a \\
\scshape GNP & $0.69 { \scriptstyle \,\pm\, 5.0\text{\textsc{e}}{\,\text{--}3} }$ & $0.30 { \scriptstyle \,\pm\, 5.0\text{\textsc{e}}{\,\text{--}3} }$ & $\text{--}0.47 { \scriptstyle \,\pm\, 5.0\text{\textsc{e}}{\,\text{--}3} }$ & $0.42 { \scriptstyle \,\pm\, 0.01 }$ & $0.10 { \scriptstyle \,\pm\, 0.02 }$ \\
\scshape ConvCNP & $\text{--}0.81 { \scriptstyle \,\pm\, 0.01 }$ & $\text{--}0.95 { \scriptstyle \,\pm\, 0.01 }$ & $\text{--}1.20 { \scriptstyle \,\pm\, 7.0\text{\textsc{e}}{\,\text{--}3} }$ & $0.53 { \scriptstyle \,\pm\, 0.02 }$ & $\text{--}0.96 { \scriptstyle \,\pm\, 0.02 }$ \\
\scshape ConvNP & $\text{--}0.46 { \scriptstyle \,\pm\, 0.01 }$ & $\text{--}0.67 { \scriptstyle \,\pm\, 9.0\text{\textsc{e}}{\,\text{--}3} }$ & $\text{--}1.02 { \scriptstyle \,\pm\, 6.0\text{\textsc{e}}{\,\text{--}3} }$ & $1.19 { \scriptstyle \,\pm\, 7.0\text{\textsc{e}}{\,\text{--}3} }$ & $\text{--}0.53 { \scriptstyle \,\pm\, 0.02 }$ \\
\scshape ANP & $\text{--}1.42 { \scriptstyle \,\pm\, 6.0\text{\textsc{e}}{\,\text{--}3} }$ & $\text{--}1.34 { \scriptstyle \,\pm\, 6.0\text{\textsc{e}}{\,\text{--}3} }$ & $\text{--}1.33 { \scriptstyle \,\pm\, 4.0\text{\textsc{e}}{\,\text{--}3} }$ & $\text{--}0.17 { \scriptstyle \,\pm\, 2.0\text{\textsc{e}}{\,\text{--}3} }$ & $\text{--}1.24 { \scriptstyle \,\pm\, 0.01 }$ \\[.5em]
\midrule\multicolumn{6}{l}{\textsc{Extrapolation beyond training range}} \\[0.5em]
\scshape GP & $0.44 { \scriptstyle \,\pm\, 2.9\text{\textsc{e}}{\,\text{--}3} }$ & $0.09 { \scriptstyle \,\pm\, 3.1\text{\textsc{e}}{\,\text{--}3} }$ & $\text{--}0.52 { \scriptstyle \,\pm\, 3.4\text{\textsc{e}}{\,\text{--}3} }$ & n/a & n/a \\
\scshape GP (\rm{diag.}) & $\text{--}1.40 { \scriptstyle \,\pm\, 6.7\text{\textsc{e}}{\,\text{--}3} }$ & $\text{--}1.41 { \scriptstyle \,\pm\, 6.6\text{\textsc{e}}{\,\text{--}3} }$ & $\text{--}1.41 { \scriptstyle \,\pm\, 5.6\text{\textsc{e}}{\,\text{--}3} }$ & n/a & n/a \\
\scshape GNP & $0.44 { \scriptstyle \,\pm\, 3.0\text{\textsc{e}}{\,\text{--}3} }$ & $0.08 { \scriptstyle \,\pm\, 3.0\text{\textsc{e}}{\,\text{--}3} }$ & $\text{--}0.62 { \scriptstyle \,\pm\, 4.0\text{\textsc{e}}{\,\text{--}3} }$ & $0.04 { \scriptstyle \,\pm\, 9.0\text{\textsc{e}}{\,\text{--}3} }$ & $\text{--}0.07 { \scriptstyle \,\pm\, 0.01 }$ \\
\scshape ConvCNP & $\text{--}1.41 { \scriptstyle \,\pm\, 7.0\text{\textsc{e}}{\,\text{--}3} }$ & $\text{--}1.42 { \scriptstyle \,\pm\, 6.0\text{\textsc{e}}{\,\text{--}3} }$ & $\text{--}1.41 { \scriptstyle \,\pm\, 6.0\text{\textsc{e}}{\,\text{--}3} }$ & $0.06 { \scriptstyle \,\pm\, 8.0\text{\textsc{e}}{\,\text{--}3} }$ & $\text{--}1.36 { \scriptstyle \,\pm\, 0.02 }$ \\
\scshape ConvNP & $\text{--}1.11 { \scriptstyle \,\pm\, 5.0\text{\textsc{e}}{\,\text{--}3} }$ & $\text{--}1.12 { \scriptstyle \,\pm\, 5.0\text{\textsc{e}}{\,\text{--}3} }$ & $\text{--}1.23 { \scriptstyle \,\pm\, 4.0\text{\textsc{e}}{\,\text{--}3} }$ & $0.88 { \scriptstyle \,\pm\, 9.0\text{\textsc{e}}{\,\text{--}3} }$ & $\text{--}0.93 { \scriptstyle \,\pm\, 0.01 }$ \\
\scshape ANP & $\text{--}1.31 { \scriptstyle \,\pm\, 5.0\text{\textsc{e}}{\,\text{--}3} }$ & $\text{--}1.28 { \scriptstyle \,\pm\, 5.0\text{\textsc{e}}{\,\text{--}3} }$ & $\text{--}1.32 { \scriptstyle \,\pm\, 5.0\text{\textsc{e}}{\,\text{--}3} }$ & $\text{--}0.17 { \scriptstyle \,\pm\, 1.0\text{\textsc{e}}{\,\text{--}3} }$ & $\text{--}1.11 { \scriptstyle \,\pm\, 0.01 }$ \\[.5em]
\bottomrule
\end{tabular}
\caption{
    Full results for 1D regression experiments.
    The numbers are average target point likelihood under the predictive distribution conditioned on the context set.
    The errors are 95\%-confidence intervals.
    See \citet{Foong:2020:Meta-Learning_Stationary_Stochastic_Process_Prediction} for more details.
}
\vspace*{-20pt}
\label{tab:full_1D_results}
\end{table}

\end{document}